\newtheorem{definition}{Definition}
\newtheorem{lemma}{Lemma}
\newtheorem{theorem}{Theorem}
\newtheorem{corollary}{Corollary}
\newtheorem{assumption}{Assumption}
\newcommand{\A}{\mathcal{A}}
\newcommand{\F}{\mathcal{F}}
\renewcommand{\H}{\mathcal{H}}
\newcommand{\M}{\mathcal{M}}
\newcommand{\R}{\mathcal{R}}
\newcommand{\X}{\mathcal{X}}
\renewcommand{\P}{\mathsf{P}}
\newcommand{\eps}{\varepsilon}
\renewcommand{\vec}[1]{\mathbf{#1}}
\newcommand{\EE}[2][]{\mathbb{E}_{#1}\left[#2\right]}
\newcommand{\PP}[2][]{\mathbb{P}_{#1}\left[#2\right]}
\newcommand{\II}{\mathbb{I}}
\DeclareMathOperator*{\argmax}{\rm argmax}
\DeclareMathOperator*{\argmin}{\rm argmin}
\newcommand{\hbeta}{\hat{\beta}}
\newcommand{\hgamma}{\hat{\gamma}}
\newcommand{\hsigma}{\hat{\sigma}}
\newcommand{\rt}{r_{\mathrm{target}}}
\newcommand{\x}{[x]}
\newcommand{\ah}{a^*}
\newcommand{\betah}[1][\vec{h}]{\beta_{#1}}
\newcommand{\gammah}[1][\vec{h}]{\gamma_{#1}}
\providecommand{\set}[1]{\left\{#1\right\}}
\providecommand{\abs}[1]{\left\lvert#1\right\rvert}
\newcommand{\MDP}[1][r]{(\X,\A,\P,#1,\gamma)}
\newcommand{\ie}{\textit{i.e.,~}}
\begin{document}

\title{Multi-class Generalized Binary Search for Active Inverse Reinforcement Learning}


\author{Francisco S.\ Melo\\INESC-ID/Instituto Superior T\'ecnico\\Portugal\\  \textit{fmelo@inesc-id.pt} \and Manuel Lopes\\INRIA Bordeaux Sud-Ouest\\France\\
           \textit{manuel.lopes@inria.fr}}

\date{}

\maketitle


\begin{abstract} 

This paper addresses the problem of learning a task from demonstration. We adopt the framework of \emph{inverse reinforcement learning}, where tasks are represented in the form of a reward function. Our contribution is a novel active learning algorithm that enables the learning agent to query the expert for more informative demonstrations, thus leading to more sample-efficient learning. For this novel algorithm (Generalized Binary Search for Inverse Reinforcement Learning, or GBS-IRL), we provide a theoretical bound on sample complexity and illustrate its applicability on several different tasks. To our knowledge, GBS-IRL is the first active IRL algorithm with provable sample complexity bounds. We also discuss our method in light of other existing methods in the literature and its general applicability in multi-class classification problems. Finally, motivated by recent work on learning from demonstration in robots, we also discuss how different forms of human feedback can be integrated in a transparent manner in our learning framework.

\end{abstract}


\section{Introduction}%
\label{Sec:Introduction}




\emph{Social learning}, where an agent uses information provided by other individuals to polish or acquire anew skills, is likely to become one primary form of programming such complex intelligent systems \citep{schaal99tcs}. Paralleling the social learning ability of human infants, an artificial system can retrieve a large amount of task related information by observing and/or interacting with other agents engaged in relevant activities. For example, the behavior of an expert can bias an agent's exploration of the environment, improve its knowledge of the world, or even lead it to reproduce parts of the observed behavior \citep{melo07aisb}.


In this paper we are particularly interested in \emph{learning from demonstration}. This particular form of social learning is commonly associated with \emph{imitation} and \emph{emulation} behaviors in nature \citep{lopes09ab}. It is also possible to find numerous successful examples of robot systems that learn from demonstration \citep[see the survey works of][]{argall09ras, lopes10chap}. In the simplest form of interaction, the demonstration may consist of examples of the right action to take in different situations. 

In our approach to learning from demonstration we adopt the formalism of \emph{inverse reinforcement learning} (IRL), where the task is represented as a \emph{reward function} \citep{ng00icml}. From this representation, the agent can then construct its own policy and solve the target task. However, and unlike many systems that learn from demonstration, in this paper we propose to combine ideas from \emph{active learning} \citep{settles09tr} with IRL, in order to reduce the data requirements during learning. In fact, many agents able to learn from demonstration are designed to process batches of data, typically acquired before any actual learning takes place. Such data acquisition process fails to take advantage of any information the learner may acquire in early stages of learning to guide the acquisition of new data. Several recent works have proposed that a more \emph{interactive} learning may actually lead to improved learning performance. 


We adopt a Bayesian approach to IRL, following \cite{ramachandran07ijcai}, and allow the learning agent to actively select and query the expert for the desired behavior at the most informative situations. We contribute a theoretical analysis of our algorithm that provides a bound on the sample complexity of our learning approach and illustrate our method in several problems from the IRL literature. 


Finally, even if learning from demonstration is the main focus of our paper and an important skill for intelligent agents interacting with human users, the ability to accommodate different forms of feedback is also useful. In fact, there are situations where the user may be unable to properly demonstrate the intended behavior and, instead, prefers to describe a task in terms of a \emph{reward function}, as is customary in reinforcement learning \citep{sutton98}. As an example, suppose that the user wants the agent to learn how to navigate a complex maze. The user may experience difficulties in navigating the maze herself and may, instead, allow the agent to explore the maze and reward it for exiting the maze. 

Additionally, recent studies on the behavior of na\"{i}ve users when instructing agents (namely, robots) showed that the feedback provided by humans is often ambiguous and does not map in any obvious manner to either a reward function or a policy \citep{thomaz08aij,cakmak10icdl}. For instance, it was observed that human users tend to provide learning agents with anticipatory or guidance rewards, a situation seldom considered in reinforcement learning \citep{thomaz08aij}. This study concludes that robust agents able to successfully learn from human users should be flexible to accommodate different forms of feedback from the user.


In order to address the issues above, we discuss how other forms of expert feedback (beyond policy information) may be integrated in a seamless manner in our IRL framework, so that the learner is able to recover efficiently the target task. In particular, we show how to combine both \emph{policy and reward information} in our learning algorithm. Our approach thus provides a useful bridge between reinforcement learning (or learning by trial and error) and imitation learning (or learning from demonstration), a line of work seldom explored in the literature \citep[see, however, the works of][and discussion in Section~\ref{Subsec:Related}]{knox10aamas,knox11ijcai}.


The paper is organized as follows. In the remainder of this section, we provide an overview of related work on social learning, particularly on learning from demonstration. We also discuss relevant research in IRL and active learning, and discuss our contributions in light of existing work. Section~\ref{Sec:Background} revisits core background concepts, introducing the notation used throughout the paper. Section~\ref{Sec:ActiveIRL} introduces our active IRL algorithm, GBS-IRL, and provides a theoretical analysis of its sample complexity. Section~\ref{Sec:Experiments} illustrates the application of GBS-IRL in several problems of different complexity, providing an empirical comparison with other methods in the literature. Finally, Section~\ref{Sec:Conclusions} concludes the paper, discussing directions for future research.


\subsection{Related Work}%
\label{Subsec:Related}


There is extensive literature reporting research on intelligent agents that learn from expert advice. Many examples feature robotic agents that learn simple tasks from different forms of human feedback. Examples include the robot {\sf Leonardo} that is able to learn new tasks by observing changes induced in the world (as perceived by the robot) by a human demonstrating the target task \cite{breazeal04ijhr}. During learning, {\sf Leonardo} provides additional feedback on its current understanding of the task that the human user can then use to provide additional information. We refer the survey works of \cite{argall09ras,lopes10chap} for a comprehensive discussion on learning from demonstration.



In this paper, as already mentioned, we adopt the inverse reinforcement learning (IRL) formalism introduced in the seminal paper by \cite{ng00icml}. One appealing aspect of the IRL approach to learning from demonstration is that the learner is not just ``mimicking'' the observed actions. Instead, the learner infers the purpose behind the observed behavior and sets such purpose as its goal. IRL also enables the learner to accommodate for differences between itself and the demonstrator \citep{lopes09ab}. 

The appealing features discussed above have led several researchers to address learning from demonstration from an IRL perspective. \cite{abbeel04icml} explored inverse reinforcement learning in a context of \emph{apprenticeship learning}, where the purpose of the learning agent is to replicate the behavior of the demonstrator, but is only able to observe a sequence of states experienced during task execution. The IRL formalism allows the learner to reason about which tasks could lead the demonstrator to visit the observed states and infer how to replicate the inferred behavior. \citeauthor{syed08icml} \citep{syed08icml,syed08nips} have further explored this line of reasoning from a game-theoretic perspective, and proposed algorithms to learn from demonstration with provable guarantees on the performance of the learner.

\cite{ramachandran07ijcai} introduced \emph{Bayesian inverse reinforcement learning} (BIRL), where the IRL problem is cast as a Bayesian inference problem. Given a prior distribution over possible target tasks, the algorithm uses the demonstration by the expert as evidence to compute the posterior distribution over tasks and identify the target task. Unfortunately, the Monte-Carlo Markov chain (MCMC) algorithm used to approximate the posterior distribution is computationally expensive, as it requires extensive sampling of the space of possible rewards. To avoid such complexity, several posterior works have departed from the BIRL formulation and instead determine the task that maximizes the likelihood of the observed demonstration \citep{lopes09ecml,babes11icml}. 

The aforementioned maximum likelihood approaches of \cite{lopes09ecml} and \cite{babes11icml} take advantage of the underlying IRL problem structure and derive simple gradient-based algorithms to determine the maximum likelihood task representation. Two closely related works are the \emph{maximum entropy approach} of \cite{ziebart08aaai} and the \emph{gradient IRL approach} of \cite{neu07uai}. While the former selects the task representation that maximizes the likelihood of the observed expert behavior, under the maximum entropy distribution, the latter explores a gradient-based approach to IRL, but the where the task representation is selected so as to induce a behavior as similar as possible to the expert behavior.


Finally, \cite{ross10aistats} propose a learning algorithm that reduces imitation learning to a classification problem. The classifier prescribes the best action to take in each possible situation that the learner can encounter, and is successively improved by enriching the data-set used to train the classifier.


All above works are designed to learn from whatever data is available to them at learning time, data that is typically acquired before any actual learning takes place. Such data acquisition process fails to take advantage of the information that the learner acquires in early stages of learning to guide the acquisition of new, more informative data. \emph{Active learning} aims to reduce the data requirements of learning algorithms by actively selecting potentially informative samples, in contrast with random sampling from a predefined distribution \citep{settles09tr}. In the case of learning from demonstration, active learning can be used to reduce the number of situations that the expert/human user is required to demonstrate. Instead, the learner should proactively ask the expert to demonstrate the desired behavior at the most informative situations.

\emph{Confidence-based autonomy} (CBA), proposed by \cite{chernova09jair}, also enables a robot to learn a task from a human user by building a mapping between situations that the robot has encountered and the adequate actions. This work already incorporates a mechanism that enables the learner to ask the expert for the right action when it encounters a situation in which it is less confident about the correct behavior. The system also allows the human user to provide corrective feedback as the robot executes the learned task.%
\footnote{Related ideas are further explored in the \emph{dogged learning} architecture of \cite{grollman07icra}.}
The querying strategy in CBA can be classified both as \emph{stream-based} and as \emph{mellow} \citep[see discussions in the survey works of][]{settles09tr,dasgupta11tcs}. Stream-based, since the learner is presented with a stream of samples (in the case of CBA, samples correspond to possible situations) and only asks for the labels (\ie correct actions) of those samples it feels uncertain about. Mellow, since it does not seek highly informative samples, but queries any sample that is at all informative. 

In the IRL literature, active learning was first explored in a preliminary version of this paper \citep{lopes09ecml}. In this early version, the learner actively queries the expert for the correct action in those states where it is most uncertain about the correct behavior. Unlike CBA, this active sampling approach is \emph{aggressive} and uses \emph{membership query synthesis}. Aggressive, since it actively selects highly informative samples. And, unlike CBA, it can select (``synthesize'') queries from the whole input space. \cite{judah11icml} propose a very similar approach, the \emph{imitation query-by-committee} (IQBC) algorithm, which differs only from the previous active sampling approach in the fact that the learner is able to accommodate the notion of ``bad states'', \ie states to be avoided during task execution.

\cite{cohn11aaai} propose another closely related approach that, however, uses a different criterion to select which situations to query. EMG-AQS (Expected Myopic Gain Action Querying Strategy) queries the expert for the correct action in those states where the expected \emph{gain of information} is potentially larger. Unfortunately, as discussed by \cite{cohn11aaai}, the determination of the expected gain of information requires extensive computation, rendering EMG-AQS computationally costly. On a different line of work, \cite{ross11aistats,judah12uai} address imitation learning using a no-regret framework, and propose algorithms for direct imitation learning with provable bounds on the regret. Finally, \cite{melo10ecml} use active learning in a metric approach to learning from demonstration.

Our approach in this paper is a modified version of our original active sampling algorithm \citep{lopes09ecml}. We depart from the generalized binary search (GBS) algorithm of \cite{nowak11tit} and adapt it to the IRL setting. To this purpose, we cast IRL as a (multi-class) classification problem and extend the GBS algorithm of \cite{nowak11tit} to this multi-class setting. We analyze the sample complexity of our GBS-IRL approach, thus providing the first active IRL algorithm with provable bounds on sample complexity. Also, to the extent of our knowledge, GBS-IRL is the first aggressive active learning algorithm for non-separable, multi-class data \citep{dasgupta11tcs}.


We conclude this discussion of related work by pointing out that all above works describe systems that learn from human feedback. However, other forms of expert advice have also been explored in the agent learning literature. \cite{price99icml,price03jair} have explored how a learning agent can improve its performance by observing other similar agents, in what could be seen as ``implicit'' imitation learning. In these works, the demonstrator is, for all purposes, oblivious to the fact that its actions are being observed and learned from. Instead, the learned observes the behavior of the other agents and extracts information that may be useful for its own learning (for example, it may extract useful information about the world dynamics). 

In a more general setting, \cite{barto04adp} discuss how different forms of supervisory information can be integrated in a reinforcement learning architecture to improve learning. Finally, \cite{knox09kcap,knox10aamas} introduce the {\sc tamer} paradigm, that enables a reinforcement learning agent to use human feedback (in addition to its reinforcement signal) to guide its learning process.




\subsection{Contributions}%
\label{Subsec:Contributions}

Our contributions can be summarized as follows:
\begin{itemize}
\item A \emph{novel active IRL algorithm}, GBS-IRL, that extends generalized binary search to a multi-class setting in the context of IRL. 
\item The \emph{sample-complexity analysis of GBS-IRL}. We establish, under suitable conditions, the exponential convergence of our active learning method, as a function of the number of samples. As pointed out earlier, to our knowledge ours is the first work providing sample complexity bounds on active IRL. Several experimental results confirm the good sample performance of our approach.  
\item A general discussion on how different forms of expert information (namely action and reward information) can be integrated in our IRL setting. We illustrate the applicability of our ideas in several simple scenarios and discuss the applicability of these different sources of information in face of our empirical results.
\end{itemize}

From a broader perspective, our analysis is a non-trivial extension of the results of \cite{nowak11tit} to a multiclass setting, having applications not only on IRL but on any multiclass classification problem.


\section{Background and Notation}%
\label{Sec:Background}

This section introduces background material on Markov decision processes and the Bayesian inverse reinforcement learning formalism, upon which our contributions are developed.


\subsection{Markov Decision Processes}%
\label{Subsec:MDPs}

A \emph{Markov decision problem} (MDP) describes a sequential decision problem in which an agent must choose the sequence of actions that maximizes some reward-based optimization criterion. Formally, an MDP $\M$ is a tuple $\M=\MDP$, where $\X$ represents the state-space, $\A$ the finite action space, $\P$ represents the transition probabilities, $r$ is the reward function and $\gamma$ is a positive discount factor. $\P(y\mid x,a)$ denotes the probability of transitioning from state $x$ to state $y$ when action $a$ is taken, \ie
\begin{displaymath}
\P(y\mid x,a)=\PP{X_{t+1}=y\mid X_t=x,A_t=a},
\end{displaymath}
where each $X_t, t=1,\ldots$, is a random variable (r.v.) demoting the state of the process at time-step $t$ and $A_t$ is a r.v.\ denoting the action of the agent at time-step $t$. 

%
%

A \emph{policy} is a mapping $\pi:\X\times\A\to[0,1]$, where $\pi(x,a)$ is the probability of choosing action $a\in\A$ in state $x\in\X$. Formally,
\begin{displaymath}
\pi(x,a)=\PP{A_t=a\mid X_t=x}.
\end{displaymath}
It is possible to associate with any such policy $\pi$ a \emph{value-function},
\begin{displaymath}
V^\pi(x)=\EE[\pi]{\sum_{t=0}^\infty\gamma^tr(X_t,A_t)\mid{X}_0=x},
\end{displaymath}
where the expectation is now taken over possible trajectories of $\set{X_t}$ induced by policy $\pi$. The purpose of the agent is then to select a policy $\pi^*$ such that
\begin{displaymath}
V^{\pi^*}(x)\geq{V}^\pi(x),
\end{displaymath}
for all $x\in\X$. Any such policy is an \emph{optimal policy} for that MDP and the corresponding value function is denoted by $V^*$.

Given any policy $\pi$, the following recursion holds
\begin{displaymath}
V^\pi(x)=r_\pi(x)+\gamma\sum_{y\in\X}\P_\pi(x,y)V^\pi(y)
\end{displaymath}
where $\P_\pi(x,y)=\sum_{a\in\A}\pi(x,a)\P_a(x,y)$ and $r_\pi(x)=\sum_{a\in\A}\pi(x,a)r(x,a)$. For the particular case of the optimal policy $\pi^*$, the above recursion becomes
\begin{displaymath}
V^*(x)=\max_{a\in\A}\left[r(x,a)+\gamma\sum_{y\in\X}\P_a(x,y)V^*(y)\right].
\end{displaymath}

We also define the $Q$-function associated with a policy $\pi$ as
\begin{displaymath}
Q^\pi(x,a)=r(x,a)+\gamma\sum_{y\in\X}\P_a(x,y)V^\pi(y)
\end{displaymath}
which, in the case of the optimal policy, becomes
\begin{equation}\label{Eq:Q-function}
\begin{split}
  Q^*(x,a)
    &=r(x,a)+\gamma\sum_{y\in\X}\P_a(x,y)V^*(y)\\
    &=r(x,a)+\gamma\sum_{y\in\X}\P_a(x,y)\max_{b\in\A}Q^*(y,b).
\end{split}
\end{equation}


\subsection{Bayesian Inverse Reinforcement Learning}%
\label{Subsec:BIRL}

As seen above, an MDP describes a sequential decision making problem in which an agent must choose its actions so as to maximize the total discounted reward. In this sense, the reward function in an MDP encodes the \emph{task} of the agent.

\emph{Inverse reinforcement learning} (IRL) deals with the problem of recovering the task representation (\ie the reward function) given a demonstration of the behavior to be learned (\ie the desired policy). In this paper we adopt the formulation in \cite{ramachandran07ijcai}, where IRL is cast as a \emph{Bayesian inference problem}, in which the agent is provided with samples of the desired policy, $\pi^*$, and it must identify the target reward function, $r^*$, from a general set of possible functions $\R$. Prior to the observation of any policy sample and given any measurable set $R\subset\R$, the initial belief that $r^*\in R$ is encoded in the form of a probability density function $\rho$ defined on $\R$, \ie
\begin{displaymath}
\PP{r^*\in R}=\int_R\rho(r)dr.
\end{displaymath}
As discussed by \cite{ramachandran07ijcai,lopes09ecml}, it is generally impractical to explicitly maintain and update $\rho$. Instead, as in the aforementioned works, we work with a finite (but potentially very large) sample of $\R$ obtained according to $\rho$. We denote this sample by $\R_\rho$, and associate with each element $r_k\in\R_\rho$ a \emph{prior probability} $p_0(r_k)$ given by
\begin{displaymath}
p_0(r_k)=\frac{\rho(r_k)}{\sum_i\rho(r_i)}.
\end{displaymath}
Associated with each reward $r_k\in\R_\rho$ and each $x\in\X$, we define the \emph{set of greedy actions at $x$} with respect to $r_k$ as 
\begin{displaymath}
\A_k(x)=\set{a\in\A\mid a\in\argmax Q_k(x,a)}
\end{displaymath}
where $Q_k$ is the $Q$-function associated with the optimal policy for $r_k$, as defined in \eqref{Eq:Q-function}.
From the sets $\A_k(x), x\in\X$, we define the \emph{greedy policy} with respect to $r_k$ as the mapping $\pi_k:\X\times\A\to[0,1]$ given by
\begin{displaymath}
\pi_k(x,a)=\frac{\II_{\A_k(x)}(a)}{\abs{\A_k(x)}},
\end{displaymath}
where we write $\II_U$ to denote the indicator function for a set $U$. In other words, for each $x\in\X$, the greedy policy with respect to $r_k$ is defined as a probability distribution that is uniform in $\A_k(x)$ and zero in its complement. We assume, without loss of generality, that for any $r_i,r_j\in\R_\rho$, $\A_i(x)\neq\A_j(x)$ for at least one $x\in\X$.%
\footnote{This assumption merely ensures that there are no redundant rewards on $\R_\rho$. If two such rewards $r_i,r_j$ existed in $\R_\rho$, we could safely discard one of the two, say $r_j$, setting $p_0(r_i)\leftarrow p_0(r_i)+p_0(r_j)$.}

For any $r_k\in\R_\rho$, consider a perturbed version of $\pi_k$ where, for each $x\in\X$, action $a\in\A$ is selected with a probability
\begin{equation}\label{Eq:eGreedy}
\hat{\pi}_k(x,a)=
\begin{cases}
\beta_k(x) & \text{if $a\notin\A_k(x)$}\\
\gamma_k(x) & \text{if $a\in\A_k(x)$},
\end{cases}
\end{equation}
where, typically, $\beta_k(x)<\gamma_k(x)$.%
\footnote{Policy $\hat{\pi}_k$ assigns the same probability, $\gamma_k(x)$ to all actions that, for the particular reward $r_k$, are optimal in state $x$. Similarly, it assigns the same probability, $\beta_k(x)$, to all corresponding sub-optimal actions. This perturbed version of $\pi_k$ is convenient both for its simplicity and because it facilitates our analysis. However, other versions of perturbed policies have been considered in the IRL literature---see, for example, the works of \cite{ramachandran07ijcai,neu07uai,lopes09ecml}.}
We note that both $\pi_k$ and the uniform policy can be obtained as limits of $\hat{\pi}_k$, by setting $\beta_k(x)=0$ or $\beta_k(x)=\gamma_k(x)$, respectively. Following the Bayesian IRL paradigm, the likelihood of observing an action $a$ by the demonstrator at state $x$, given that the target task is $r_k$, is now given by
\begin{equation}\label{Eq:Likelihood}
\ell_k(x,a)=\PP{A_t=a\mid X_t=x, r^*=r_k}=\hat{\pi}_k(x,a).
\end{equation}

Given a history of $t$ (independent) observations, $\F_t=\set{(x_\tau,a_\tau),\tau=0,\ldots,t}$, the likelihood in \eqref{Eq:Likelihood} can now be used in a standard Bayesian update to compute, for every $r_k\in\R_\rho$, the posterior probability
\begin{align*}
p_t(r_k)
  &=\frac{\PP{r^*=r_k}\PP{\F_t\mid r^*=r_k}}{Z}\\
  &=\frac{p_0(r_k)\prod_{\tau=0}^t\ell_k(x_\tau,a_\tau)}{Z},
\end{align*}
where $Z$ is a normalization constant.

For the particular case of $r^*$ we write the corresponding perturbed policy as 
\begin{displaymath}
\hat{\pi}^*(x,a)=
\begin{cases}
\beta^*(x) & \text{if $a\notin\A^*(x)$}\\
\gamma^*(x) & \text{if $a\in\A^*(x)$},
\end{cases}
\end{displaymath}
and denote the \emph{maximum noise level} as the positive constant $\alpha$ defined as
\begin{displaymath}
\alpha=\sup_{x\in\X}\beta^*(x).
\end{displaymath}


\section{Multiclass Active Learning for Inverse Reinforcement Learning}%
\label{Sec:ActiveIRL}

In this section we introduce our active learning approach to IRL.


\subsection{Preliminaries}

To develop an active learning algorithm for this setting, we convert the problem of determining $r^*$ into an equivalent classification problem. This mostly amounts to rewriting of the Bayesian IRL problem from Section~\ref{Sec:Background} using a different notation. 

We define the hypothesis space $\H$ as follows. For every $r_k\in\R_\rho$, the $k$th hypothesis $\vec{h}_k:\X\to\set{-1,1}^{\abs{\A}}$ is defined as the function
\begin{displaymath}
h_k(x,a)=2\II_{\A_k(x)}(a)-1,
\end{displaymath}
where we write $h_k(x,a)$ to denote the $a$th component of $\vec{h}_k(x)$. Intuitively, $\vec{h}_k(x)$ identifies (with a value of $1$) the greedy actions in $x$ with respect to $r_k$, assigning a value of $-1$ to all other actions. We take $\H$ as the set of all such functions $\vec{h}_k$. Note that, since every reward prescribes at least one optimal action per state, it holds that for every $\vec{h}\in\H$ and every $x\in\X$ there is at least one $a\in\A$ such that $h(x,a)=1$. We write $\vec{h}^*$ to denote the target hypothesis, corresponding to $r^*$.

As before, given a hypothesis $\vec{h}\in\H$, we define the \emph{set of greedy actions at $x$} according to $\vec{h}$ as 
\begin{displaymath}
\A_\vec{h}(x)=\set{a\in\A\mid h(x,a)=1}.
\end{displaymath}
For an indexed set of samples, $\set{(x_\lambda,a_\lambda),\lambda\in\Lambda}$, we write $h_\lambda$ to denote $h(x_\lambda,a_\lambda)$, when the index set is clear from the context.

The prior distribution $p_0$ over $\R_\rho$ induces an equivalent distribution over $\H$, which we abusively also denote as $p_0$, and is such that $p_0(\vec{h}_k)=p_0(r_k)$. We let the history of observations up to time-step $t$ be 
\begin{displaymath}
\F_t=\set{(x_\tau,a_\tau),\tau=0,\ldots,t},
\end{displaymath} 
and $\betah$ and $\gammah$ be the estimates of $\beta^*$ and $\gamma^*$ associated with the hypothesis $\vec{h}$. Then, the distribution over $\H$ after observing $\F_t$ can be updated using Bayes rule as
\begin{align}
\nonumber%
p_t(\vec{h})
   &\triangleq\PP{\vec{h}^*=\vec{h}\mid\F_t}\\
\nonumber%
   &\propto\PP{a_t\mid x_t,\vec{h}^*=\vec{h},\F_{t-1}}\PP{\vec{h}^*=\vec{h}\mid\F_{t-1}}\\
\nonumber%
   &=\PP{a_t\mid x_t,\vec{h}^*=\vec{h}}\PP{\vec{h}=\vec{h}^*\mid\F_{t-1}}\\
\label{Eq:Bayes-first}%
   &\approx\gamma_{\vec{h}}(x_t)^{(1+h_t)/2}\beta_{\vec{h}}(x_t)^{(1-h_t)/2}p_{t-1}(\vec{h}),
\end{align}
where we assume, for all $x\in\X$,
\begin{equation}\label{Eq:Steps-basic} 
\abs{\A_{\vec{h}}(x)}\gamma_{\vec{h}}(x)\leq\abs{\A^*(x)}\gamma^*(x), 
\end{equation}
and $p_t(\vec{h})$ is normalized so that $\sum_{\vec{h}\in\H}p_t(\vec{h})=1$. Note that, in \eqref{Eq:Bayes-first}, we accommodate for the possibility of having access (for each hypothesis) to inaccurate estimates $\betah$ and $\gammah$ of $\beta^*$ and $\gamma^*$, respectively. 

We consider a partition of the state-space $\X$ into a disjoint family of $N$ sets, $\Xi=\set{\X_1,\ldots,\X_N}$ such that all hypotheses $\vec{h}\in\H$ are constant in each set $\X_i,i=1\ldots,N$. In other words, any two states $x,y$ lying in the same $\X_i$ are indistinguishable, since $h(x,a)=h(y,a)$ for all $a\in\A$ and all $\vec{h}\in\H$. This means that our hypothesis space $\H$ induces an equivalence relation in $\X$ in which two elements $x,y\in\X$ are equivalent if $\set{x,y}\subset\X_i$. We write $\x_i$ to denote the (any) representative of the set $\X_i$.%
\footnote{While this partition is, perhaps, of little relevance in problems with a small state-space $\X$, it is central in problems with large (or infinite) state-space, since the state to be queried has to be selected from a set of $N$ alternatives, instead of the (much larger) set of $\abs{\X}$ alternatives.}

The following definitions extend those of \cite{nowak11tit}.

\begin{definition}[$k$-neighborhood]
Two sets $\X_i,\X_j\in\Xi$ are said to be \emph{$k$-neighbors} if the set
\begin{displaymath}
\big\{\vec{h}\in\H\mid \A_\vec{h}(\x_i)\neq\A_\vec{h}(\x_j)\big\}
\end{displaymath}
has, at most, $k$ elements, \ie if there are $k$ or fewer hypotheses in $\H$ that output different optimal actions in $\X_i$ and $\X_j$.
\end{definition}

\begin{definition}
The pair $(\X,\H)$ is \emph{$k$-neighborly} if, for any two sets $\X_i,\X_j\in\Xi$, there is a sequence $\set{\X_{\ell_0},\ldots,\X_{\ell_n}}\subset\Xi$ such that
\begin{itemize}
\item $\X_{\ell_0}=\X_i$ and $\X_{\ell_n}=\X_j$;
\item For any $m$, $\X_{\ell_m}$ and $\X_{\ell_{m+1}}$ are $k$-neighbors.
\end{itemize}
\end{definition}

The notion of $k$-neighborhood structures the state-space $\X$ in terms of the hypotheses space $\H$, and this structure can be exploited for active learning purposes.


\subsection{Active IRL using GBS}%
\label{Subsec:SingleAction}

In defining our active IRL algorithm, we first consider a simplified setting in which the following assumption holds. We postpone to Section~\ref{Subsec:MultiAction} the discussion of the more general case.
\begin{assumption}\label{Ass:OneAction}
For every $h\in\H$ and every $x\in\X$, $\abs{\A_\vec{h}(x)}=1$. 
\end{assumption}
In other words, we focus on the case where all hypothesis considered prescribe a unique optimal action per state. A single optimal action per state implies that the noise model can be simplified. In particular, the noise model can now be constant across hypothesis, since all $\vec{h}\in\H$ prescribes the same number of optimal actions in each state (namely, one). We denote by $\hgamma(x)$ and $\hbeta(x)$ the estimates of $\gamma^*$ and $\beta^*$, respectively, and consider a Bayesian update of the form:
\begin{align}
\label{Eq:Bayes}   
p_t(\vec{h})
   &\propto \frac{1}{Z}\hgamma(x_t)^{(1+h_t)/2}\hbeta(x_t)^{(1-h_t)/2}p_{t-1}(\vec{h}),
\end{align} 
with $1-\hgamma(x)=(\abs{\A}-1)\hbeta(x)$ and $Z$ an adequate normalization constant. For this simpler case, \eqref{Eq:Steps-basic} becomes
\begin{align}\label{Eq:Steps} 
\hbeta(x)\geq\beta^*(x) && \text{and} && \hgamma(x)&\leq\gamma^*(x),
\end{align}
where, as before, we overestimate the noise rate $\beta^*(x)$. For a given probability distribution $p$, define the \emph{weighted prediction in $x$} as
\begin{displaymath}
W(p,x)=\max_{a\in\A}\sum_{\vec{h}\in\H}p(\vec{h})h(x,a),
\end{displaymath}
and the \emph{predicted action at $x$} as
\begin{displaymath}
A^*(p,x)=\argmax_{a\in\A}\sum_{\vec{h}\in\H}p(\vec{h})h(x,a).
\end{displaymath}
We are now in position to introduce a first version of our active learning algorithm for inverse reinforcement learning, that we dub \emph{Generalized Binary Search for IRL} (GBS-IRL). GBS-IRL is summarized in Algorithm~\ref{Alg:ActIRL}.
\begin{algorithm}[!tb]
\footnotesize
\caption{GBS-IRL (version~1)}\label{Alg:ActIRL}
  \begin{algorithmic}[1]
    \REQUIRE MDP parameters $\M\backslash{r}$
    \REQUIRE Reward space $\R_\rho$
    \REQUIRE Prior distribution $p_0$ over $\R$
    \STATE Compute $\H$ from $\R_\rho$
    \STATE Determine partition $\Xi={\X_1,\ldots\X_N}$ of $\X$
    \STATE Set $\F_0=\emptyset$
    \FORALL{$t=0,\ldots$}
    \STATE Set $c_t=\min_{i=1,\ldots,N}W(p_t,\x_i)$
    \IF{there are $1$-neighbor sets $\X_i,\X_j$ such that 
    \begin{align*}
      W(p_t,\x_i)&>c_t,&
      W(p_t,\x_j)&>c_t\\
      A^*(p_t,\x_i)&\neq A^*(p_t,\x_j),
    \end{align*}}
    \STATE Sample $x_{t+1}$ from $\X_i$ or $\X_j$ with probability $1/2$
    \ELSE
    \STATE Sample $x_{t+1}$ from the set $\X_i$ that minimizes $W(p_t,\x_i)$.
    \ENDIF
    \STATE Obtain noisy response $a_{t+1}$
	\STATE Set $\F_{t+1}\leftarrow\F_t\cup\set{(x_{t+1},a_{t+1})}$
    \STATE Update $p_{t+1}$ from $p_t$ using \eqref{Eq:Bayes}
    \ENDFOR
    \RETURN $\hat{\vec{h}}_t=\argmax_{\vec{h}\in\H}p_t(\vec{h})$.
  \end{algorithmic}
\end{algorithm}
This first version of the algorithm relies critically on Assumption~\ref{Ass:OneAction}. In Section~\ref{Subsec:MultiAction}, we discuss how Algorithm~\ref{Alg:ActIRL} can be modified to accommodate situations in which Assumption~\ref{Ass:OneAction} does not hold.

Our analysis of GBS-IRL relies on the following fundamental lemma that generalizes Lemma~3 of \cite{nowak11tit} to multi-class settings.

\begin{lemma}\label{Lemma:Incoherence}
Let $\H$ denote a hypothesis space defined over a set $\X$, where $(\X,\H)$ is assumed $k$-neighborly. Define the \emph{coherence parameter} for $(\X,\H)$ as
\begin{displaymath}
c^*(\X,\H)\triangleq\max_{a\in\A}\min_\mu\max_{\vec{h}\in\H}\sum_{i=1}^Nh(\x_i,a)\mu(\X_i),
\end{displaymath}
where $\mu$ is a probability measure over $\X$. Then, for any probability distribution $p$ over $\H$, one of the two statements below holds:
\begin{enumerate}
\item There is a set $\X_i\in\Xi$ such that
\begin{displaymath}
W(p,\x_i)\leq c^*.
\end{displaymath}
\item There are two $k$-neighbor sets $\X_i$ and $\X_j$ such that
\begin{align*}
W(p,\x_i)&>c^* &
W(p,\x_j)&>c^* \\
A^*(p,\x_i)&\neq 
A^*(p,\x_j).
\end{align*}
\end{enumerate}
\end{lemma}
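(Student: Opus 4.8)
The plan is to argue by contradiction, showing that if both alternatives fail then $c^*$ is forced to be strictly smaller than itself. To set up notation, for each action $a\in\A$ and each distribution $p$ over $\H$ I write $g_a(p,x)=\sum_{\vec{h}\in\H}p(\vec{h})h(x,a)$, so that $W(p,x)=\max_{a\in\A}g_a(p,x)$ and $A^*(p,x)\in\argmax_{a\in\A}g_a(p,x)$ (fixing any tie-breaking rule, so that $A^*$ is a genuine function). The only fact I need about this notation is that $W(p,\x_i)=g_{a^*}(p,\x_i)$ whenever $a^*=A^*(p,\x_i)$.

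First I would negate statement~1, assuming $W(p,\x_i)>c^*$ for every $i$. Under this assumption the qualifiers ``$W(p,\x_i)>c^*$ and $W(p,\x_j)>c^*$'' appearing in statement~2 hold automatically for every pair, so the negation of statement~2 reduces to the clean condition that \emph{every} pair of $k$-neighbor sets shares the same predicted action. Invoking the $k$-neighborly hypothesis, any two sets $\X_i,\X_j$ are joined by a chain of $k$-neighbors, and equality of the predicted action propagates along this chain by transitivity. This yields a single global action $a^*$ with $A^*(p,\x_i)=a^*$, and hence $W(p,\x_i)=g_{a^*}(p,\x_i)$, for all $i$.

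The crux is then to bound $c^*$ from below using this global action. Since the outer maximization in the definition of $c^*$ ranges over all actions, $c^*$ dominates the term for $a=a^*$, i.e.\ $c^*\geq\min_\mu\max_{\vec{h}\in\H}\sum_i h(\x_i,a^*)\mu(\X_i)$. For any fixed probability measure $\mu$ the inner maximum over $\vec{h}$ dominates the $p$-average, giving $\max_{\vec{h}}\sum_i h(\x_i,a^*)\mu(\X_i)\geq\sum_i\mu(\X_i)g_{a^*}(p,\x_i)\geq\min_i g_{a^*}(p,\x_i)$, where the last inequality holds because a convex combination is at least its smallest entry. Taking the minimum over $\mu$ preserves this lower bound, so $c^*\geq\min_i g_{a^*}(p,\x_i)=\min_i W(p,\x_i)>c^*$, the desired contradiction.

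I expect the step requiring the most care to be this passage from the minimax-style definition of $c^*$ to the weighted prediction: the key trick is to use $p$ itself as the adversary's mixture inside $\max_{\vec{h}}$, which collapses the awkward $\min_\mu\max_{\vec{h}}$ quantity into the simple lower bound $\min_i W(p,\x_i)$ and, pleasantly, avoids any appeal to a full minimax theorem---only the one-sided fact that a maximum dominates an average. The second delicate point, which is genuinely new relative to the binary case of \cite{nowak11tit}, is the globalization step: I must ensure the predicted action is treated as a single well-defined action and that propagating equality of predicted actions along $k$-neighbor chains is legitimate, which hinges on every intermediate set also satisfying $W(p,\cdot)>c^*$---a condition guaranteed precisely by the negation of statement~1.
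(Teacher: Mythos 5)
Your proof is correct, and it takes a genuinely different route from the paper's. The paper assumes statement~1 fails and argues \emph{constructively} toward statement~2: from the bound $\sum_{\vec{h}}p(\vec{h})\sum_{i}h(\x_i,a)\mu(\X_i)\leq c^*$ (with $\mu$ the minimizing measure for action $a$) it asserts that there must exist sets $\X_i,\X_j$ and an action $a$ with $\sum_{\vec{h}}p(\vec{h})h(\x_i,a)>c^*$ and $\sum_{\vec{h}}p(\vec{h})h(\x_j,a)<-c^*$, then walks a $k$-neighbor chain from $\X_i$ to $\X_j$ and locates a consecutive pair across which this fixed action's weighted prediction flips sign, forcing the predicted actions there to differ. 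Note that both the appearance of the threshold $-c^*$ and the final ``different argmax'' conclusion implicitly lean on the single-optimal-action structure of Assumption~\ref{Ass:OneAction} together with $c^*\geq0$: for $a\neq a'$ one has $h(x,a)+h(x,a')\leq0$, so two distinct actions cannot both have weighted prediction above $c^*$ at the same state, and an action whose prediction falls below $-c^*$ cannot be the predicted action where $W>c^*$. Your argument reverses the order of the two ingredients: you use the chain first, only to propagate equality of predicted actions into a single global action $a^*$ (legitimate because the negation of statement~1 makes the $W>c^*$ side conditions automatic at every intermediate set), and you then derive the contradiction $c^*\geq\min_i W(p,\x_i)>c^*$ from two one-sided inequalities (a maximum over $\H$ dominates the $p$-average, and a $\mu$-convex combination dominates its minimum). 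What your route buys: no sign analysis, no claim that any prediction drops below $-c^*$, and no use of Assumption~\ref{Ass:OneAction} or of $c^*\geq0$, so your proof applies verbatim in the multiple-optimal-actions setting of Section~\ref{Subsec:MultiAction} and under arbitrary tie-breaking in $A^*$. What the paper's route buys: it is constructive, exhibiting the witnessing action and the specific neighboring pair where its weighted prediction flips sign, which is the geometric picture (inherited from the binary argument of \cite{nowak11tit}) behind the query-selection rule in Algorithm~\ref{Alg:ActIRL}.
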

\begin{proof}
See Appendix~\ref{Proof:Incoherence}.
\end{proof}

This lemma states that, given any distribution over the set of hypothesis, either there is a state $\x_i$ for which there is great uncertainty concerning the optimal action or, alternatively, there are two $k$-neighboring states $\x_i$ and $\x_j$ in which all except a few hypothesis predict the same action, yet the predicted optimal action is strikingly different in both states. In either case, it is possible to select a query that is highly informative. 

The coherence parameter $c^*$ is the multi-class equivalent of the coherence parameter introduced by \citet{nowak11tit}, and quantifies the informativeness of queries. That $c^*$ always exists can be established by noting that the partition of $\X$ is finite (since $\H$ is finite) and, therefore, the minimization can be conducted exactly. On the other hand, if $\H$ does not include trivial hypotheses that are constant all over $\X$, it holds that $c^*<1$.

We are now in position to establish the convergence properties of Algorithm~\ref{Alg:ActIRL}. Let $\PP{\cdot}$ and $\EE{\cdot}$ denote the probability measure and corresponding expectation governing the underlying probability over noise and possible algorithm randomizations in query selection. 

\begin{theorem}[Consistency of GBS-IRL]\label{Theo:Consistency}
Let $\F_t=\set{(x_\tau,a_\tau),\tau=1,\ldots,t}$ denote a possible history of observations obtained with GBS-IRL. If, in the update \eqref{Eq:Bayes}, $\hbeta(x)$ and $\hgamma(x)$ verify \eqref{Eq:Steps}, then 
\begin{displaymath}
\lim_{t\to\infty}\mathbb{P}\big[\hat{\vec{h}}_t\neq h^*\big]=0.
\end{displaymath}
\end{theorem}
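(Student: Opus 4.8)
The plan is to show that the posterior mass of every incorrect hypothesis is driven to zero, so that the returned MAP estimate eventually coincides with $h^*$. Since $\H$ is finite, a union bound gives $\PP{\hat{\vec{h}}_t\neq h^*}\le\sum_{\vec{h}\neq h^*}\PP{p_t(\vec{h})\ge p_t(h^*)}$, so it suffices to prove that for each fixed $\vec{h}\neq h^*$ the log-odds $L_t(\vec{h})=\log\big(p_t(\vec{h})/p_t(h^*)\big)$ tends to $-\infty$ almost surely. Unrolling the Bayes update \eqref{Eq:Bayes} cancels the normalization constant and yields the increment $L_t(\vec{h})-L_{t-1}(\vec{h})=\tfrac12\,(h_t-h^*_t)\,\log\big(\hgamma(x_t)/\hbeta(x_t)\big)$, with $h_t=h(x_t,a_t)$ and $h^*_t=h^*(x_t,a_t)$. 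Under Assumption~\ref{Ass:OneAction} each hypothesis has a single greedy action per state, so this increment is nonzero only at a \emph{discriminating} query, i.e.\ a state where $\A_\vec{h}$ and $\A^*$ prescribe different actions.

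Next I would compute the conditional drift over the noisy response $a_t\sim\hat{\pi}^*(x_t,\cdot)$. At a discriminating query the response equals the true greedy action with probability $\gamma^*(x_t)$ (giving increment $-\log(\hgamma/\hbeta)$) and equals the greedy action of $\vec{h}$ with probability $\beta^*(x_t)$ (giving $+\log(\hgamma/\hbeta)$), all other responses contributing $0$; hence $\EE{L_t-L_{t-1}\mid\F_{t-1},x_t}=\big(\beta^*(x_t)-\gamma^*(x_t)\big)\log\big(\hgamma(x_t)/\hbeta(x_t)\big)$. Invoking the conservative-estimate condition \eqref{Eq:Steps} together with non-degeneracy $\hbeta<1/\abs{\A}$ (equivalently $\hgamma>\hbeta$), and using $\gamma^*-\beta^*=1-\abs{\A}\beta^*\ge1-\abs{\A}\alpha>0$, this drift is bounded above by a constant $-\delta<0$ on every discriminating query and equals $0$ otherwise. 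Thus each $L_t(\vec{h})$ is a supermartingale with bounded increments whose drift is strictly negative precisely on discriminating queries.

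Writing $L_t=M_t+A_t$ with $M_t$ a bounded-increment martingale and $A_t$ the predictable, non-increasing drift, I would use a dichotomy: if $\vec{h}$ is discriminated only finitely often then $L_t$ is eventually constant, whereas otherwise $A_t\to-\infty$ and a martingale strong law along the discriminating subsequence forces $L_t\to-\infty$. Everything therefore reduces to excluding the first case, i.e.\ to proving that any $\vec{h}$ whose posterior mass does not vanish is discriminated infinitely often. This is the main obstacle, and it is where the query rule and Lemma~\ref{Lemma:Incoherence} enter: if both $\vec{h}$ and $h^*$ retained mass bounded away from $0$, then at a set $\X_i$ on which $\A_\vec{h}$ and $\A^*$ disagree the weighted prediction $W(p_t,\x_i)$ would stay bounded away from $1$; Lemma~\ref{Lemma:Incoherence} guarantees that at every step Algorithm~\ref{Alg:ActIRL} queries an informative set (a minimal-$W$ set, or one of a discriminating neighbor pair sampled with probability $1/2$), so low-consensus sets are queried repeatedly, and progressively eliminating the other incorrect hypotheses forces the discriminating set for $\vec{h}$ to be queried. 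A conditional Borel--Cantelli argument then turns this into infinitely many discriminating queries, contradicting the first case.

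Combining the pieces, $L_t(\vec{h})\to-\infty$ almost surely for every $\vec{h}\neq h^*$; finiteness of $\H$ then gives $p_t(h^*)\to1$ a.s., so $\hat{\vec{h}}_t=h^*$ for all large $t$ and in particular $\PP{\hat{\vec{h}}_t\neq h^*}\to0$. I expect the delicate step to be the recurrence of discriminating queries, since the queries are selected adaptively from the evolving posterior: making the ``$W(p_t,\x_i)$ stays away from $1$'' claim rigorous requires converting persistence of mass on $\vec{h}$ into a uniform lower bound on $1-W(p_t,\x_i)$ at discriminating sets, uniformly over the random history, most plausibly by first establishing that every other incorrect hypothesis is eliminated so that the query rule is eventually forced onto $\vec{h}$.
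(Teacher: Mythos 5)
Your reduction to per-hypothesis log-odds is sound as far as it goes: the increment formula $L_t(\vec{h})-L_{t-1}(\vec{h})=\tfrac12(h_t-h^*_t)\log\bigl(\hgamma(x_t)/\hbeta(x_t)\bigr)$ is correct, the conditional drift $(\beta^*(x_t)-\gamma^*(x_t))\log\bigl(\hgamma(x_t)/\hbeta(x_t)\bigr)$ is correct, and your dichotomy is clean precisely because the increment is identically zero (not merely zero in expectation) at non-discriminating queries. But the whole proof then stands or falls on the recurrence claim --- that any $\vec{h}\neq\vec{h}^*$ whose log-odds freeze at a nonnegative value must be discriminated infinitely often --- and this is exactly the step you leave unproven. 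Your proposed route to it is circular: you want to force the query rule onto the disagreement sets of $\vec{h}$ by ``first establishing that every other incorrect hypothesis is eliminated,'' but eliminating each of those hypotheses requires the very same recurrence property for it, and there is no well-founded order in which to run such an induction, since which hypothesis gets discriminated next is determined by the evolving, random posterior. Moreover, observing that $W(p_t,\x_i)$ stays bounded away from $1$ at a disagreement set of $\vec{h}$ only upper-bounds the \emph{minimum} of $W$; it does not imply that this particular set, rather than some other low-$W$ set, is ever queried. Conditional Borel--Cantelli gives you nothing without a lower bound on the conditional probability of querying a set that discriminates $\vec{h}$, which is precisely what is missing.

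The paper sidesteps this obstacle entirely by never tracking individual incorrect hypotheses. It works with the aggregate odds $C_t=(1-p_t(\vec{h}^*))/p_t(\vec{h}^*)$ and proves (Lemma~\ref{Lemma:Supermartingale}) that $C_t$ is a nonnegative supermartingale, with one-step conditional factor controlled by $\EE{\eta_{t+1}\mid\F_t,x_{t+1}}=1-(1-\delta(\ah))\bigl[\gamma^*\tfrac{\hgamma-\hbeta}{\hgamma}+\beta^*\tfrac{\hbeta-\hgamma}{\hbeta}\bigr]$, where $1-\delta(\ah)$ is the posterior mass of hypotheses disagreeing with $\vec{h}^*$ at the queried state. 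The point your per-hypothesis decomposition loses is that the min-$W$ rule automatically ties this discriminated mass to the aggregate error, without ever identifying which hypothesis it belongs to: at the queried state, $1-\delta(\ah)\geq\tfrac12\bigl(1-W(p_t,x_{t+1})\bigr)$, while any disagreement state of the heaviest incorrect hypothesis $\vec{h}_1$ satisfies $W\leq 1-2\min\bigl(p_t(\vec{h}^*),p_t(\vec{h}_1)\bigr)$, so the queried state always discriminates mass comparable to $\max_{\vec{h}\neq\vec{h}^*}p_t(\vec{h})$. Markov's inequality, $\mathbb{P}\big[\hat{\vec{h}}_t\neq\vec{h}^*\big]\leq\PP{C_t>1}\leq\EE{C_t}$, together with the telescoping product of conditional ratios, then yields the theorem. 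To salvage your argument you would essentially have to replace your per-hypothesis quantity by this aggregate one, at which point you have reproduced the paper's proof.
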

\begin{proof}
See Appendix~\ref{Proof:Consistency}.
\end{proof}

Theorem~\ref{Theo:Consistency} establishes the consistency of active learning for multi-class classification. The proof relies on a fundamental lemma that, roughly speaking, ensures that the sequence $p_t(\vec{h}^*)$ is increasing in expectation. This fundamental lemma (Lemma~\ref{Lemma:Supermartingale} in Appendix~\ref{Proof:Consistency}) generalizes a related result of \cite{nowak11tit} that, due to the consideration of multiple classes in GBS-IRL, does not apply. Our generalization requires, in particular, stronger assumptions on the noise, $\hbeta(x)$, and implies a different rate of convergence, as will soon become apparent. It is also worth mentioning that the statement in Theorem~\ref{Theo:Consistency} could alternatively be proved using an adaptive sub-modularity argument (again relying on Lemma~\ref{Lemma:Supermartingale} in Appendix~\ref{Proof:Consistency}), using the results of \cite{golovin11jair}.

Theorem~\ref{Theo:Consistency} ensures that, as the number of samples increases, the probability mass concentrates on the correct hypothesis $\vec{h}^*$. However, it does not provide any information concerning the rate at which $\mathbb{P}\big[\hat{\vec{h}}_t\neq h^*\big]\to0$. The convergence rate for our active sampling approach is established in the following result.

\begin{theorem}[Convergence Rate of GBS-IRL]\label{Theo:Convergence}
Let $\H$ denote our hypothesis space, defined over $\X$, and assume that $(\X,\H)$ is 1-neighborly. If, in the update \eqref{Eq:Bayes}, $\hbeta(x)>\alpha$ for all $x\in\X$, then
\begin{equation}\label{Eq:ExponentialRate}
\mathbb{P}\big[\hat{\vec{h}}_t\neq\vec{h}^*\big]\leq\abs{\H}(1-\lambda)^t,\quad t=0,\ldots
\end{equation}
where $\lambda=\eps\cdot\min\set{\frac{1-c^*}{2},\frac{1}{4}}<1$ and
\begin{equation}\label{Eq:eps}
\eps=\min_x\gamma^*(x)\frac{\hgamma(x)-\hbeta(x)}{\hgamma(x)}+\beta^*(x)\frac{\hbeta(x)-\hgamma(x)}{\hbeta(x)}.
\end{equation}
\end{theorem}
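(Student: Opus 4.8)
The plan is to track the whole posterior through a single scalar potential and to prove that it contracts geometrically in expectation. I would work with the likelihood ratios against the truth, $\tfrac{p_t(\vec{h})}{p_t(\vec{h}^*)}$, which are insensitive to the normalisation $Z$ in \eqref{Eq:Bayes}, and set $\Phi_t=\sum_{\vec{h}\neq\vec{h}^*}\tfrac{p_t(\vec{h})}{p_t(\vec{h}^*)}=(1-p_t(\vec{h}^*))/p_t(\vec{h}^*)$. Since $\hat{\vec{h}}_t\neq\vec{h}^*$ forces some competitor to satisfy $p_t(\vec{h})\geq p_t(\vec{h}^*)$, we have $\{\hat{\vec{h}}_t\neq\vec{h}^*\}\subseteq\{\Phi_t\geq 1\}$, so Markov's inequality gives $\PP{\hat{\vec{h}}_t\neq\vec{h}^*}\leq\EE{\Phi_t}$. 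For the uniform prior $\Phi_0=\abs{\H}-1<\abs{\H}$, so the theorem reduces, via the tower property, to the one-step contraction $\EE{\Phi_{t+1}\mid\F_t}\leq(1-\lambda)\Phi_t$, which then iterates to give \eqref{Eq:ExponentialRate}. (This is the quantitative strengthening of the supermartingale property behind Theorem~\ref{Theo:Consistency}.)

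For the contraction I would first isolate the effect of the noisy answer. Using \eqref{Eq:Bayes} and cancelling $Z$, each ratio is multiplied by $f_{t+1}(\vec{h})/f_{t+1}(\vec{h}^*)$, which equals $1$ whenever $\vec{h}$ and $\vec{h}^*$ prescribe the same greedy action at the queried state $x$, and otherwise takes the values $\hbeta(x)/\hgamma(x)$, $\hgamma(x)/\hbeta(x)$ or $1$ according to the observed action. Averaging over the response under the true policy $\hat{\pi}^*$ (which plays $\vec{h}^*$'s greedy action with probability $\gamma^*(x)$ and each other action with probability $\beta^*(x)$), the expected factor on a disagreeing hypothesis is exactly $1-\eps(x)$, where $\eps(x)$ is the $x$-summand of \eqref{Eq:eps} so that $\eps=\min_x\eps(x)$. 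The hypothesis $\hbeta(x)>\alpha\geq\beta^*(x)$ is precisely what makes $\eps>0$, so each disagreeing ratio shrinks in expectation by a factor at most $1-\eps$ while agreeing ratios are unchanged. Consequently
\begin{displaymath}
\Phi_t-\EE{\Phi_{t+1}\mid\F_t,x_{t+1}}\geq\eps\,\frac{D(x_{t+1})}{p_t(\vec{h}^*)},\qquad D(x)=\!\!\!\sum_{\vec{h}:\,\A_{\vec{h}}(x)\neq\A_{\vec{h}^*}(x)}\!\!\!p_t(\vec{h}).
\end{displaymath}

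It then remains to show that the query rule of Algorithm~\ref{Alg:ActIRL} always guarantees $\EE{D(x_{t+1})\mid\F_t}\geq\min\{\tfrac{1-c^*}{2},\tfrac14\}\,(1-p_t(\vec{h}^*))$, since then dividing by $p_t(\vec{h}^*)$ turns the display into $\lambda\Phi_t$ with $\lambda=\eps\min\{\tfrac{1-c^*}{2},\tfrac14\}$. This is where Lemma~\ref{Lemma:Incoherence} and the $1$-neighborly structure enter through their two cases. In the first, the algorithm queries a state with $W(p_t,\x_i)\leq c^*$; writing $W=2\max_a P_a-1$ with $P_a$ the mass voting for action $a$, this caps every action's mass --- in particular $\vec{h}^*$'s --- at $(1+c^*)/2$, so $D\geq(1-c^*)/2$. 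In the second, the algorithm samples uniformly from a $1$-neighbor pair $\X_i,\X_j$ with $A^*(p_t,\x_i)\neq A^*(p_t,\x_j)$; since at most one hypothesis distinguishes the two states, the greedy action of $\vec{h}^*$ fails to be the plurality action at (at least) one of them, whence $P_{b^*}\leq\tfrac12$ and $D\geq\tfrac12$ there, giving $\EE{D}\geq\tfrac14$ after the coin flip. Taking the worse of the two branches completes the contraction.

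The main obstacle I anticipate is the bookkeeping that ties the data-dependent threshold $c_t=\min_iW(p_t,\x_i)$ used inside Algorithm~\ref{Alg:ActIRL} to the coherence constant $c^*$ of Lemma~\ref{Lemma:Incoherence}: one must argue that whenever the algorithm falls through to the ``else'' branch no distinguishing $1$-neighbor pair lies strictly above the minimum, and therefore the queried minimiser genuinely satisfies $W\leq c^*$, so that the first-case bound applies. Handling the single hypothesis that may distinguish a neighbor pair (the degenerate case in which $\vec{h}^*$ itself is that hypothesis and matches both pluralities, where one instead bounds $\EE{D}\geq\tfrac12(1-p_t(\vec{h}^*))$), together with the boundary behaviour when $p_t(\vec{h}^*)$ is near $\tfrac12$, also requires care; these are exactly the points where the multi-class argument departs from the binary one of \cite{nowak11tit}.
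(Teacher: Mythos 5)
Your proposal is correct and follows essentially the same route as the paper's proof: your potential $\Phi_t$ is exactly the paper's $C_t$, your likelihood-ratio computation of the expected shrinkage factor $1-\eps(x)$ on disagreeing hypotheses reproduces the paper's bound $\EE{\eta_{t+1}\mid\F_t,x_{t+1}}\leq 1-\eps\,(1-\delta(\ah))$ (your $D(x_{t+1})$ is the paper's $1-\delta(\ah)$), and your case analysis --- the else-branch giving $\tfrac{1-c^*}{2}$, the disagreeing $1$-neighbor pair giving $\tfrac14$, and the degenerate pair in which $\vec{h}^*$ itself is the distinguishing hypothesis giving $\tfrac12(1-p_t(\vec{h}^*))$ --- matches the paper's Situations 1--3 and yields the same constant $\lambda$. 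The ``main obstacle'' you flag, namely that the else branch only guarantees $c_t\leq c^*$ modulo ties at the minimum, is a genuine subtlety, but the paper glosses over it as well, asserting $c_t<c^*$ without further argument.
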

\begin{proof}
See Appendix~\ref{Proof:Convergence}.
\end{proof}
Theorem~\ref{Theo:Convergence} extends Theorem~4 of \cite{nowak11tit} to the multi-class case. However, due to the existence of multiple actions (classes), the constants obtained in the above bounds differ from those obtained in the aforementioned work \citep{nowak11tit}. Interestingly, for $c^*$ close to zero, the convergence rate obtained is near-optimal, exhibiting a logarithmic dependence on the dimension of the hypothesis space. In fact, we have the following straightforward corollary of Theorem~\ref{Theo:Convergence}.

\begin{corollary}[Sample Complexity of GBS-IRL]\label{Cor:SampleComplexity}
Under the conditions of Theorem~\ref{Theo:Convergence}, for any given $\delta>0$, $\mathbb{P}\big[\hat{\vec{h}}_t=\vec{h}^*\big]>1-\delta$ as long as 
\begin{displaymath}
t\geq\frac{1}{\lambda}\log\frac{\abs{\H}}{\delta}.
\end{displaymath}
\end{corollary}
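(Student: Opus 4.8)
The plan is to read off the exponential tail bound already established in Theorem~\ref{Theo:Convergence} and simply invert it to solve for the number of samples $t$. Concretely, under the hypotheses of that theorem (that $(\X,\H)$ is $1$-neighborly and $\hbeta(x)>\alpha$ for all $x\in\X$) we are handed the bound $\mathbb{P}\big[\hat{\vec{h}}_t\neq\vec{h}^*\big]\leq\abs{\H}(1-\lambda)^t$ together with the guarantee $0<\lambda<1$. Since $\mathbb{P}\big[\hat{\vec{h}}_t=\vec{h}^*\big]=1-\mathbb{P}\big[\hat{\vec{h}}_t\neq\vec{h}^*\big]$, it suffices to choose $t$ large enough that $\abs{\H}(1-\lambda)^t\leq\delta$.

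First I would take logarithms of the condition $\abs{\H}(1-\lambda)^t\leq\delta$. Because $0<\lambda<1$ we have $0<1-\lambda<1$ and hence $\log(1-\lambda)<0$; rearranging, and flipping the inequality when dividing by the negative quantity $\log(1-\lambda)$, yields the exact threshold
\[
t\geq\frac{\log\big(\abs{\H}/\delta\big)}{-\log(1-\lambda)}.
\]
This already constitutes a sample-complexity statement, but with the slightly awkward denominator $-\log(1-\lambda)$ in place of the clean $\lambda$ appearing in the corollary.

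The second step is to replace this exact threshold by the simpler sufficient one stated in the corollary, using the elementary inequality $1-\lambda\leq e^{-\lambda}$, equivalently $-\log(1-\lambda)\geq\lambda$, which is valid for every $\lambda$ and strict for $\lambda\in(0,1)$. Dividing the positive numerator $\log(\abs{\H}/\delta)$ by the smaller positive quantity $\lambda$ only enlarges the bound, so $\frac{1}{\lambda}\log\frac{\abs{\H}}{\delta}\geq\frac{\log(\abs{\H}/\delta)}{-\log(1-\lambda)}$. Consequently any $t$ satisfying $t\geq\frac{1}{\lambda}\log\frac{\abs{\H}}{\delta}$ also exceeds the exact threshold above, which forces $\abs{\H}(1-\lambda)^t<\delta$ and therefore $\mathbb{P}\big[\hat{\vec{h}}_t=\vec{h}^*\big]>1-\delta$, exactly as claimed.

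There is essentially no hard step here: the argument is a direct inversion of an exponential tail bound, and the only ingredient beyond arithmetic is the textbook inequality $1-\lambda\leq e^{-\lambda}$, which is precisely what trades the exact rate $-\log(1-\lambda)$ for the cosmetically cleaner $\lambda$. The only points demanding any care at all are bookkeeping ones, namely tracking the sign of $\log(1-\lambda)$ when dividing, and invoking $\lambda<1$ (guaranteed by Theorem~\ref{Theo:Convergence}) to keep every quantity well-defined and positive so that the strict inequality $>1-\delta$ is preserved.
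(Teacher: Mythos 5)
Your proposal is correct and is exactly the ``straightforward'' argument the paper has in mind (the paper omits the proof entirely): bound $(1-\lambda)^t\leq e^{-\lambda t}$ via $1-\lambda\leq e^{-\lambda}$ and invert $\abs{\H}e^{-\lambda t}\leq\delta$ to get $t\geq\frac{1}{\lambda}\log\frac{\abs{\H}}{\delta}$. Your additional care with the sign of $\log(1-\lambda)$ and with strictness of the inequality is sound and fills in the routine details.
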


To conclude this section, we note that our reduction of IRL to a standard (multi-class) classification problem implies that Algorithm~\ref{Alg:ActIRL} is not specialized in any particular way to IRL problems---in particular, it can be used in general classification problems. Additionally, the guarantees in Theorems~\ref{Theo:Consistency} and \ref{Theo:Convergence} are also generally applicable in any multi-class classification problems verifying the corresponding assumptions.


\subsection{Discussion and Extensions}%
\label{Subsec:MultiAction}

We now discuss the general applicability of our results from Section~\ref{Subsec:SingleAction}. In particular, we discuss two assumptions considered in Theorem~\ref{Theo:Convergence}, namely the $1$-neighborly condition on $(\X,\H)$ and Assumption~\ref{Ass:OneAction}. We also discuss how additional forms of expert feedback may be integrated in a seamless manner in our GBS-IRL approach, so that the learner is able to recover efficiently the target task.


\subsubsection*{$1$-Neighborly Assumption:} 

This assumption is formulated in Theorem~\ref{Theo:Convergence}. The $1$-neighborly assumption states that $(\X,\H)$ is $1$-neighborly, meaning that it is possible to ``structure'' the state-space $\X$ in a manner that is coherent with the hypothesis space $\H$. To assess the validity of this assumption in general, we start by recalling that two sets $\X_i,\X_j\in\Xi$ are 1-neighbors if there is a single hypothesis $\vec{h}_0\in\H$ that prescribes different optimal actions in $\X_i$ and $\X_j$. Then, $(\X,\H)$ is 1-neighborly if every two sets $\X_i,\X_j$ can be ``connected'' by a sequence of 1-neighbor sets.

In general, given a multi-class classification problem with hypothesis space $\H$, the $1$-neighborly assumption can be investigated by verifying the connectivity of the $1$-neighborhood graph induced by $\H$ on $\X$. We refer to the work of \cite{nowak11tit} for a detailed discussion of this case, as similar arguments carry to our multi-class extension.

In the particular case of inverse reinforcement learning, it is important to assess whether the $1$-neighborly assumption is reasonable. Given a finite state-space, $\X$, and a finite action-space, $\A$, it is possible to build a total of $\abs{\A}^{\abs{\X}}$ different hypothesis.%
\footnote{This number is even larger if multiple optimal actions are allowed.}
As shown in the work of \cite{melo10ecai}, for any such hypothesis it is always possible to build a non-degenerate reward function that yields such hypothesis as the optimal policy. Therefore, a sufficiently rich reward space ensures that the corresponding hypothesis space $\H$ includes all $\abs{\A}^{\abs{\X}}$ possible policies already alluded to. This trivially implies that $(\X,\H)$ is \emph{not} $1$-neighborly. 

Unfortunately, as also shown in the aforementioned work \citep{melo10ecai}, the consideration of $\H$ as the set of all possible policies also implies that all states must be sufficiently sampled, since no generalization across states is possible. This observation supports the option in most IRL research to focus on problems in which rewards/policies are selected from some restricted set \citep[for example,][]{abbeel04icml,ramachandran07ijcai,neu07uai,syed08nips}. For the particular case of active learning approaches, the consideration of a full set of rewards/policies also implies that there is little hope that any active sampling will provide any but a negligible improvement in sample complexity. A related observation can be found in the work of \cite{dasgupta04nips} in the context of active learning for binary classification.

\begin{algorithm}[!tb]
\footnotesize
\caption{GBS-IRL (version~2)}\label{Alg:SimpleIRL}
  \begin{algorithmic}[1]
    \REQUIRE MDP parameters $\M\backslash{r}$
    \REQUIRE Reward space $\R_\rho$
    \REQUIRE Prior distribution $p_0$ over $\R$
    \STATE Compute $\H$ from $\R_\rho$
    \STATE Determine partition $\Xi={\X_1,\ldots\X_N}$ of $\X$
    \STATE Set $\F_0=\emptyset$
    \FORALL{$t=0,\ldots$}
      \STATE Sample $x_{t+1}$ from the set $\X_i$ that minimizes $W(p_t,\x_i)$.
      \STATE Obtain noisy response $a_{t+1}$
	    \STATE Set $\F_{t+1}\leftarrow\F_t\cup\set{(x_{t+1},a_{t+1})}$
      \STATE Update $p_{t+1}$ from $p_t$ using \eqref{Eq:Bayes}
    \ENDFOR
    \RETURN $\hat{\vec{h}}_t=\argmax_{\vec{h}\in\H}p_t(\vec{h})$.
  \end{algorithmic}
\end{algorithm}

In situations where the $1$-neighborly assumption may not be verified, Lemma~\ref{Lemma:Incoherence} cannot be used to ensure the selection of highly informative queries once $W(p,\x_i)>c^*$ for all $\X_i$. However, it should still be possible to use the main approach in GBS-IRL, as detailed in Algorithm~\ref{Alg:SimpleIRL}. For this situation, we can specialize our sample complexity results in the following immediate corollary.

\begin{corollary}[Convergence Rate of GBS-IRL, version~2]\label{Cor:Convergence}
Let $\H$ denote our hypothesis space, defined over $\X$, and let $\hbeta(x)>\alpha$ in the update \eqref{Eq:Bayes}. Then, for all $t$ such that $W(p_t,\x_i)\leq c^*$ for some $\X_i$, 
\begin{displaymath}
\mathbb{P}\big[\hat{\vec{h}}_t\neq\vec{h}^*\big]\leq\abs{\H}(1-\lambda)^t,\quad t=0,\ldots
\end{displaymath}
where $\lambda=\eps\frac{1-c^*}{2}$ and $\eps$ is defined in \eqref{Eq:eps}.
\end{corollary}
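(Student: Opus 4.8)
The plan is to read Corollary~\ref{Cor:Convergence} as a specialization of Theorem~\ref{Theo:Convergence}, reusing the same machinery but isolating the single branch of Lemma~\ref{Lemma:Incoherence} that Algorithm~\ref{Alg:SimpleIRL} ever exploits. The proof of Theorem~\ref{Theo:Convergence} rests on the supermartingale Lemma~\ref{Lemma:Supermartingale}, which controls the per-step evolution of a potential measuring the mass \emph{not} concentrated on $\vec{h}^*$; a convenient choice is the odds ratio $\Phi_t=\big(1-p_t(\vec{h}^*)\big)/p_t(\vec{h}^*)$. The two alternatives of Lemma~\ref{Lemma:Incoherence} give rise to two per-step contraction factors: querying an \emph{uncertain} set with $W(p_t,\x_i)\leq c^*$ contracts $\Phi_t$ in expectation through the factor $\frac{1-c^*}{2}$, whereas querying across a $1$-neighbor pair contracts it through the factor $\frac{1}{4}$. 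The constant $\lambda=\eps\cdot\min\set{\frac{1-c^*}{2},\frac{1}{4}}$ of Theorem~\ref{Theo:Convergence} is exactly the worse of the two, since under the $1$-neighborly assumption either branch may be triggered at a given step.

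The first step is the observation that Algorithm~\ref{Alg:SimpleIRL} never uses the $1$-neighbor branch: at every iteration it samples from the set $\X_i$ minimizing $W(p_t,\x_i)$. Hence, whenever the hypothesis of the corollary holds at step $t$ --- that is, $W(p_t,\x_i)\leq c^*$ for some $\X_i$ --- the queried set, being the minimizer, inherits $W(p_t,x_{t+1})\leq c^*$, so the query falls squarely in the ``uncertain set'' case. Invoking Lemma~\ref{Lemma:Supermartingale} for this case alone --- which is where the requirement $\hbeta(x)>\alpha$ is used, ensuring that the update \eqref{Eq:Bayes} drifts $\Phi_t$ downward at the rate measured by $\eps$ in \eqref{Eq:eps} --- yields $\EE{\Phi_{t+1}\mid\F_t}\leq(1-\lambda)\Phi_t$ with $\lambda=\eps\frac{1-c^*}{2}$. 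The $\frac{1}{4}$ term of Theorem~\ref{Theo:Convergence} is simply absent, because the $1$-neighbor branch is never entered.

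Next I would unroll the recursion in the standard way. As long as the condition $W(p_\tau,\x_i)\leq c^*$ (for some $\X_i$) holds at every $\tau<t$, iterating the one-step contraction and taking total expectations gives $\EE{\Phi_t}\leq(1-\lambda)^t\Phi_0$, with the initial potential $\Phi_0\leq\abs{\H}$ bounded exactly as in the proof of Theorem~\ref{Theo:Convergence}. Finally, since $\hat{\vec{h}}_t=\argmax_{\vec{h}\in\H}p_t(\vec{h})$, the event $\hat{\vec{h}}_t\neq\vec{h}^*$ forces $p_t(\vec{h}^*)\leq\tfrac{1}{2}$ and hence $\Phi_t\geq1$; Markov's inequality then delivers $\PP{\hat{\vec{h}}_t\neq\vec{h}^*}\leq\PP{\Phi_t\geq1}\leq\EE{\Phi_t}\leq\abs{\H}(1-\lambda)^t$, which is the claim.

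I expect the main obstacle to be conceptual rather than computational: the bound is genuinely conditional on the process remaining in the ``uncertain'' regime. The contraction is valid only at those steps where some $\X_i$ satisfies $W(p_t,\x_i)\leq c^*$; once $W(p_t,\x_i)>c^*$ for \emph{all} $\X_i$, Lemma~\ref{Lemma:Incoherence} no longer certifies an informative query, and since version~2 has no $1$-neighbor fallback the per-step progress may fail. The clean exponential rate can therefore be asserted only over the initial span of time-steps in which the hypothesis persists, which is precisely the quantification in the statement. The remaining checks are routine: that the minimizer of $W$ inherits the bound $W(p_t,\cdot)\leq c^*$ (immediate), and that discarding the $\frac{1}{4}$ branch leaves the uncertain-set contraction factor of Lemma~\ref{Lemma:Supermartingale} unchanged.
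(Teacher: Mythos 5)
Your proposal is correct and follows essentially the same route as the paper, which presents this corollary as an immediate specialization of Theorem~\ref{Theo:Convergence}: since Algorithm~\ref{Alg:SimpleIRL} only ever queries the minimizer of $W(p_t,\cdot)$, only the first branch of the case analysis in the proof of Theorem~\ref{Theo:Convergence} applies, giving the per-step contraction $1-\eps\frac{1-c^*}{2}$, after which the unrolling and Markov-inequality steps are identical. Your closing remark that the bound is genuinely conditional on the process remaining in the regime where some $\X_i$ satisfies $W(p_t,\x_i)\leq c^*$ is a correct and slightly more explicit reading of the quantifier in the statement than the paper itself spells out.
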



\subsubsection*{Multiple Optimal Actions:} 

In our presentation so far, we assumed that $\R_\rho$ is such that, for any $r\in\R_\rho$ and any $x\in\X$, $\abs{\A_r(x)}=1$ (Assumption~\ref{Ass:OneAction}). Informally, this corresponds to assuming that, for every reward function considered, there is a single optimal action, $\pi^*(x)$, at each $x\in\X$. This assumption has been considered, either explicitly or implicitly, in several previous works on learning by demonstration \citep[see, for example, the work of][]{chernova09jair}. Closer to our own work on active IRL, several works recast IRL as a classification problem, focusing on deterministic policies $\pi_k:\X\to\A$ \citep{ng00icml,cohn11aaai,judah11icml,ross10aistats,ross11aistats} and therefore, although not explicitly, also consider a single optimal action in each state.

However, MDPs with multiple optimal actions per state are not uncommon (the scenarios considered in Section~\ref{Sec:Experiments}, for example, have multiple optimal actions per state). In this situation, the properties of the resulting algorithm do not follow from our previous analysis, since the existence of multiple optimal actions necessarily requires a more general noise model. The immediate extension of our noise model to a scenario where multiple optimal actions are allowed poses several difficulties, as optimal actions across policies may be sampled with different probabilities. 

In order to overcome such difficulty, we consider a more conservative Bayesian update, that enables a seamless generalization of our results to scenarios that admit multiple optimal actions in each state. Our update now arises from considering that the likelihood of observing an action from a set $\A_{\vec{h}}(x)$ at state $x$ is given by $\gammah(x)$. Equivalently, the likelihood of observing an action from $\A-\A_{\vec{h}}(x)$ is given by $\betah(x)=1-\gammah(x)$. As before, $\gamma^*$ and $\beta^*$ correspond to the values of $\gammah$ and $\betah$ for the target hypothesis, and we again let 
\begin{displaymath}
\alpha=\sup_{x\in\X}\beta^*(x).
\end{displaymath}
Such \emph{aggregated} noise model again enables the consideration of an approximate noise model that is constant across hypothesis, and is defined in terms of estimates $\hgamma(x)$ and $\hbeta(x)$ of $\gamma^*(x)$ and $\beta^*(x)$. Given the noise model just described, we get the Bayesian update
\begin{align}
\nonumber%
p_t(\vec{h})
\nonumber%
   &\triangleq\PP{\vec{h}^*=\vec{h}\mid\F_t}\\
\nonumber%
   &\propto\PP{a_t\in\A_{\vec{h}}\mid x_t,\F_{t-1}}\PP{\vec{h}^*=\vec{h}\mid\F_{t-1}}\\
\nonumber%
   &=\PP{a_t\in\A_{\vec{h}}\mid x_t}\PP{\vec{h}=\vec{h}^*\mid\F_{t-1}}\\
\label{Eq:Bayes-new}
   &\approx\hgamma(x)^{(1+h_t)/2}\hbeta(x)^{(1-h_t)/2}p_{t-1}(\vec{h}),
\end{align}
with $\hgamma(x)$ and $\hbeta(x)$ verifying \eqref{Eq:Steps}. This revised formulation implies that the updates to $p_t$ are more conservative, in the sense that they are slower to ``eliminate'' hypothesis from $\H$. However, all results for Algorithm~\ref{Alg:ActIRL} remain valid with the new values for $\hgamma$ and $\hbeta$.

\begin{algorithm}[!tb]
\footnotesize
\caption{GBS-IRL (version~3)}\label{Alg:SimpleIRL-2}
  \begin{algorithmic}[1]
    \REQUIRE MDP parameters $\M\backslash{r}$
    \REQUIRE Reward space $\R_\rho$
    \REQUIRE Prior distribution $p_0$ over $\R$
    \STATE Compute $\H$ from $\R_\rho$
    \STATE Determine partition $\Xi={\X_1,\ldots\X_N}$ of $\X$
    \STATE Set $\F_0=\emptyset$
    \FORALL{$t=0,\ldots$}
    \STATE Set $c_t=\min_{i=1,\ldots,N}W(p_t,\x_i)$
    \IF{$c_t<\hat{c}$}
    \STATE Sample $x_{t+1}$ from the set $\X_i$ that minimizes $W(p_t,\x_i)$.
    \ELSE
    \STATE Return $\hat{\vec{h}}_t=\argmax_{\vec{h}\in\H}p_t(\vec{h})$.
    \ENDIF
    \STATE Obtain noisy response $a_{t+1}$
    \STATE Set $\F_{t+1}\leftarrow\F_t\cup\set{(x_{t+1},a_{t+1})}$
    \STATE Update $p_{t+1}$ from $p_t$ using \eqref{Eq:Bayes-new}
    \ENDFOR
  \end{algorithmic}
\end{algorithm}

Unfortunately, by allowing multiple optimal actions per state, it is also much easier to find (non-degenerate) situations where $c^*=1$, in which case our bounds are void. However, if we focus on identifying, in each state, at least \emph{one optimal action}, we are able to retrieve some guarantees on the sample complexity of our active learning approach. We thus consider yet another version of GBS-IRL, described in Algorithm~\ref{Alg:SimpleIRL-2}, that uses uses a threshold $\hat{c}<1$ such that, if $W(p_t,\x_i)>\hat{c}$, we consider that (at least) one optimal action at $\x_i$ has been identified. Once this is done, it outputs the most likely hypothesis. Once at least one optimal action has been identified in all states, the algorithm stops. 

To analyze the performance of this version of GBS-IRL, let the set of \emph{predicted optimal actions at $x$} be defined as
\begin{displaymath}
\A_{\hat{c}}(p,x)=\set{a\in\A\mid\sum_{\vec{h}}p(\vec{h})h(x,a)>\hat{c}}.
\end{displaymath}
We have the following results.

\begin{theorem}[Consistency of GBS-IRL, version~3]\label{Theo:Consistency-general}
Consider any history of observations $\F_t=\set{(x_\tau,a_\tau),\tau=1,\ldots,t}$ from GBS-IRL. If, in the update \eqref{Eq:Bayes}, $\hbeta$ and $\hgamma$ verify \eqref{Eq:Steps} for all $\vec{h}\in\H$, then for any $a\in\A_{\hat{c}}(p, \x_i)$,
\begin{displaymath}
\lim_{t\to\infty}\mathbb{P}\big[h^*(\x_i,a)\neq1\big]=0.
\end{displaymath}
\end{theorem}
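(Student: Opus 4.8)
The plan is to reduce the claim of Theorem~\ref{Theo:Consistency-general} to the consistency result already obtained for the single-action setting (Theorem~\ref{Theo:Consistency}), exploiting the fact that the aggregated noise model of \eqref{Eq:Bayes-new} collapses the multiple-optimal-action case back into a binary-labeling problem at each queried state. The key observation is that, once we fix a state $\x_i$ and an action $a$, the quantity $\sum_{\vec{h}}p_t(\vec{h})h(\x_i,a)$ is a (signed) weighted vote, and the event $h^*(\x_i,a)\neq 1$ is equivalent to the target hypothesis disagreeing with the ``$a$ is optimal'' label at $\x_i$. So first I would set up the correspondence: for the purposes of analyzing a single component $(\x_i,a)$, the labels $h(\x_i,a)\in\{-1,1\}$ partition $\H$ into two groups, and the aggregated likelihood in \eqref{Eq:Bayes-new} acts on this binary label exactly as the single-action update \eqref{Eq:Bayes} does, with $\hgamma$ and $\hbeta$ verifying \eqref{Eq:Steps}.

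Second, I would invoke the supermartingale machinery underlying Theorem~\ref{Theo:Consistency} (the fundamental lemma, Lemma~\ref{Lemma:Supermartingale} in the appendix, which guarantees that $p_t(\vec{h}^*)$ is increasing in expectation under \eqref{Eq:Steps}). The crucial point is that the conditions \eqref{Eq:Steps} are assumed to hold \emph{for all} $\vec{h}\in\H$ in the hypothesis of this theorem, which is exactly what makes the conservative update \eqref{Eq:Bayes-new} a valid (expected-)contraction toward the truth even when actions are sampled with heterogeneous probabilities. From this I would conclude $\EE{p_t(\vec{h}^*)}\to 1$, and hence $p_t(\vec{h}^*)\to 1$ almost surely (or at least in probability), which drives $\PP{\hat{\vec{h}}_t\neq\vec{h}^*}\to 0$ in the same manner as Theorem~\ref{Theo:Consistency}.

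Third, I would translate this concentration of mass on $\vec{h}^*$ into the component-wise statement. If $a\in\A_{\hat{c}}(p_t,\x_i)$, then $\sum_{\vec{h}}p_t(\vec{h})h(\x_i,a)>\hat{c}$, meaning the hypotheses labeling $a$ as optimal carry overwhelming weight. As $p_t$ concentrates on $\vec{h}^*$, the vote $\sum_{\vec{h}}p_t(\vec{h})h(\x_i,a)$ converges to $h^*(\x_i,a)\in\{-1,1\}$; since this limit exceeds $\hat{c}>0$ asymptotically precisely on the event that the prediction agrees with the truth, the probability that $h^*(\x_i,a)\neq 1$ while $a$ remains a predicted optimal action vanishes. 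The cleanest route is a contrapositive bound: $\PP{h^*(\x_i,a)\neq 1}\le\PP{\hat{\vec{h}}_t\neq\vec{h}^*}+o(1)$ once the mass on $\vec{h}^*$ is large enough to force the sign of the weighted vote to match $h^*(\x_i,a)$.

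The main obstacle I anticipate is the third step: care is needed because $\A_{\hat{c}}(p,x)$ is defined with respect to the \emph{current} distribution $p_t$, so the set of ``predicted optimal actions'' is itself random and time-varying. The delicate part is arguing that, asymptotically, membership $a\in\A_{\hat{c}}(p_t,\x_i)$ forces agreement with $\vec{h}^*$ rather than with some spurious competing hypothesis; this hinges on showing that no mass persists on hypotheses that disagree with $\vec{h}^*$ at $(\x_i,a)$, which is exactly what the concentration of $p_t$ on $\vec{h}^*$ (via the supermartingale argument) delivers, provided \eqref{Eq:Steps} holds for all hypotheses so that the conservative update never inflates an incorrect hypothesis in expectation. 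Once the sign of the limiting vote is pinned to $h^*(\x_i,a)$, the stated limit follows.
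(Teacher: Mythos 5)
Your proposal is correct in substance and follows essentially the same route as the paper: both arguments combine the supermartingale lemma (Lemma~\ref{Lemma:Supermartingale}) with the observation that an action $a$ having $h^*(\x_i,a)=-1$ can only lie in $\A_{\hat{c}}(p_t,\x_i)$ if the posterior mass on $\vec{h}^*$ has dropped below $(1-\hat{c})/2$, after which a Markov-inequality bound and the expected contraction of the odds ratio finish the job. The one place where the paper is sharper than your step 3 is the packaging: it sets $\eps=1-\hat{c}$, defines $C_t=(\eps-p_t(\vec{h}^*))/p_t(\vec{h}^*)$, observes that the bad event is exactly $\set{C_t>1}$, and applies Markov plus Lemma~\ref{Lemma:Supermartingale} directly. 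Your ``contrapositive bound'' $\PP{h^*(\x_i,a)\neq 1}\leq\PP{\hat{\vec{h}}_t\neq\vec{h}^*}+o(1)$ is loose as stated, because the bad event is \emph{not} contained in the MAP-error event (the mass $p_t(\vec{h}^*)$ can fall below $(1-\hat{c})/2$ while $\vec{h}^*$ is still the argmax); the clean fix is to route the bound through the threshold on $p_t(\vec{h}^*)$ itself, which is precisely what the paper's modified $C_t$ encodes.
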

\begin{proof}
See Appendix~\ref{Proof:Consistency-general}.
\end{proof}

Note that the above result is no longer formulated in terms of the identification of the correct hypothesis, but in terms of the identification of the set of optimal actions. We also have the following result on the sample complexity of version 3 of GBS-IRL.

\begin{corollary}[Convergence Rate of GBS-IRL, version~3]\label{Cor:Convergence-2}
Let $\H$ denote our hypothesis space, defined over $\X$, and let $\hbeta(x)>\alpha$ in the update \eqref{Eq:Bayes-new}. Then, for all $t$ such that $W(p_t,\x_i)\leq c^*$ for some $\X_i$, and all $a\in\A_{\hat{c}}(p, \x_i)$,
\begin{displaymath}
\mathbb{P}\big[h^*(\x_i,a)\neq1\big]\leq\abs{\H}(1-\lambda)^t,\quad t=0,\ldots
\end{displaymath}
where $\lambda=\eps\frac{1-c^*}{2}$ and $\eps$ is defined in \eqref{Eq:eps} with the new values for $\hgamma$ and $\hbeta$.
\end{corollary}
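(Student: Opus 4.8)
The plan is to reduce the action-level error event of version~3 to a tail event on the posterior mass of the true hypothesis, and then to reuse the contraction machinery already established for Theorem~\ref{Theo:Convergence} and Corollary~\ref{Cor:Convergence}. The only genuinely new ingredient is the reduction: the exponential decay is inherited, because the aggregated update \eqref{Eq:Bayes-new} satisfies \eqref{Eq:Steps}, and the remark preceding Algorithm~\ref{Alg:SimpleIRL-2} guarantees that Lemma~\ref{Lemma:Supermartingale} and the analysis of Algorithm~\ref{Alg:ActIRL} carry over unchanged with the redefined $\hgamma$, $\hbeta$ and $\eps$. Throughout, the standing hypothesis $W(p_t,\x_i)\leq c^*$ for some $\X_i$ is precisely the regime in which Lemma~\ref{Lemma:Incoherence} furnishes an informative query, so the per-step contraction holds at rate $\lambda=\eps\frac{1-c^*}{2}$, exactly as in Corollary~\ref{Cor:Convergence}.

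First I would make the reduction precise. Fix $\X_i$ and an action $a\in\A_{\hat{c}}(p_t,\x_i)$, and split $\H$ according to the sign of $h(\x_i,a)$ into $\H^+=\set{\vec{h}\in\H\mid h(\x_i,a)=1}$ and $\H^-=\set{\vec{h}\in\H\mid h(\x_i,a)=-1}$. Writing $m^-=\sum_{\vec{h}\in\H^-}p_t(\vec{h})$ and using $h(\x_i,a)\in\set{-1,1}$ together with $\sum_{\vec{h}}p_t(\vec{h})=1$, one has $\sum_{\vec{h}}p_t(\vec{h})h(\x_i,a)=1-2m^-$, so the defining inequality $a\in\A_{\hat{c}}(p_t,\x_i)$ is \emph{exactly} $m^-<\tfrac{1-\hat{c}}{2}$. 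The error event $\set{h^*(\x_i,a)\neq1}$ coincides with $\set{\vec{h}^*\in\H^-}$, on which $p_t(\vec{h}^*)\leq m^-<\tfrac{1-\hat{c}}{2}\leq\tfrac12$. Hence, pathwise,
\begin{displaymath}
\set{h^*(\x_i,a)\neq1}\cap\set{a\in\A_{\hat{c}}(p_t,\x_i)}\subseteq\set{p_t(\vec{h}^*)<\tfrac{1-\hat{c}}{2}}.
\end{displaymath}
This also disposes of the measurability worry attached to the random set $\A_{\hat{c}}(p_t,\x_i)$: for each fixed $a$ the claim is read as a bound on the joint probability that $a$ is declared optimal yet is suboptimal under $\vec{h}^*$. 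Combining the inclusion with a tail bound on the posterior mass of $\vec{h}^*$ then yields
\begin{displaymath}
\mathbb{P}\big[h^*(\x_i,a)\neq1\big]\leq\mathbb{P}\big[p_t(\vec{h}^*)<\tfrac{1-\hat{c}}{2}\big]\leq\abs{\H}(1-\lambda)^t .
\end{displaymath}

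The step I expect to require the most care is the second inequality, because Corollary~\ref{Cor:Convergence} as stated bounds the \emph{MAP-decision} error $\mathbb{P}[\hat{\vec{h}}_t\neq\vec{h}^*]$, whereas the reduction produces the \emph{posterior-tail} event $\set{p_t(\vec{h}^*)<\tfrac{1-\hat{c}}{2}}$, and the two do not coincide (the tail event can occur while $\vec{h}^*$ is still the mode). I would therefore not invoke the corollary as a black box but return to the supermartingale estimate of Lemma~\ref{Lemma:Supermartingale} underlying its proof, which tracks the concentration of posterior mass on $\vec{h}^*$ and in fact controls $\mathbb{P}[p_t(\vec{h}^*)<\eta]$ for \emph{every} fixed $\eta\leq\tfrac12$, not merely the mode-crossing at $\tfrac12$. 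The one quantitative check is that shrinking the threshold from $\tfrac12$ to $\tfrac{1-\hat{c}}{2}$ does not inflate the constant $\abs{\H}$: since the bound arises by applying Markov's inequality to a normalized potential whose expectation contracts at rate $(1-\lambda)$, the prefactor $\tfrac{\eta}{1-\eta}$ is increasing in $\eta$, so it is largest at $\eta=\tfrac12$ and the bound at $\eta=\tfrac{1-\hat{c}}{2}$ (where $\hat{c}\in(0,1)$ forces $\tfrac{1-\hat{c}}{2}<\tfrac12$) is only tighter. With this monotonicity observation the constant $\abs{\H}$ is preserved and the corollary follows with $\lambda=\eps\frac{1-c^*}{2}$ and $\eps$ given by \eqref{Eq:eps} under the aggregated values of $\hgamma$ and $\hbeta$.
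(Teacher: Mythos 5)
Your proof is correct and follows essentially the same route as the paper: the paper's proof of Theorem~\ref{Theo:Consistency-general} performs the very same reduction of the event $\set{h^*(\x_i,a)\neq1}$ for $a\in\A_{\hat{c}}(p_t,\x_i)$ to the posterior-tail event $p_t(\vec{h}^*)<\tfrac{1-\hat{c}}{2}$, and then combines Markov's inequality with Lemma~\ref{Lemma:Supermartingale} and the per-step contraction estimate from the proof of Theorem~\ref{Theo:Convergence} restricted to the regime $W(p_t,\x_i)\leq c^*$. The only difference is cosmetic bookkeeping: the paper absorbs the threshold into a modified potential $C_t=\big((1-\hat{c})-p_t(\vec{h}^*)\big)/p_t(\vec{h}^*)$ so that Markov's inequality is applied at level $1$, whereas you keep the original potential and shift the Markov threshold, observing that the resulting prefactor is at most $1$.
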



\subsubsection*{Different Query Types:} 

Finally, it is worth noting that, in the presentation so far admits for queries such as \textit{``What is the optimal action in state $x$?''} However, it is possible to devise different types of queries (such as \textit{``Is action $a$ optimal in state $x$?''}) that enable us to recover the stronger results in Theorem~\ref{Theo:Convergence}. In fact, a query such as the one exemplified reduces the IRL problem to a \emph{binary classification problem} over $\X\times\A$, for which existing active learning methods such as the one of \cite{nowak11tit} can readily be applied.


\subsubsection*{Integrating Reward Feedback:} 

So far, we discussed one possible approach to IRL, where the agent is provided with a demonstration $\F_t=\set{(x_\tau,a_\tau),\tau=1,\ldots,t}$ consisting of pairs $(x_\tau,a_\tau)$ of states and corresponding actions. From this demonstration the agent must identify the underlying target task, represented as a reward function, $r^*$. We now depart from the Bayesian formalism introduced above and describe how reward information can also be integrated. 

With the addition of reward information, our demonstrations may now include state-reward pairs $(x_\tau, u_\tau)$, indicating that the reward in state $x_\tau$ takes the value $u_\tau$. This can be seen as a similar approach as those of \cite{thomaz08aij,knox10aamas} for reinforcement learning. The main difference is that, in the aforementioned works, actions are experienced by the learner who then receives rewards both from the environment and the teacher.  Another related approach is introduced by \cite{regan11ijcai}, in the context of reward design for MDPs.

As with action information, the demonstrator would ideally provide exact values for $r^*$. However, we generally allow the demonstration to include some level of noise, where
\begin{equation}\label{Eq:RewardDist}
\PP{u_\tau=u\mid x_\tau,r^*}\propto e^{(u-\rt(x))^2/\sigma},
\end{equation}
where $\sigma$ is a non-negative constant. As with policy information, reward information can be used to update $p_t(r_k)$ as
\begin{align*}
p_t(r_k)
   &\triangleq\PP{r^*=r_k\mid\F_t}\\
   &\propto\PP{u_t\mid x_t,r^*=r_k,\F_{t-1}}\PP{r^*=r_k\mid\F_{t-1}}\\
   &=\PP{u_t\mid x_t,r^*=r_k}\PP{r^*=r_k\mid\F_{t-1}}\\
   &\approx e^{(u_t-r_k(x))^2/\hsigma}p_{t-1}(r_k)
\end{align*}
where, as before, we allow for an inaccurate estimate $\hsigma$ of $\sigma$ such that $\hsigma\geq\sigma$. Given the correspondence between the rewards in $\R_\rho$ and the hypothesis in $\H$, the above Bayesian update can be used to seamlessly integrate reward information in our Bayesian IRL setting.

To adapt our active learning approach to accommodate for reward feedback, let 
\begin{displaymath}
x_{t+1}=\argmin_{\X_i,i=1,\ldots,N}W(p_t,\x_i).
\end{displaymath}
\ie $x_{t+1}$ is the state that would be queried by Algorithm~\ref{Alg:ActIRL} at time-step $t+1$. If the user instead wishes to provide reward information, we would like to replace the query $x_{t+1}$ by some alternative query $x'_{t+1}$ that disambiguates as much as possible the actions in state $x_{t+1}$---much like a direct query to $x_{t+1}$ would. 

To this purpose, we partition the space of rewards, $\R_\rho$, into $\abs{\A}$ or less disjoint sets $\R_1,\ldots,\R_{\abs{\A}}$, where each set $\R_a$ contains precisely those rewards $r\in\R_\rho$ for which $\pi_r(x_{t+1})=a$. We then select the state $x'_{t+1}\in\X$, the reward at which best discriminates between the sets $\R_1,\ldots,\R_{\abs{\A}}$. The algorithm will then query the demonstrator for the reward at this new state.

In many situations, the rewards in $\R_\rho$ allow only poor discrimination between the sets $\R_1,\ldots,\R_{\abs{\A}}$. This is particularly evident if the reward is sparse, since after a couple informative reward samples, all other states contain similar reward information. In Section~\ref{Sec:Experiments} we illustrate this inconvenience, comparing the performance of our active method in the presence of both sparse and dense reward functions.


\section{Experimental Results}%
\label{Sec:Experiments}

This section illustrates the application of GBS-IRL in several problems of different complexity. It also features a comparison with other existing methods from the active IRL literature.


\subsection{GBS-IRL}
\label{Subsec:Active}

In order to illustrate the applicability of our proposed approach, we conducted a series of experiments where GBS-IRL is used to determine the (unknown) reward function for some underlying MDP, given a perturbed demonstration of the corresponding policy. 

In each experiment, we illustrate and discuss the performance of GBS-IRL. The results presented correspond to averages over 200 independent Monte-Carlo trials, where each trial consists of a run of 100 learning steps, in each of which the algorithm is required to select one state to query and is provided the corresponding action. GBS-IRL is initialized with a set $\R_\rho$ of 500 independently generated random rewards. This set always includes the correct reward, $r^*$ and the remaining rewards are built to have similar range and sparsity as that of $r^*$. 

The prior probabilities, $p_0(r)$, are proportional to the level of sparsity of each reward $r$. This implies that some of the random rewards in $\R_\rho$ may have larger prior probability than $r^*$. For simplicity, we considered an exact noise model, \ie $\hbeta=\beta^*$ and $\hgamma=\gamma^*$, where $\beta^*(x)\equiv0.1$ and $\gamma^*(x)\equiv0.9$, for all $x\in\X$. 

For comparison purposes, we also evaluated the performance of other active IRL approaches from the literature, to know:
\begin{itemize}
\item The \emph{imitation query-by-committee} algorithm (IQBC) of \cite{judah11icml}, that uses an entropy-based criterion to select the states to query. \item The \emph{expected myopic gain} algorithm (EMG) of \cite{cohn11aaai}, that uses a criterion based on the expected gain of information to select the states to query. 
\end{itemize}
As pointed out in Section~\ref{Subsec:Related}, IQBC is, in its core, very similar to GBS-IRL, the main differences being in terms of the selection criterion and of the fact that the IQBC is able to accommodate the notion of ``bad states''. Since this notion is not used in our examples, we expect the performance of both methods to be essentially similar. 

As for EMG, this algorithm queries the expert for the correct action in those states where the expected gain of information is potentially larger \citep{cohn11aaai}. This requires evaluating, for each state $x\in\X$ and each possible outcome, the associated gain of information. Such method is, therefore, fundamentally different from GBS-IRL and we expect this method to yield crisper differences from our own approach. Additionally, the above estimation is computationally heavy, as (in the worst case) requires the evaluation of an MDP policy for each state-action pair.


\subsubsection*{Small-sized random MDPs}

In the first set of experiments, we evaluate the performance of GBS-IRL in several small-sized MDPs with no particular structure (both in terms of transitions and in terms of rewards). Specifically, we considered MDPs where $\abs{\X}=10$ and either $\abs{\A}=5$ or $\abs{\A}=10$. For each MDP size, we consider 10 random and independently generated MDPs, in each of which we conducted 200 independent learning trials. This first set of experiments serves two purposes. On one hand, it illustrates the applicability of GBS-IRL in arbitrary settings, by evaluating the performance of our method in random MDPs with no particular structure. On the other hand, these initial experiments also enable a quick comparative analysis of GBS-IRL against other relevant methods from the active IRL literature.

\begin{figure}[!tb]
\centering
  \subfigure[]{\label{Fig:Policy-10-5}
    \includegraphics[width=0.45\columnwidth]{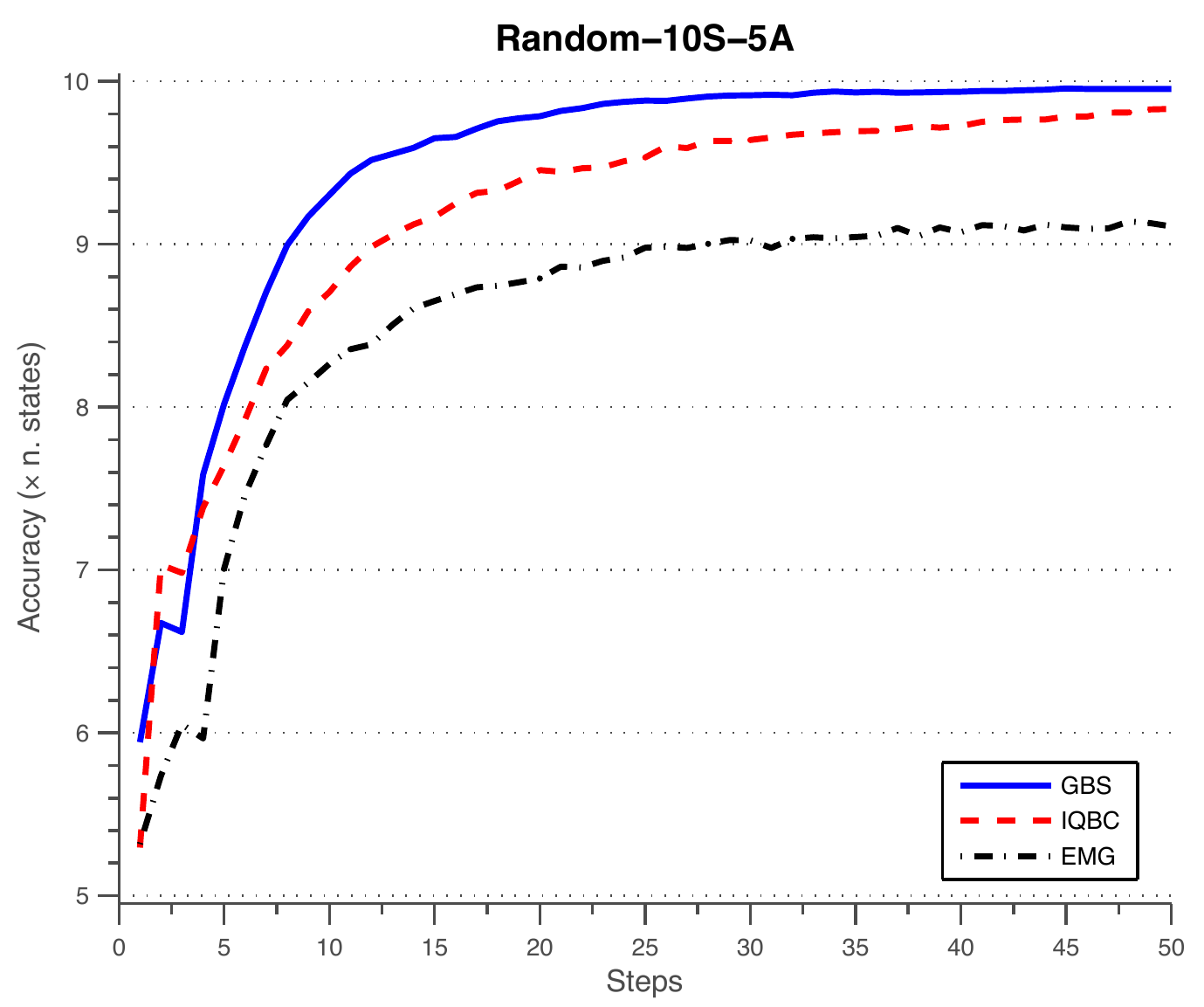}
  }\hfill
  \subfigure[]{\label{Fig:Value-10-5}
    \includegraphics[width=0.45\columnwidth]{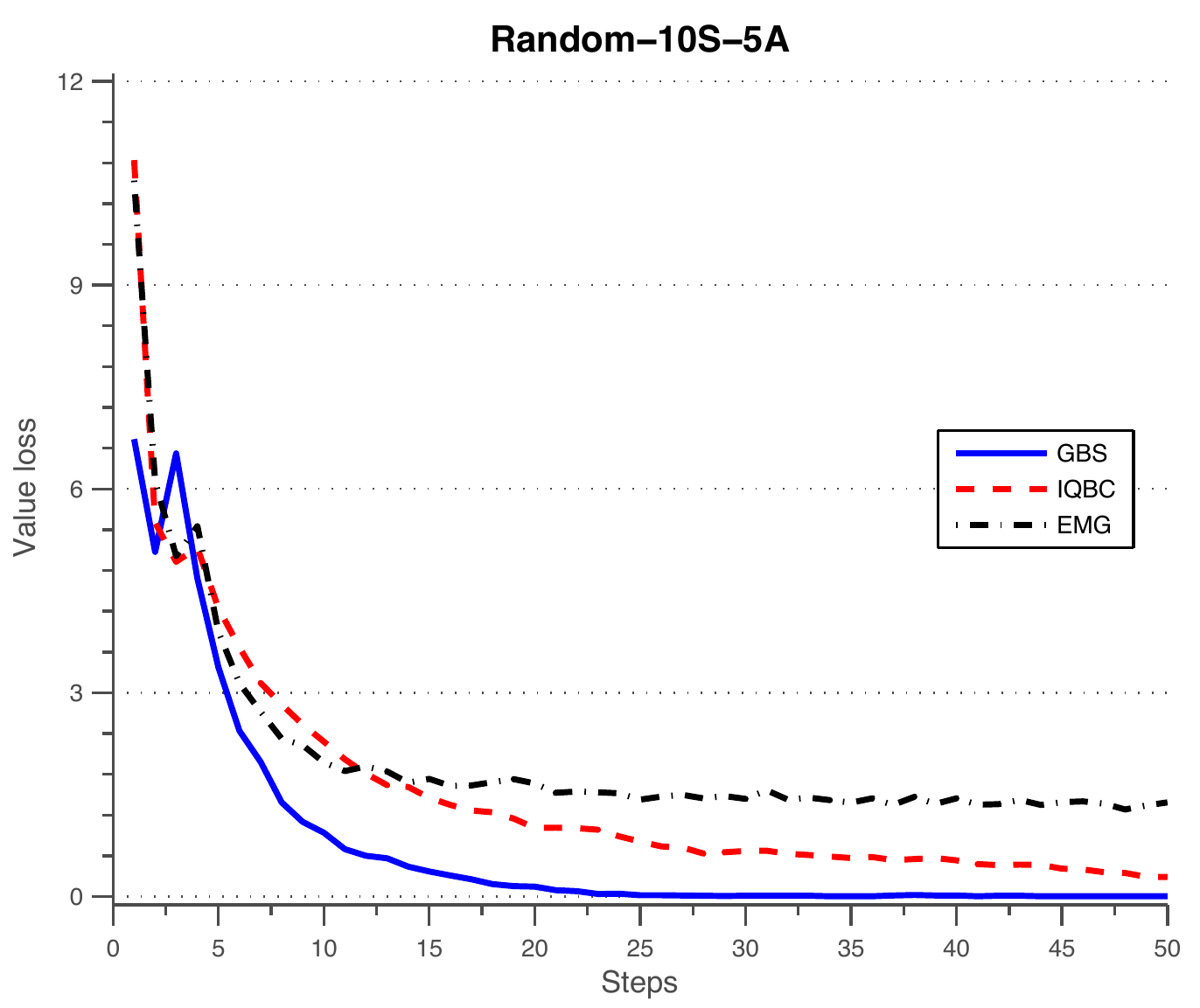}
  }
  \caption{Performance of all methods in random MDPs with $\abs{\X}=10$ and $\abs{\A}=5$.}
  \label{Fig:Results-10-5}
\end{figure}

\begin{figure}[!tb]
\centering
  \subfigure[]{\label{Fig:Policy-10-10}
    \includegraphics[width=0.45\columnwidth]{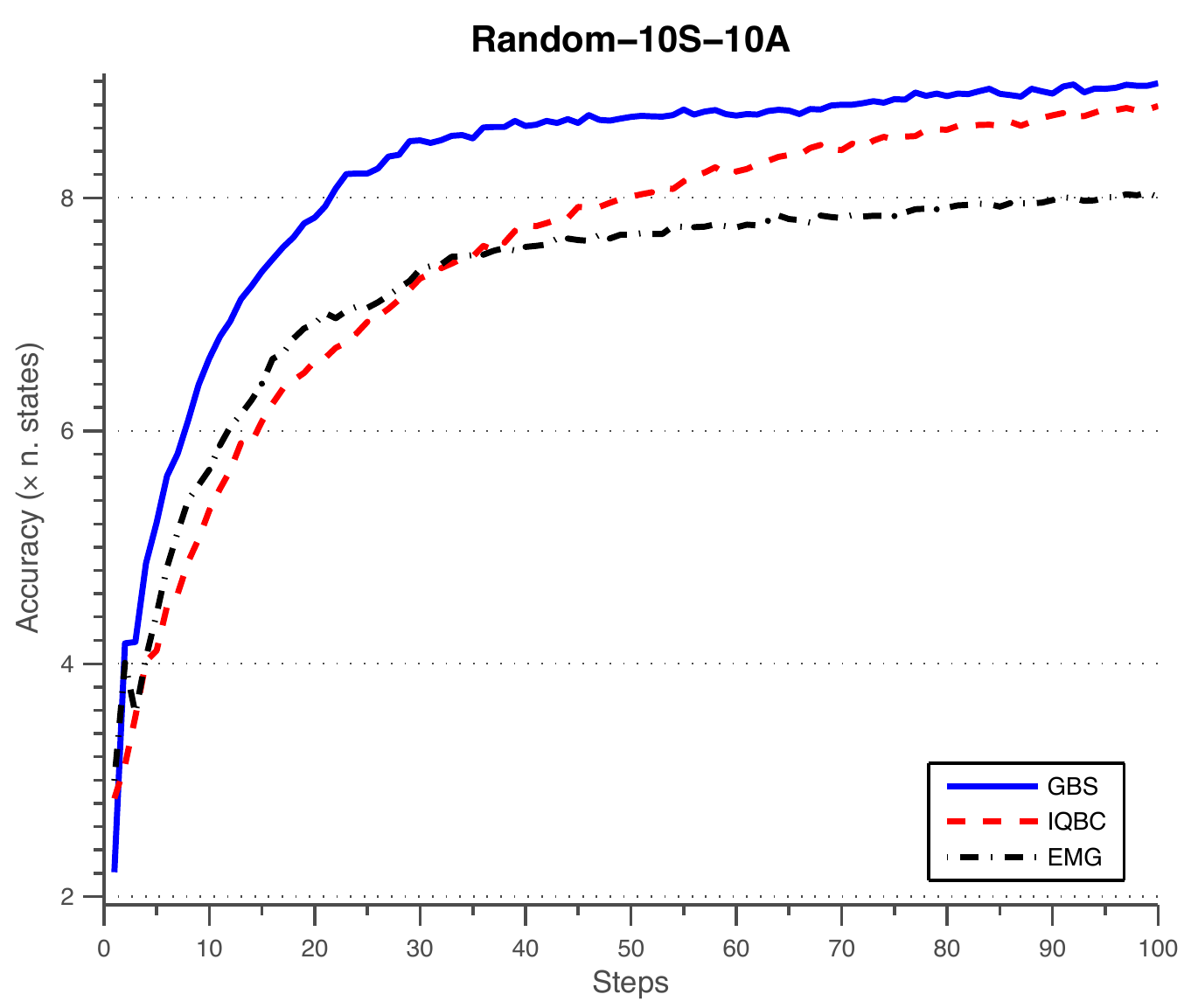}
  }\hfill
  \subfigure[]{\label{Fig:Value-10-10}
    \includegraphics[width=0.45\columnwidth]{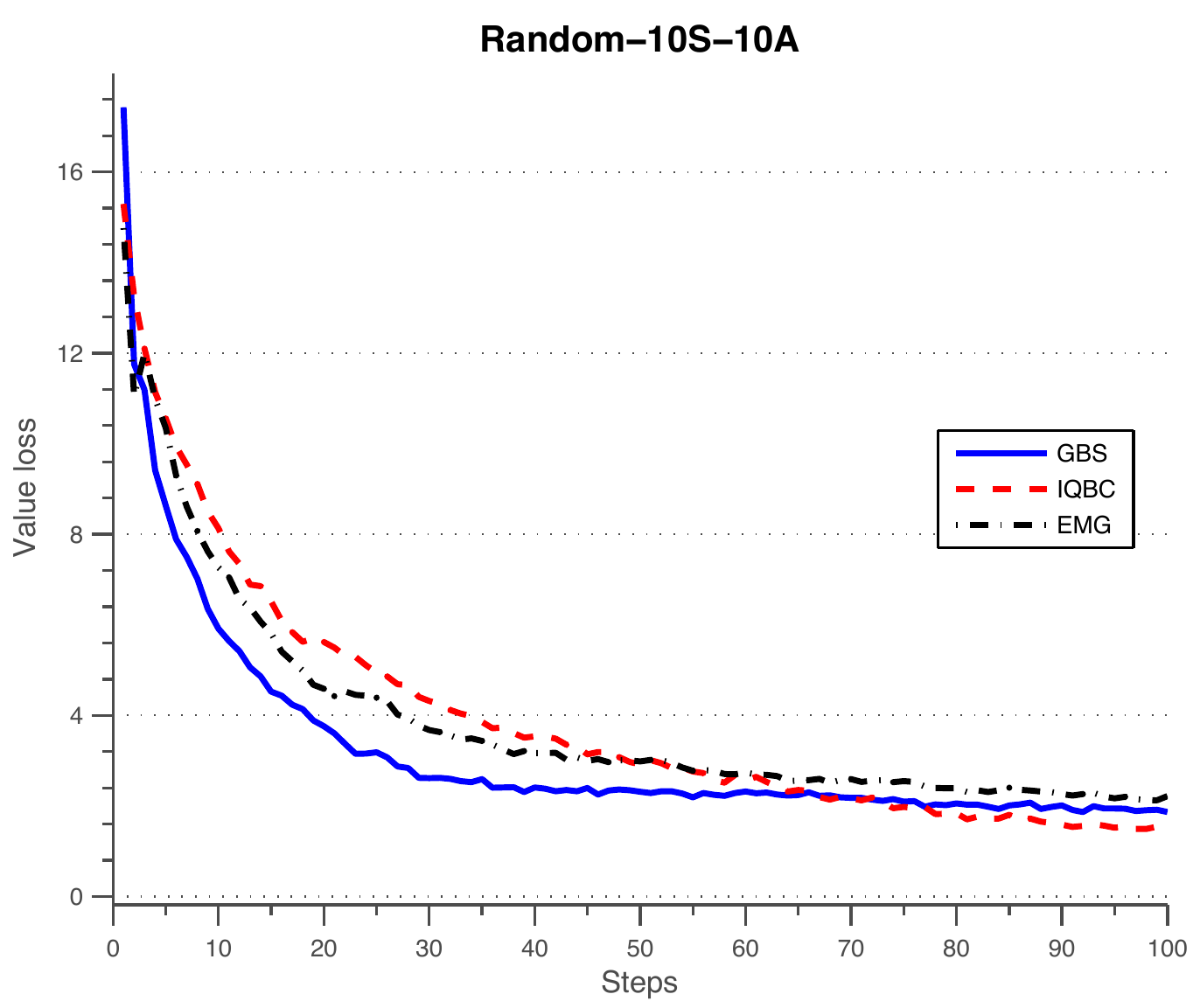}
  }
  \caption{Performance of all methods in random MDPs with $\abs{\X}=10$ and $\abs{\A}=10$.}
  \label{Fig:Value-10}
\end{figure}

Figures~\ref{Fig:Policy-10-5} and \ref{Fig:Policy-10-10} depict the learning curve for all three methods in terms of policy accuracy. The performance of all three methods is essentially similar in the early stages of the learning process. However, GBS-IRL slightly outperforms the other two methods, although the differences from IQBC are, as expected, smaller than those from EMG.

While policy accuracy gives a clear view of the learning performance of the algorithms, it conveys a less clear idea on the ability of the learned policies to complete the task intended by the demonstrator. To evaluate the performance of the three learning algorithms in terms of the target task, we also measured the loss of the learned policies with respect to the optimal policy. Results are depicted in Figs.~\ref{Fig:Value-10-5} and \ref{Fig:Value-10-10}. These results also confirm that the performance of GBS-IRL is essentially similar. In particular, the differences observed in terms of policy accuracy have little impact in terms of the ability to perform the target task competently. 

\begin{figure}[!tb]
\centering
  \includegraphics[width=0.6\columnwidth]{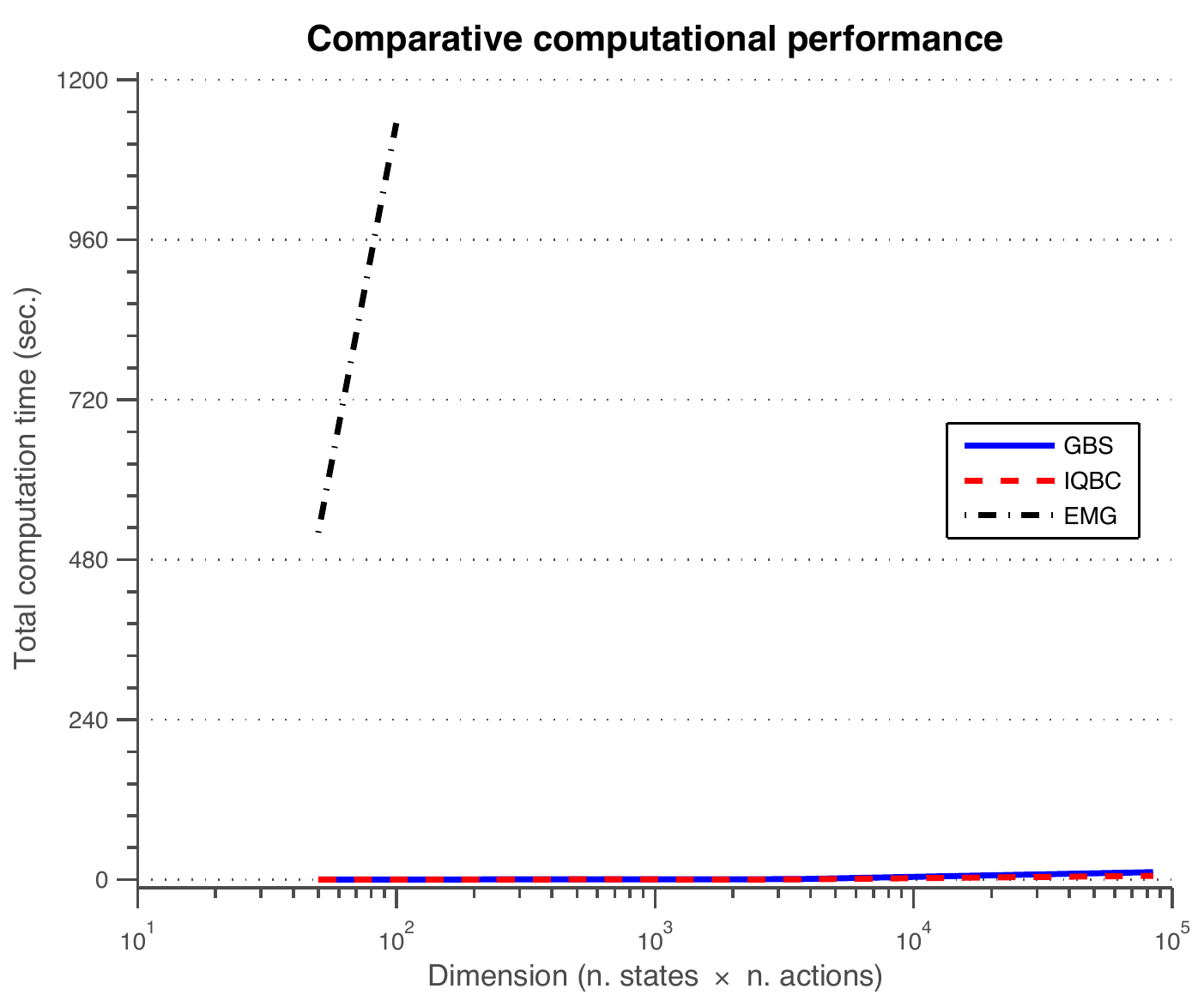}
  \caption{Average (total) computational time for problems of different dimensions.}
  \label{Fig:Time}
\end{figure}

To conclude this section, we also compare the computation time for all methods in these smaller problems. The results are depicted in Fig.~\ref{Fig:Time}. We emphasize that the results portrayed herein are only indicative, as all algorithms were implemented in a relatively straightforward manner, with no particular concerns for optimization. Still, the comparison does confirm that the computational complexity associated with EMG is many times superior to that involved in the remaining methods. This, discussed earlier, is due to the heavy computations involved in the estimation of the expected myopic gain, which grows directly with the size of $\abs{\X}\times\abs{\A}$. This observation is also in line with the discussion already found in the original work of \cite{cohn11aaai}.


\subsubsection*{Medium-sized random MDPs}

In the second set of experiments, we investigate how the performance of GBS-IRL is affected by the dimension of the domain considered. To this purpose, we evaluate the performance of GBS-IRL in arbitrary medium-sized MDPs with no particular structure (both in terms of transitions and in terms of rewards). Specifically, we now consider MDPs where either $\abs{\X}=50$ or $\abs{\X}=100$, and again take either $\abs{\A}=5$ or $\abs{\A}=10$. For each MDP size, we consider 10 random and independently generated MDPs, in each of which we conducted 200 independent learning trials. 

%

\begin{figure}[!tb]
\centering
  \subfigure[Policy accuracy ($50\times5$)]{\label{Fig:Policy-50-5}
    \includegraphics[width=0.4\columnwidth]{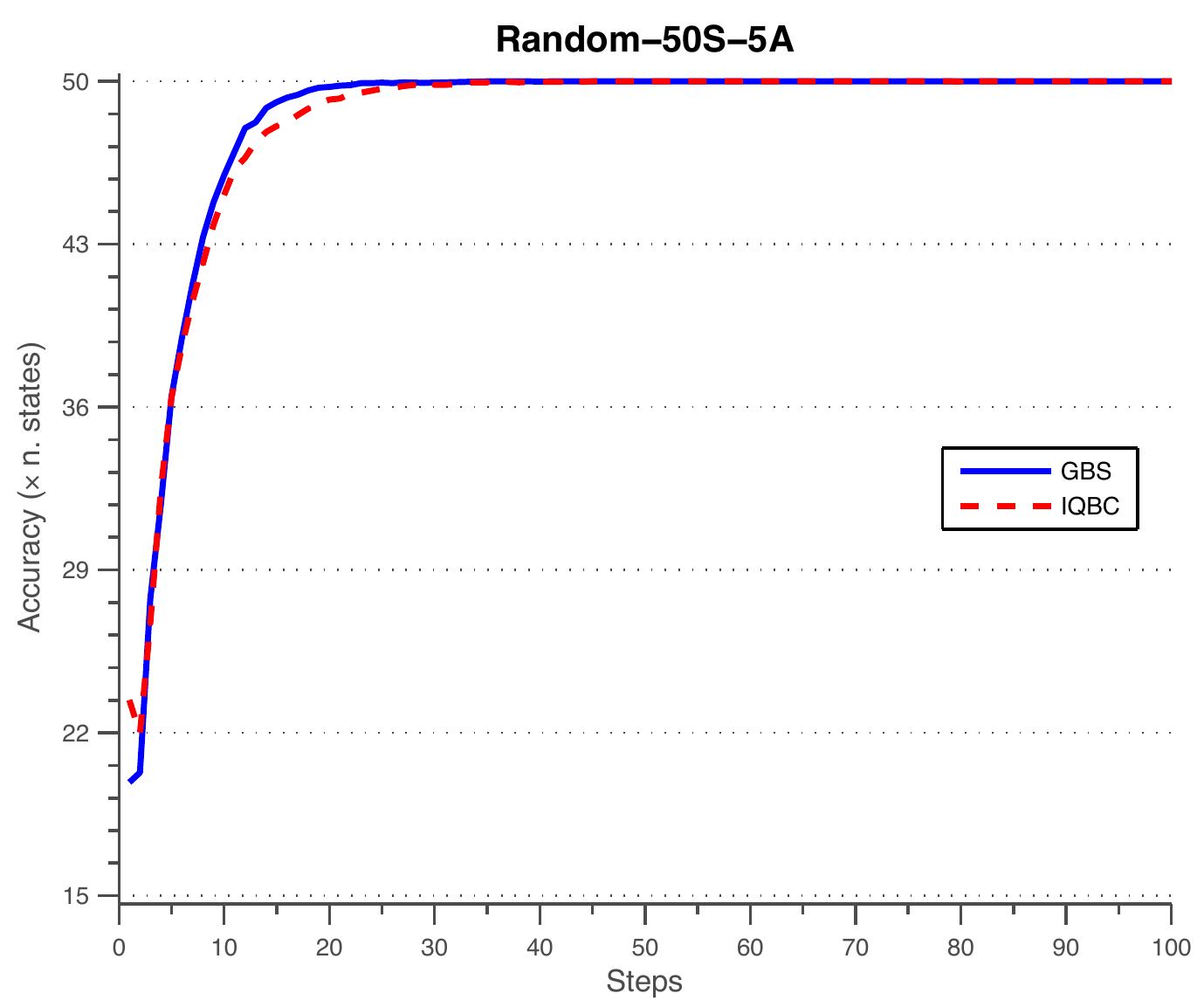}
  }\hfill
  \subfigure[Policy accuracy ($100\times5$)]{\label{Fig:Policy-100-5}
    \includegraphics[width=0.4\columnwidth]{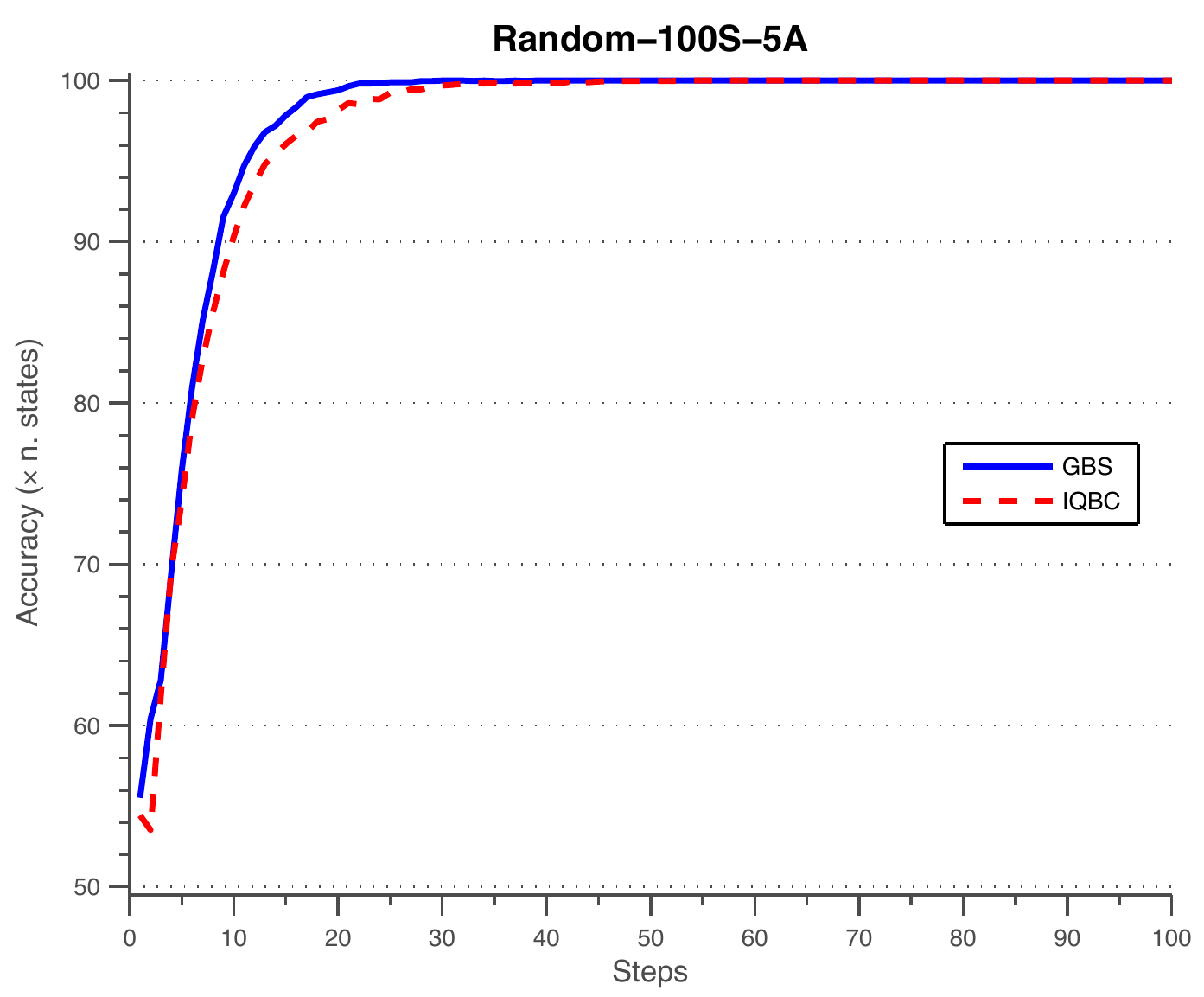}
  }\\
  \subfigure[Policy accuracy ($50\times10$)]{\label{Fig:Policy-50-10}
    \includegraphics[width=0.4\columnwidth]{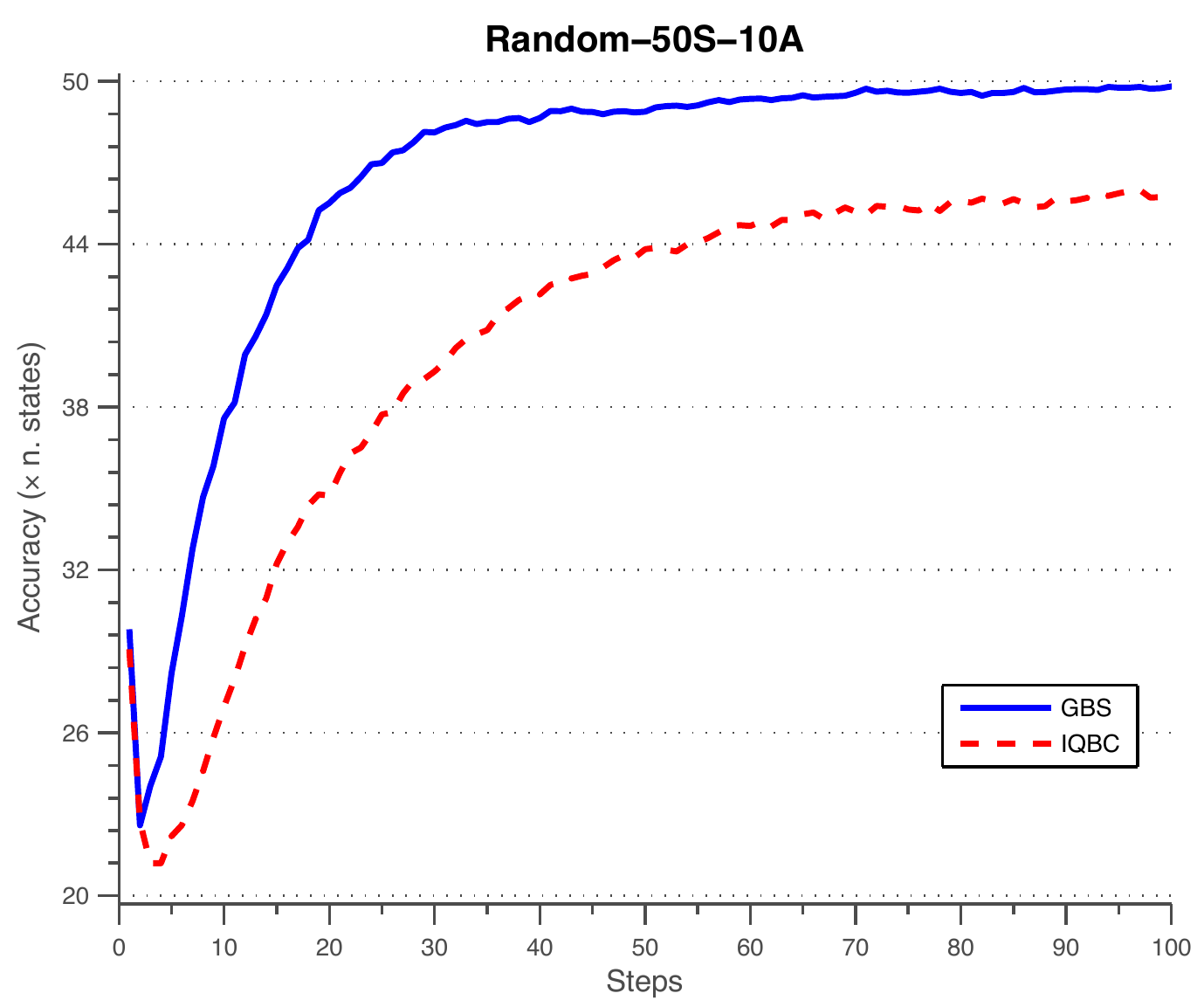}
  }\hfill
  \subfigure[Value perf.\ ($50\times5$)]{\label{Fig:Value-50-5}
    \includegraphics[width=0.4\columnwidth]{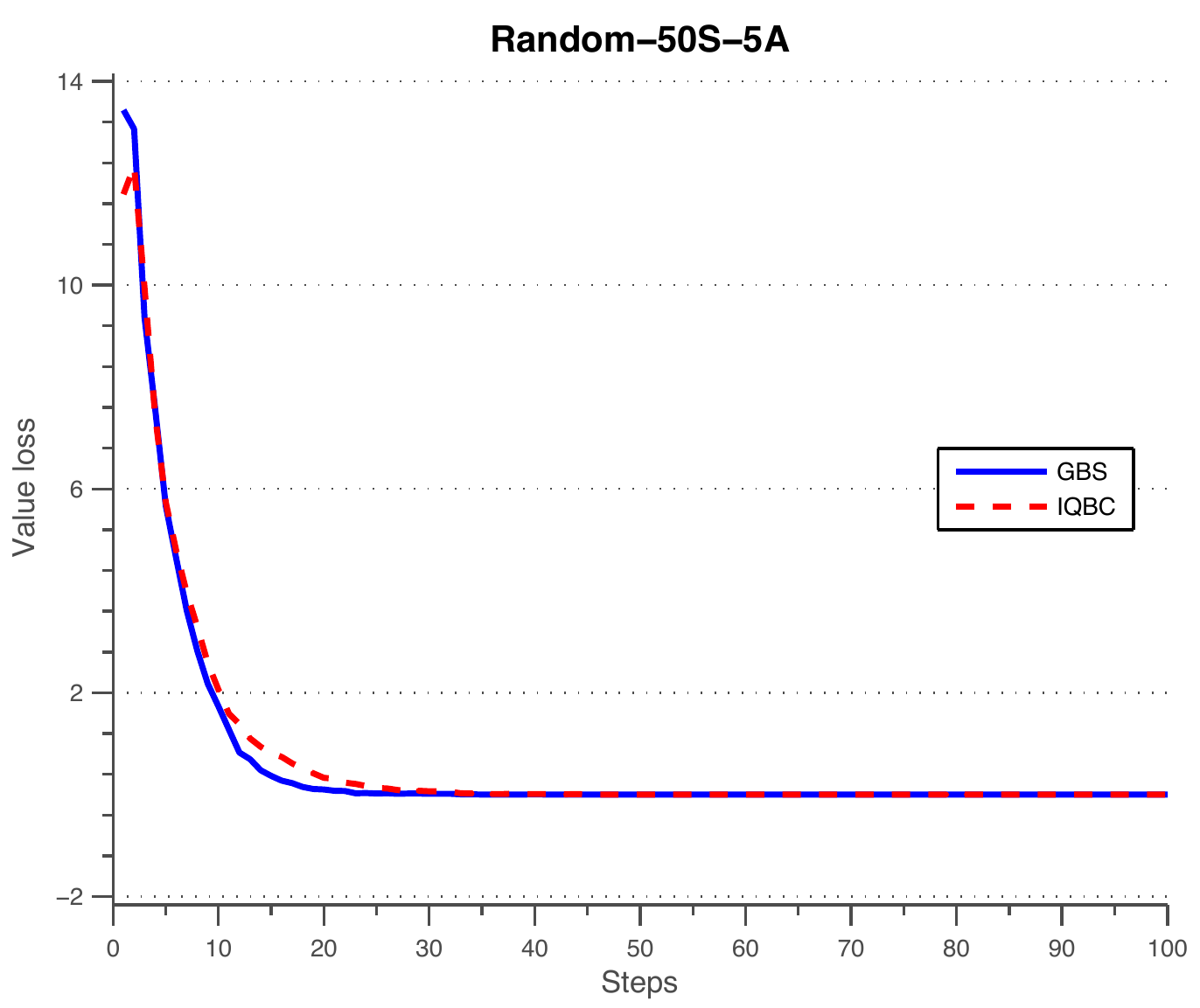}
  }\\
  \subfigure[Value perf.\ ($100\times5$)]{\label{Fig:Value-100-5}
    \includegraphics[width=0.4\columnwidth]{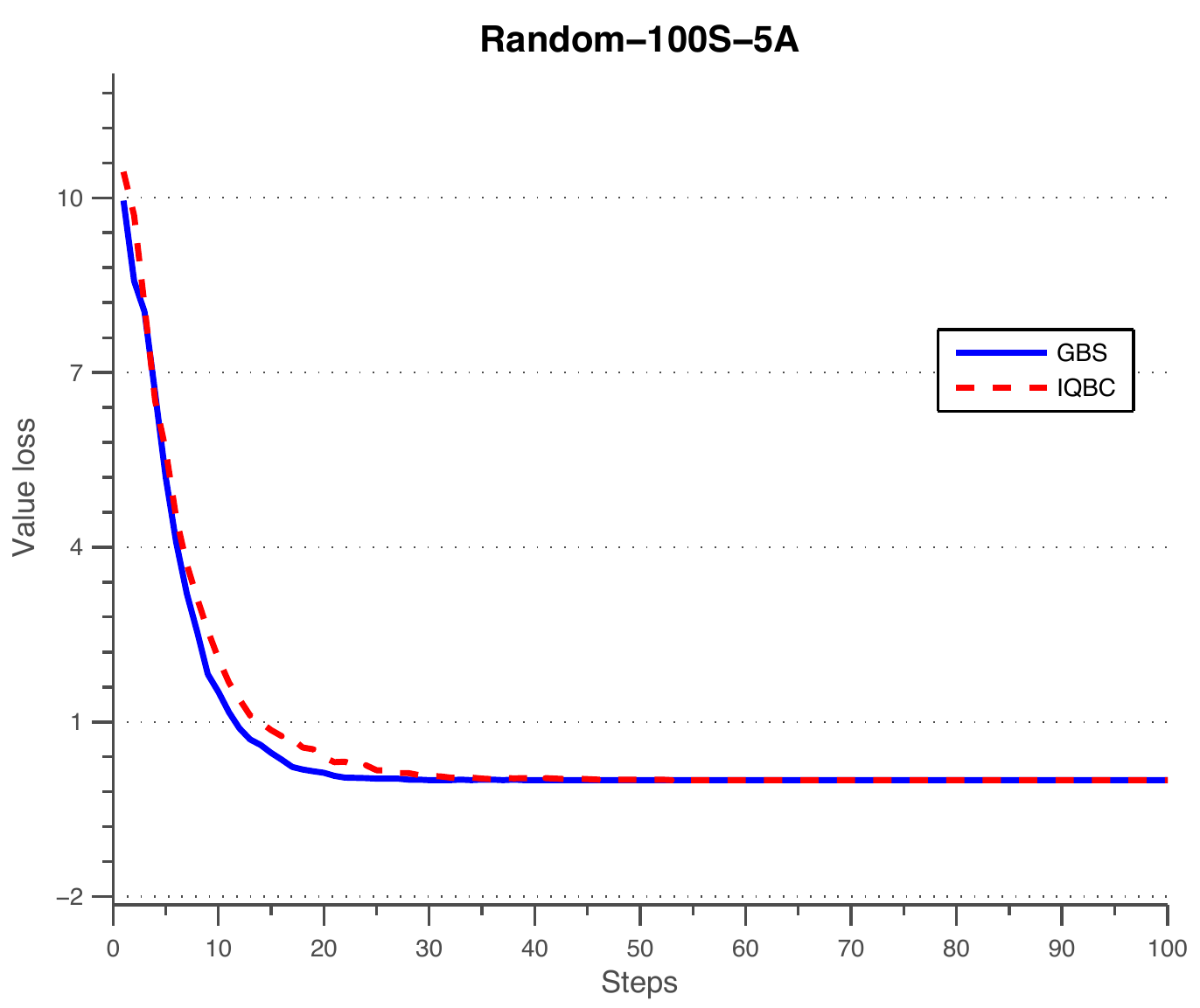}
  }\hfill
  \subfigure[Value perf.\ ($50\times10$)]{\label{Fig:Value-50-10}
    \includegraphics[width=0.4\columnwidth]{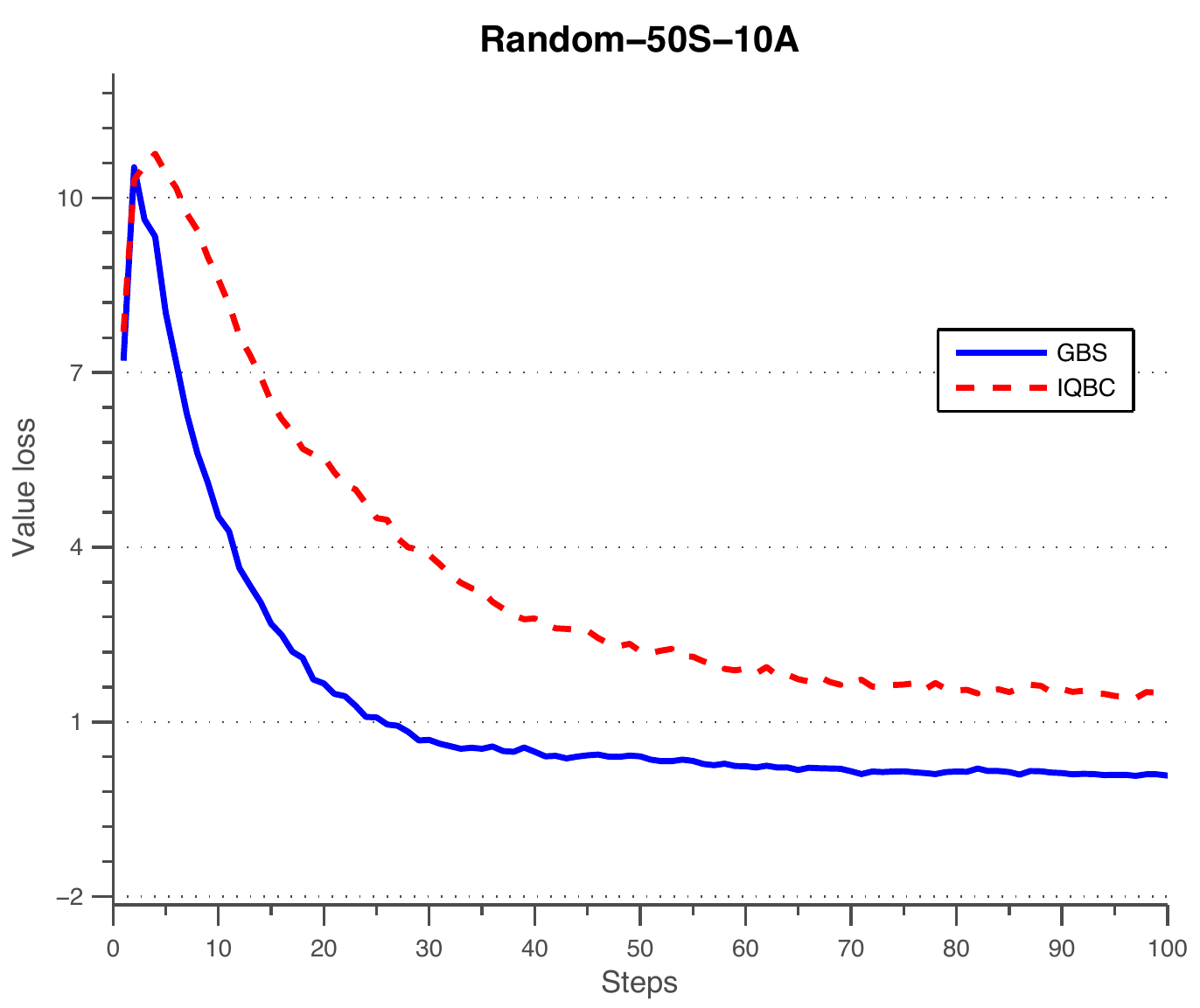}
  }\\
  \subfigure[Policy accuracy ($100\times10$)]{\label{Fig:Policy-100-10}
    \includegraphics[width=0.4\columnwidth]{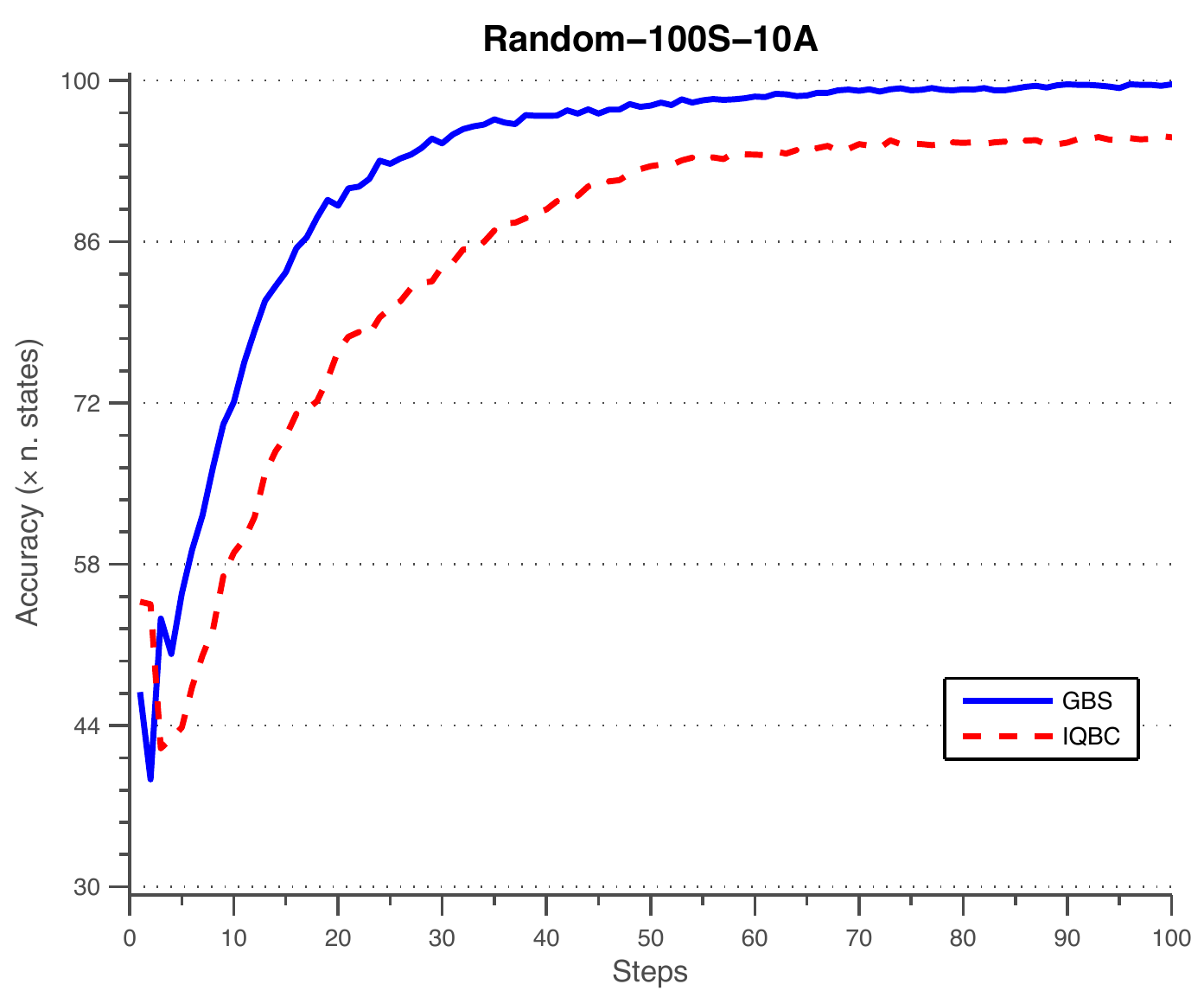}
  }\hfill
  \subfigure[Value perf.\ ($100\times10$)]{\label{Fig:Value-100-10}
    \includegraphics[width=0.4\columnwidth]{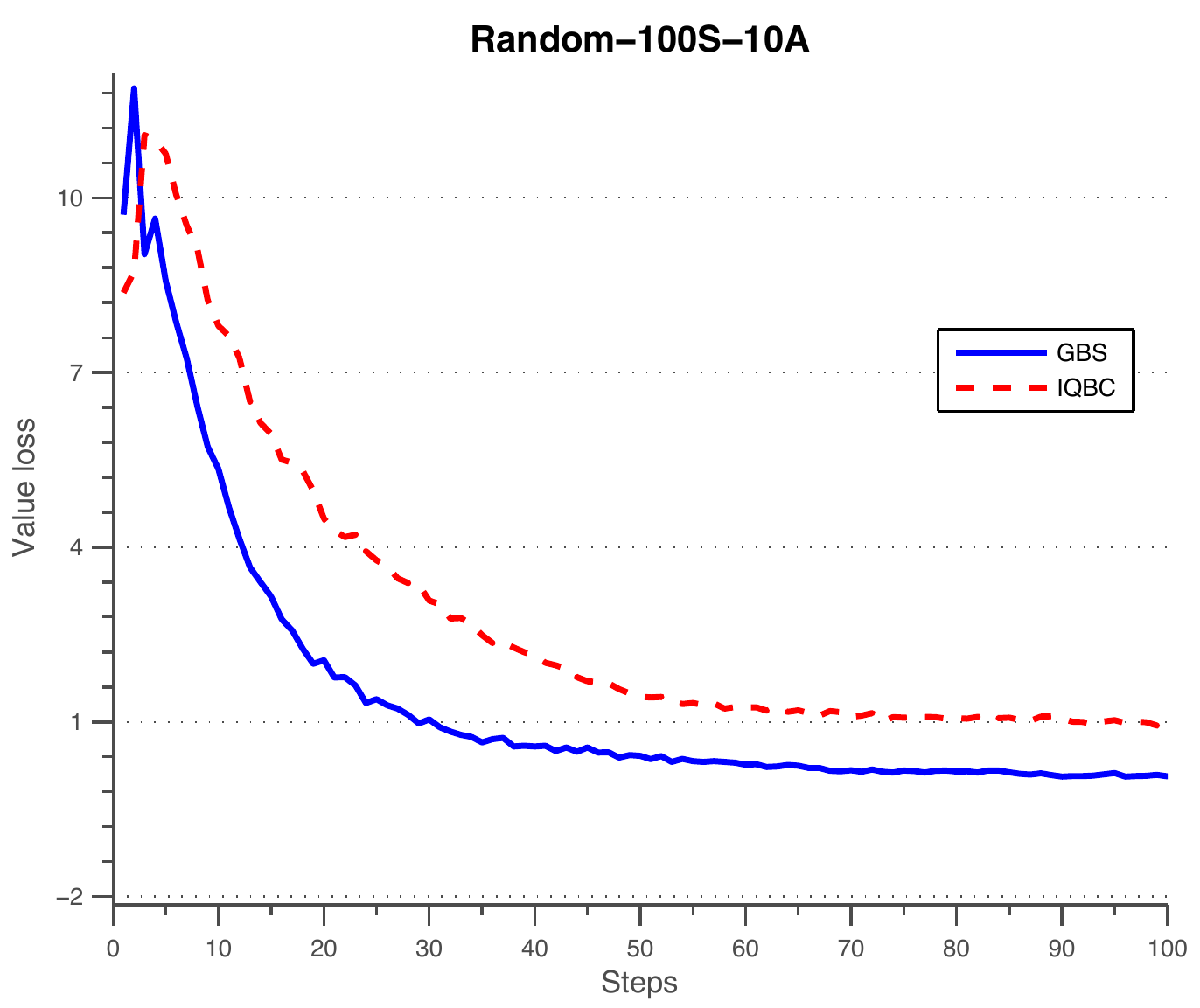}
  }
  \caption{Classification and value performance of GBS-IRL and IQBC in medium-sized random MDPs. Solid lines correspond to GBS-IRL, and dotted lines correspond to IQBC. \subref{Fig:Policy-50-5}-\subref{Fig:Policy-100-10} Classification performance. \subref{Fig:Value-50-5}-\subref{Fig:Value-100-10} Value performance. The indicated values correspond to the dimensions $\abs{\X}\times\abs{\A}$ of the MDPs.}
  \label{Fig:Results-50-100}
\end{figure}

Given the results in the first set of experiments and the computation time already associated with EMG, in the remaining experiments we opted by comparing GBS-IRL with IQBC only. The learning curves in terms both of policy accuracy and task execution are depicted in Fig.~\ref{Fig:Results-50-100}.

In this set of experiments we can observe that the performance of IQBC appears to deteriorate more severely with the number of actions than that of GBS-IRL. Although not significantly, this tendency could already be observed in the smaller environments (see, for example, Fig.~\ref{Fig:Value-10-10}). This dependence on the number of actions is not completely unexpected. In fact, IQBC queries states $x$ that maximize
\begin{displaymath}
VE(x)=-\sum_{a\in\A}\frac{n_{\H}(x,a)}{\abs{\H}}\log\frac{n_{\H}(x,a)}{\abs{\H}},
\end{displaymath}
where $n_{\H}(x,a)$ is the number of hypothesis $\vec{h}\in\H$ such that $a\in\A_{\vec{h}}(x)$. Since the disagreement is taken over the set of all possible actions, there is some dependence of the performance of IQBC on the number of actions. 

GBS-IRL, on the other hand, is more focused toward identifying \emph{one} optimal action per state. This renders our approach less sensitive to the number of actions, as can be seen in Corollaries~\ref{Cor:SampleComplexity} through \ref{Cor:Convergence-2} and illustrated in Fig.~\ref{Fig:Results-50-100}.


\subsubsection*{Large-sized structured domains}

So far, we have analyzed the performance of GBS-IRL in random MDPs with no particular structure, both in terms of transition probabilities and reward function. In the third set of experiments, we look further into the scalability of GBS-IRL by considering large-sized domains. We consider more structured problems selected from the IRL literature. In particular, we evaluate the performance of GBS-IRL in the \emph{trap-world}, \emph{puddle-world} and \emph{driver} domains.

\begin{figure}[!tb]
\begin{minipage}{0.45\columnwidth}
  \centering
  \includegraphics[width=0.67\columnwidth]{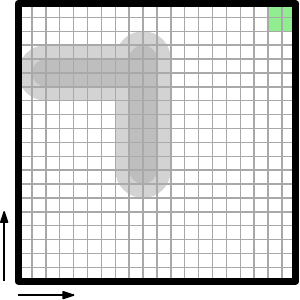}
  \caption{The puddle-world domain \citep{boyan95nips}.}
  \label{Fig:Puddle-world}
\end{minipage}
  \hfill
\begin{minipage}{0.45\columnwidth}
  \centering
  \includegraphics[width=\columnwidth]{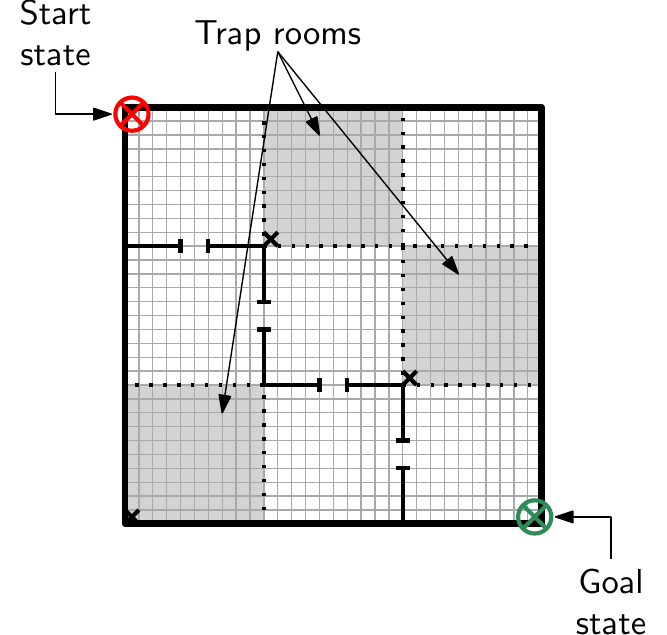}
  \caption{The trap-world domain \citep{judah11icml}.}
  \label{Fig:Trap-world}
\end{minipage}
\end{figure}

The \emph{puddle-world} domain was introduced in the work of \cite{boyan95nips}, and is depicted in Fig.~\ref{Fig:Puddle-world}. It consists of a $20\times20$ grid-world in which two ``puddles'' exist (corresponding to the darker cells). When in the puddle, the agent receives a penalty that is proportional to the squared distance to the nearest edge of the puddle, and ranges between  $0$ and $-1$. The agent must reach the goal state in the top-right corner of the environment, upon which it receives a reward of $+1$. We refer to the original description of \cite{boyan95nips} for further details.

This domain can be described by an MDP with $\abs{\X}=400$ and $\abs{\A}=4$, where the four actions correspond to motion commands in the four possible directions. Transitions are stochastic, and can be described as follows. After selecting the action corresponding to moving in direction $d$, the agent will roll back one cell (\ie move in the direction $-d$) with a probability $0.06$. With a probability $0.24$ the action will fail and the agent will remain in the same position. The agent will move to the adjacent position in direction $d$ with probability $0.4$. With a probability $0.24$ it will move two cells in direction $d$, and with probability $0.06$ it will move three cells in direction $d$. We used a discount $\gamma=0.95$ for the MDP (not to be confused with the noise parameters, $\hgamma(x)$). 

The \emph{trap-world} domain was introduced in the work of \cite{judah11icml}, and is depicted in Fig.~\ref{Fig:Trap-world}. It consists of a $30\times30$ grid-world separated into 9 rooms. Darker rooms correspond to \emph{trap rooms}, from which the agent can only leave by reaching the corresponding bottom-left cell (marked with a ``$\times$''). Dark lines correspond to walls that the agent cannot traverse. Dotted lines are used to delimit the trap-rooms from the safe rooms but are otherwise meaningless. The agent must reach the goal state in the bottom-right corner of the environment. We refer to the work of \cite{judah11icml} for a more detailed description.

This domain can be described by an MDP with $\abs{\X}=900$ and $\abs{\A}=4$, where the four actions correspond to motion commands in the four possible directions. Transitions are deterministic. The target reward function $r^*$ is everywhere 0 except on the goal, where $r^*(x_{\rm goal})=1$. We again used a discount $\gamma=0.95$ for the MDP. 

\begin{figure}[!tb]
\centering
  \includegraphics[width=0.6\columnwidth]{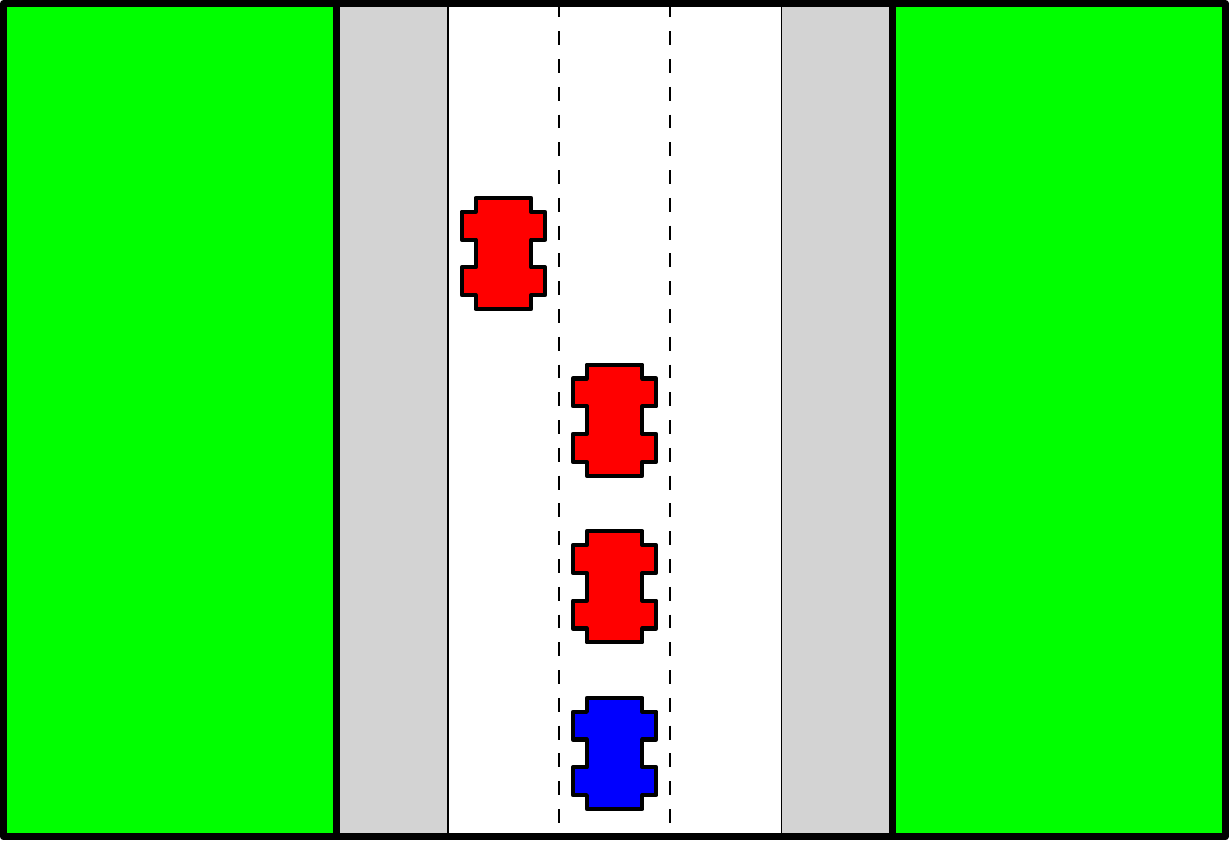}
  \caption{The driver-world domain \citep{abbeel04icml}.}
  \label{Fig:Driver-world}
\end{figure}

Finally, the \emph{driver domain} was introduced in the work of \cite{abbeel04icml}, an instance of which is depicted in Fig.~\ref{Fig:Driver-world}. In this environment, the agent corresponds to the driver of the blue car at the bottom, moving at a speed greater than all other cars. All other cars move at constant speed and are scattered across the three central lanes. The goal of the agent is to drive as safely as possible---\ie avoid crashing into other cars, turning too suddenly and, if possible, driving in the shoulder lanes.

For the purposes of our tests, we represented the driver domain as an MDP with $\abs{\X}=16,875$ and $\abs{\A}=5$, where the five actions correspond to driving the car into each of the 5 lanes. Transitions are deterministic. The target reward function $r^*$ penalizes the agent with a value of $-10$ for every crash, and with a value of $-1$ for driving in the shoulder lanes. Additionally, each lane change costs the agent a penalty of $-0.1$. As in the previous scenarios, we used a discount $\gamma=0.95$ for the MDP.

\begin{figure}[!tb]
\centering
  \subfigure[Policy accur.\ (puddle-w.).]{\label{Fig:Policy-puddle}
    \includegraphics[width=0.4\columnwidth]{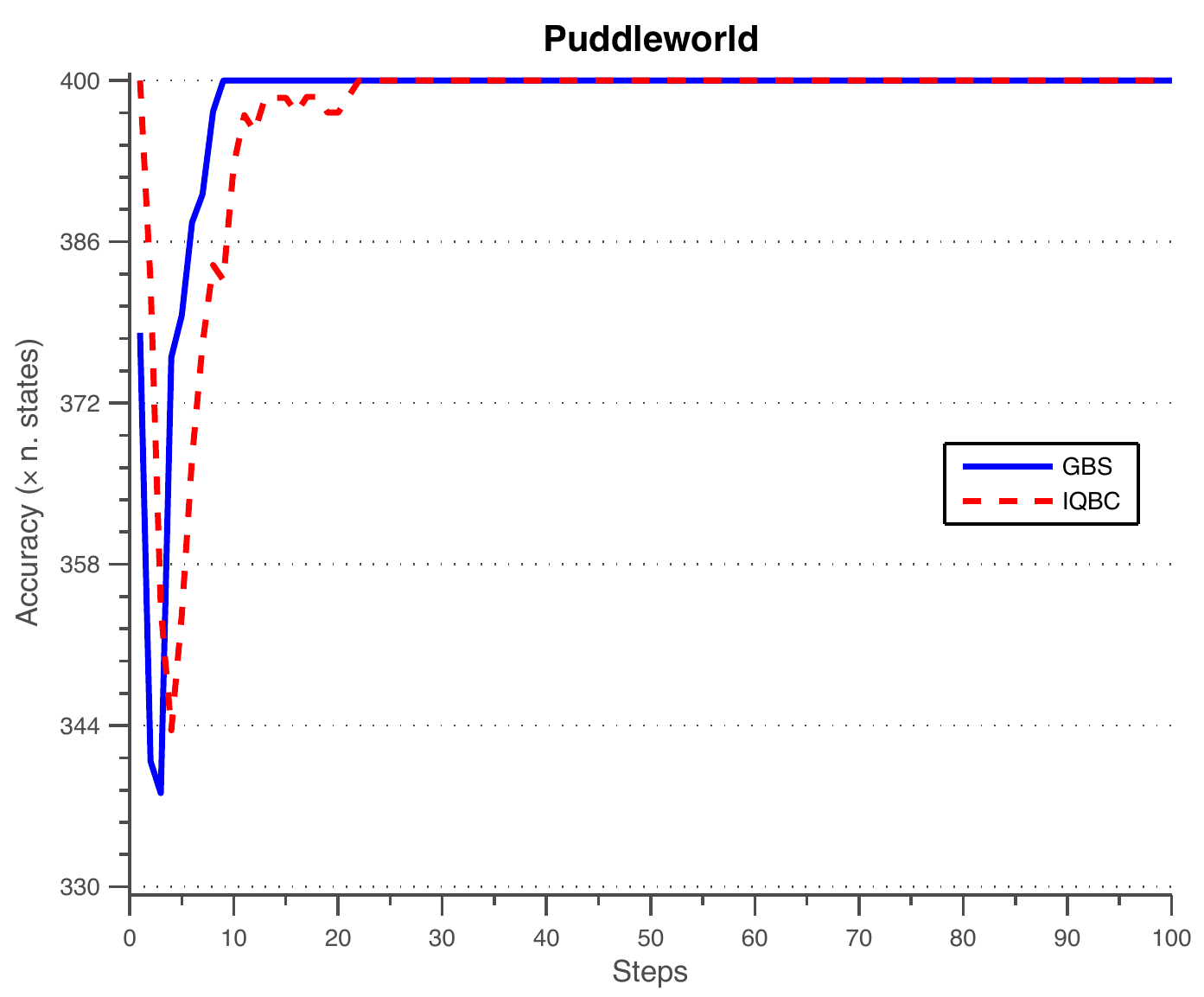}
  }\hfill
  \subfigure[Policy accur.\ (trap-w.).]{\label{Fig:Policy-trap}
    \includegraphics[width=0.4\columnwidth]{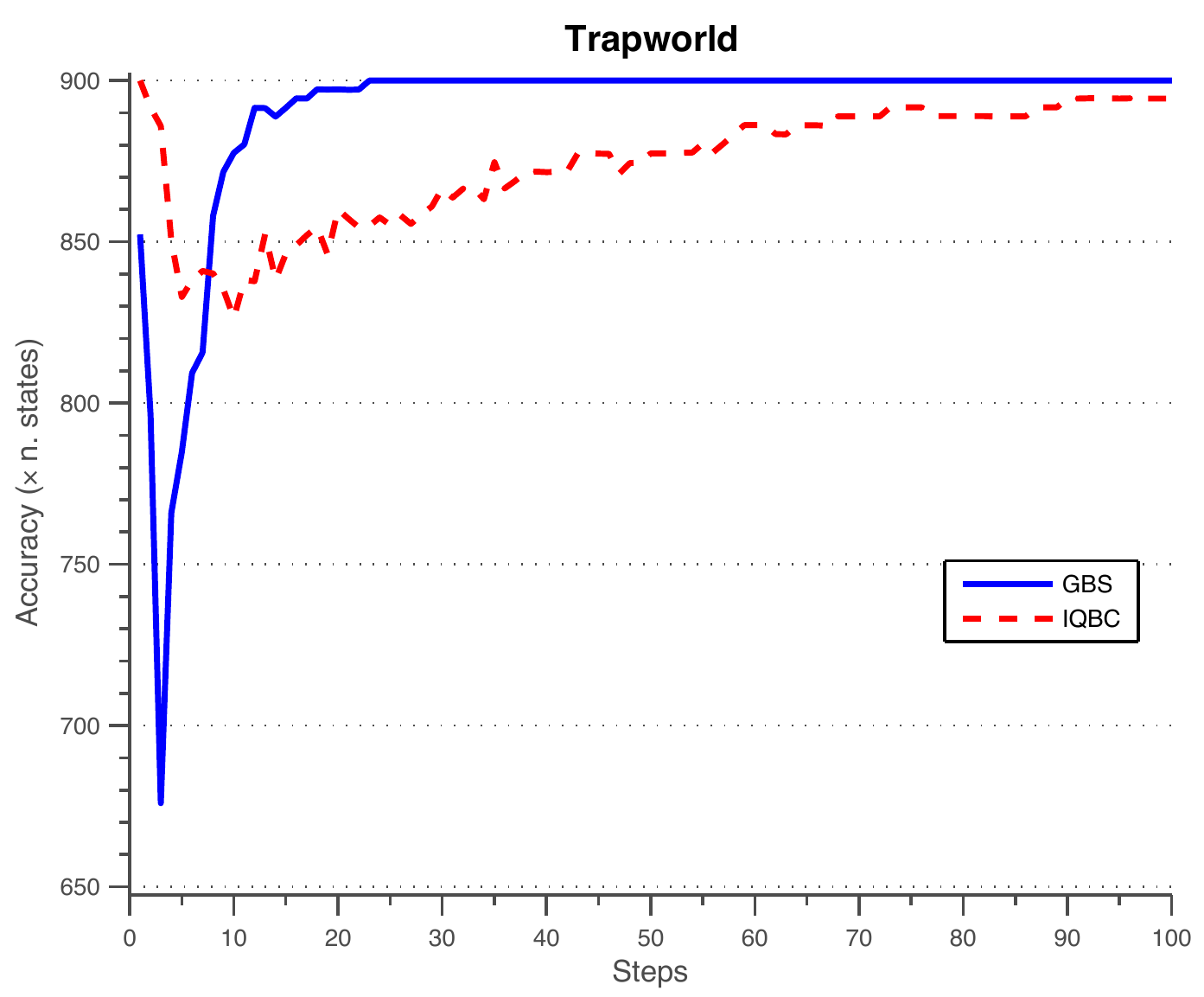}
  }\\
  \subfigure[Policy accur.\ (driver).]{\label{Fig:Policy-driver}
    \includegraphics[width=0.4\columnwidth]{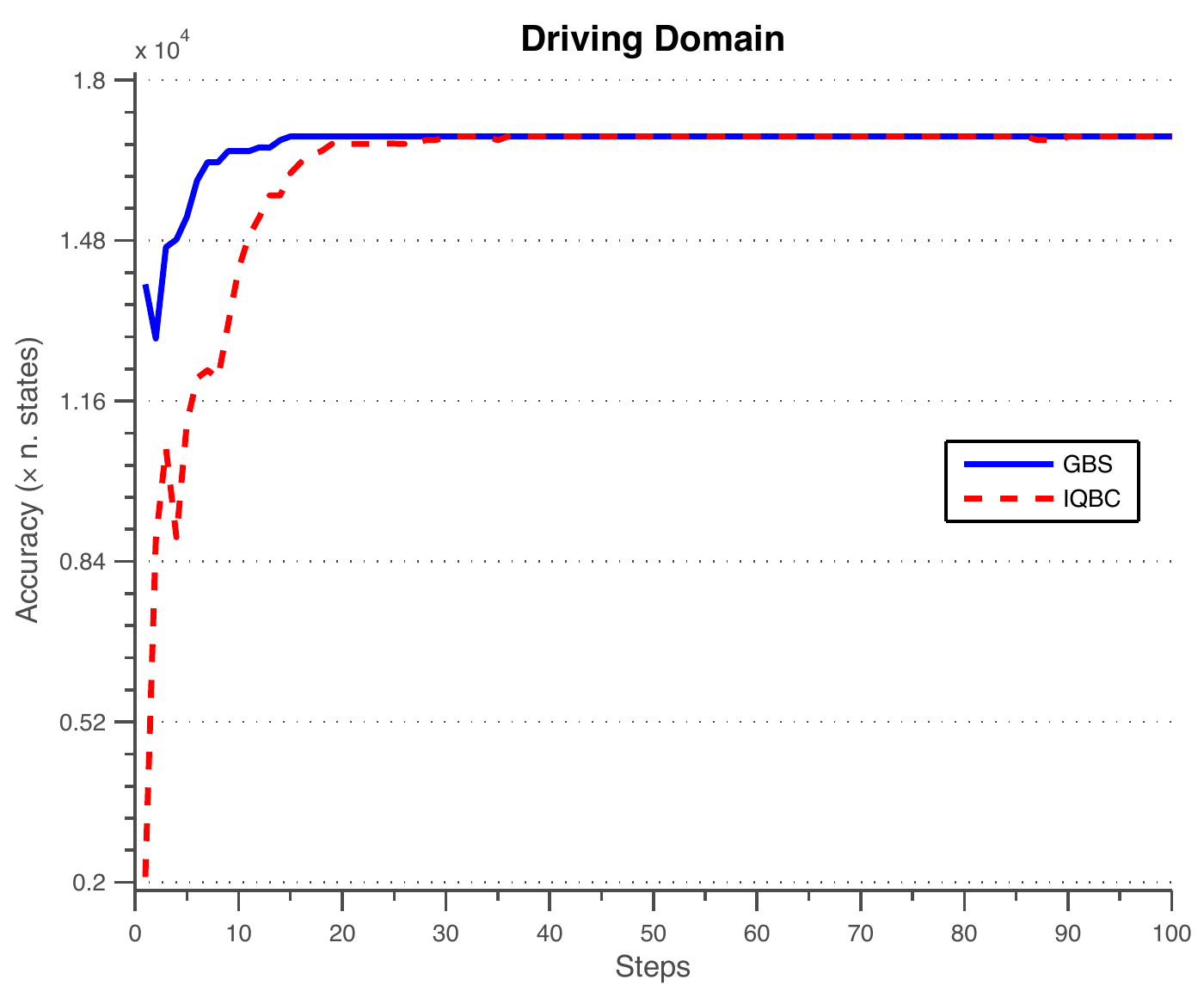}
  }\hfill
  \subfigure[Value perf.\ (puddle-w.).]{\label{Fig:Value-puddle}
    \includegraphics[width=0.4\columnwidth]{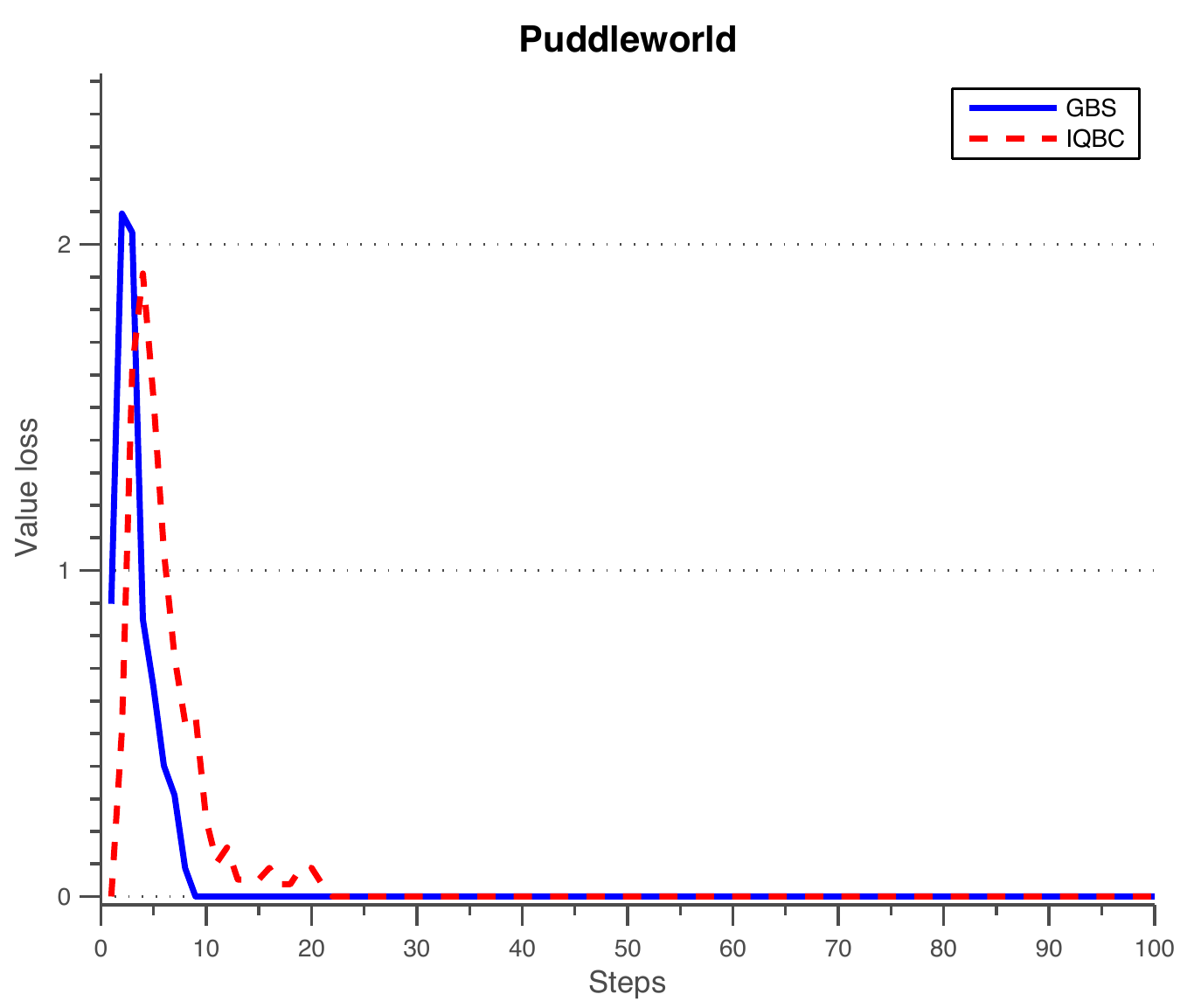}
  }\\
  \subfigure[Value perf.\ (trap-w.).]{\label{Fig:Value-trap}
    \includegraphics[width=0.4\columnwidth]{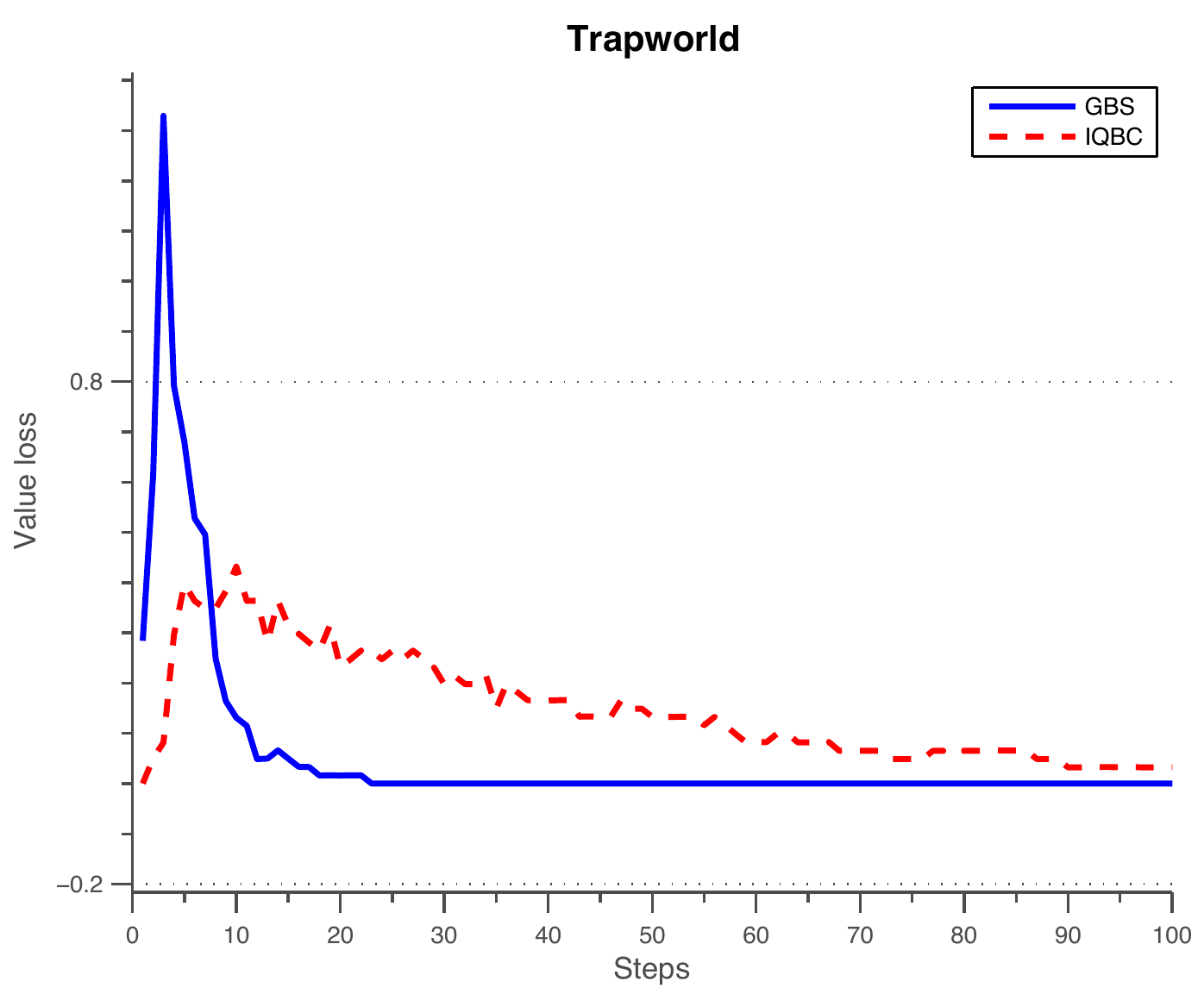}
  }\hfill
  \subfigure[Value perf.\ (driver).]{\label{Fig:Value-driver}
    \includegraphics[width=0.4\columnwidth]{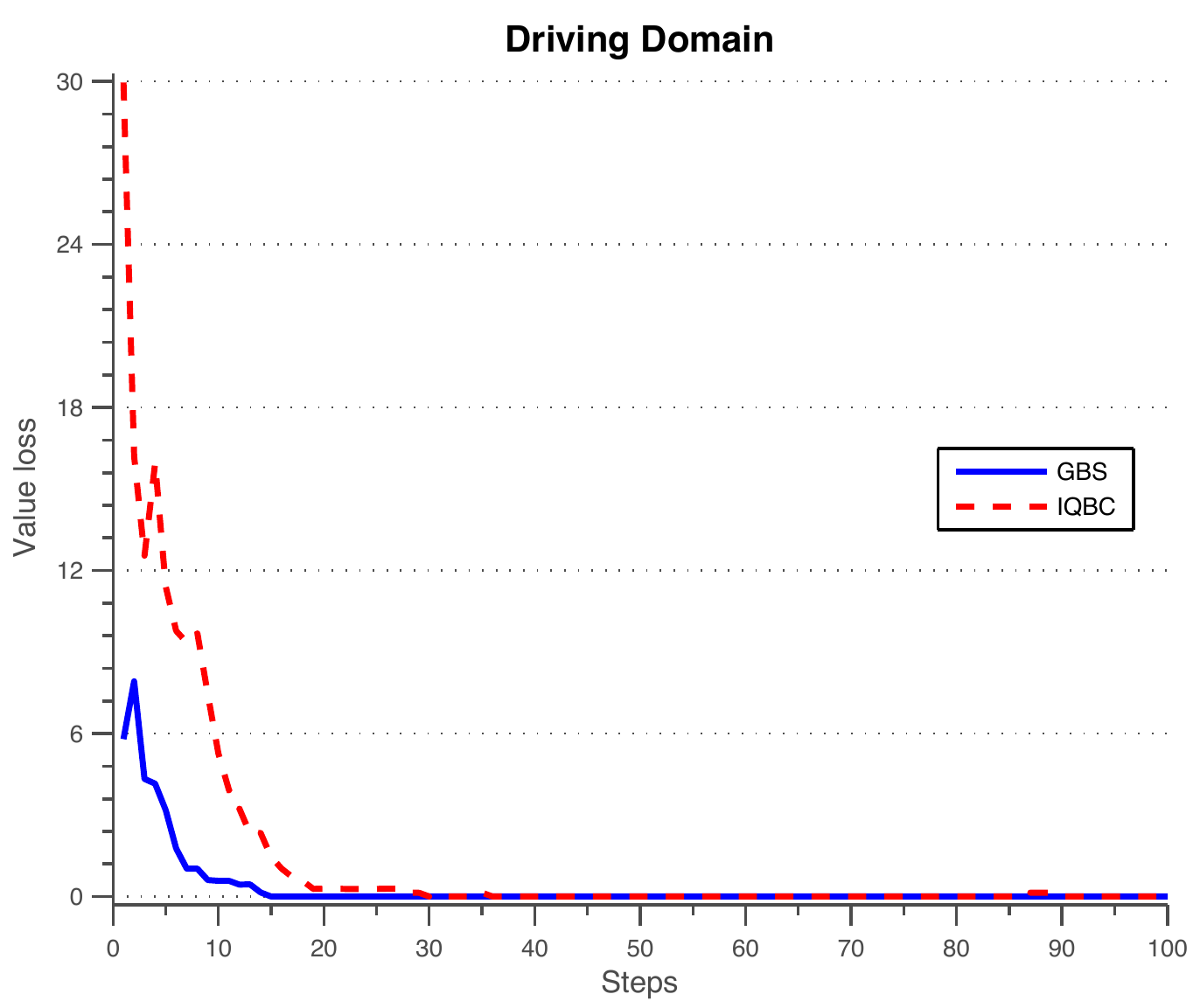}
  }
\caption{Classification and value performance of GBS-IRL and IQBC in the three large domains. Solid lines correspond to GBS-IRL, and dotted lines correspond to IQBC. \subref{Fig:Policy-trap}-\subref{Fig:Policy-driver} Classification performance. \subref{Fig:Value-trap}-\subref{Fig:Value-driver} Value performance.}
  \label{Fig:Results-domains}
\end{figure}

As with the previous experiments, we conducted 200 independent learning trials for each of the three environments, and evaluated the performance of both GBS-IRL and IQBC. The results are depicted in Fig.~\ref{Fig:Results-domains}.

We can observe that, as in previous scenarios, the performance of both methods is very similar. All scenarios feature a relatively small number of actions, which attenuates the negative dependence of IQBC on the number of actions observed in the previous experiments. 

It is also interesting to observe that the trap-world domain seems to be harder to learn than the other two domains, in spite of the differences in dimension. For example, while the driver domain required only around 10 samples for GBS-IRL to single out the correct hypothesis, the trap-world required around 20 to attain a similar performance. This may be due to the fact that the trap-world domain features the sparsest reward. Since the other rewards in the hypothesis space were selected to be similarly sparse, it is possible that many would lead to similar policies in large parts of the state-space, thus hardening the identification of the correct hypothesis.

To conclude, is is still interesting to observe that, in spite of the dimension of the problems considered, both methods were effectively able to single out the correct hypothesis after only a few samples. In fact, the overall performance is superior to that observed in the medium-sized domains, which indicates that the domain structure present in these scenarios greatly contributes to disambiguate between hypothesis, given the expert demonstration.


\subsection{Using Action and Reward Feedback}
\label{Sec:Policy+Reward}

To conclude the empirical validation of our approach, we conduct a final set of experiments that aims at illustrating the applicability of our approach in the presence of both action and reward feedback.

\begin{figure}[!tb]
\centering
  \includegraphics[width=0.4\columnwidth]{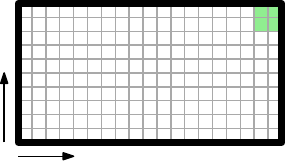}
  \caption{The grid-world used to illustrate the combined use of action and reward feedback.}
  \label{Fig:Grid-world}
\end{figure}

One first experiment illustrates the integration of both reward and policy information in the Bayesian IRL setting described in Section~\ref{Subsec:MultiAction}. We consider the simple $19\times10$ grid-world depicted in Fig.~\ref{Fig:Grid-world}, where the agent must navigate to the top-right corner of the environment. In this first experiment, we use random sampling, in which, at each time step $t$, the expert adds one (randomly selected) sample to the demonstration $\F_t$, which can be of either form $(x_t,a_t)$ or $(x_t,u_t)$. 

\begin{figure}[!tb]
\centering
  \includegraphics[width=0.6\columnwidth]{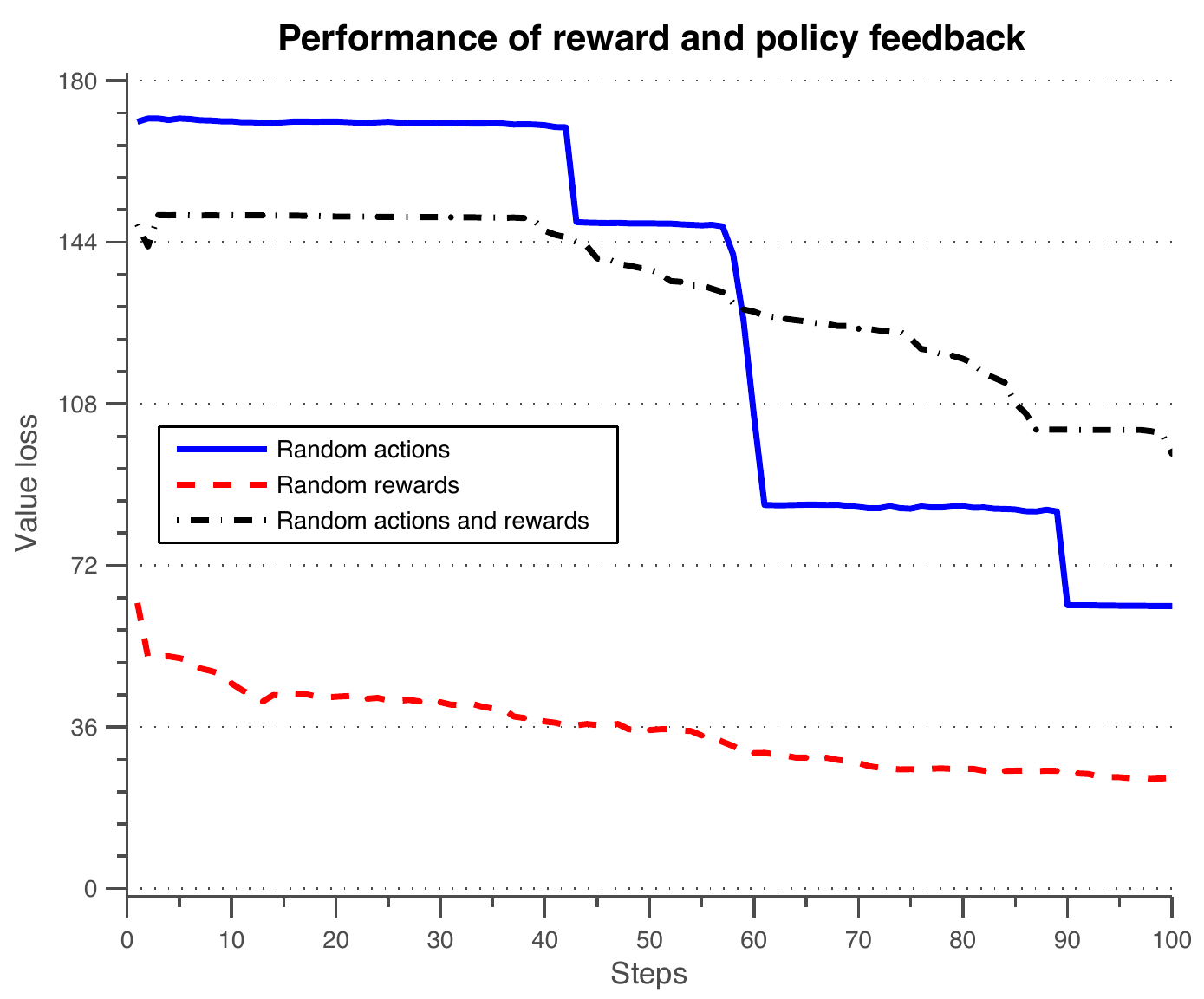}
  \caption{Bayesian IRL using reward and action feedback.}
  \label{Fig:Reward-action}
\end{figure}

Figure~\ref{Fig:Reward-action} compares the performance of Bayesian IRL for demonstrations consisting of state-action pairs only, state-reward pairs only, and also demonstrations that include both state-action and state-reward pairs. 

We first observe that all demonstration types enable the learner to slowly improve its performance in the target task. This indicates that all three sources of information (action, reward, and action+reward) give useful information to accurately identify the target task (or, equivalently, identify the target reward function). 

Another important observation is that a direct comparison between the learning performance obtained with the different demonstration types may be misleading, since the ability of the agent to extract useful information from the reward samples greatly depends on the sparsity of the reward function. Except in those situations in which the reward is extremely informative, an action-based demonstration will generally be more informative.

\begin{figure}[!tb]
\centering
  \subfigure[]{\label{Fig:Reward-sparse}
    \includegraphics[width=0.45\columnwidth]{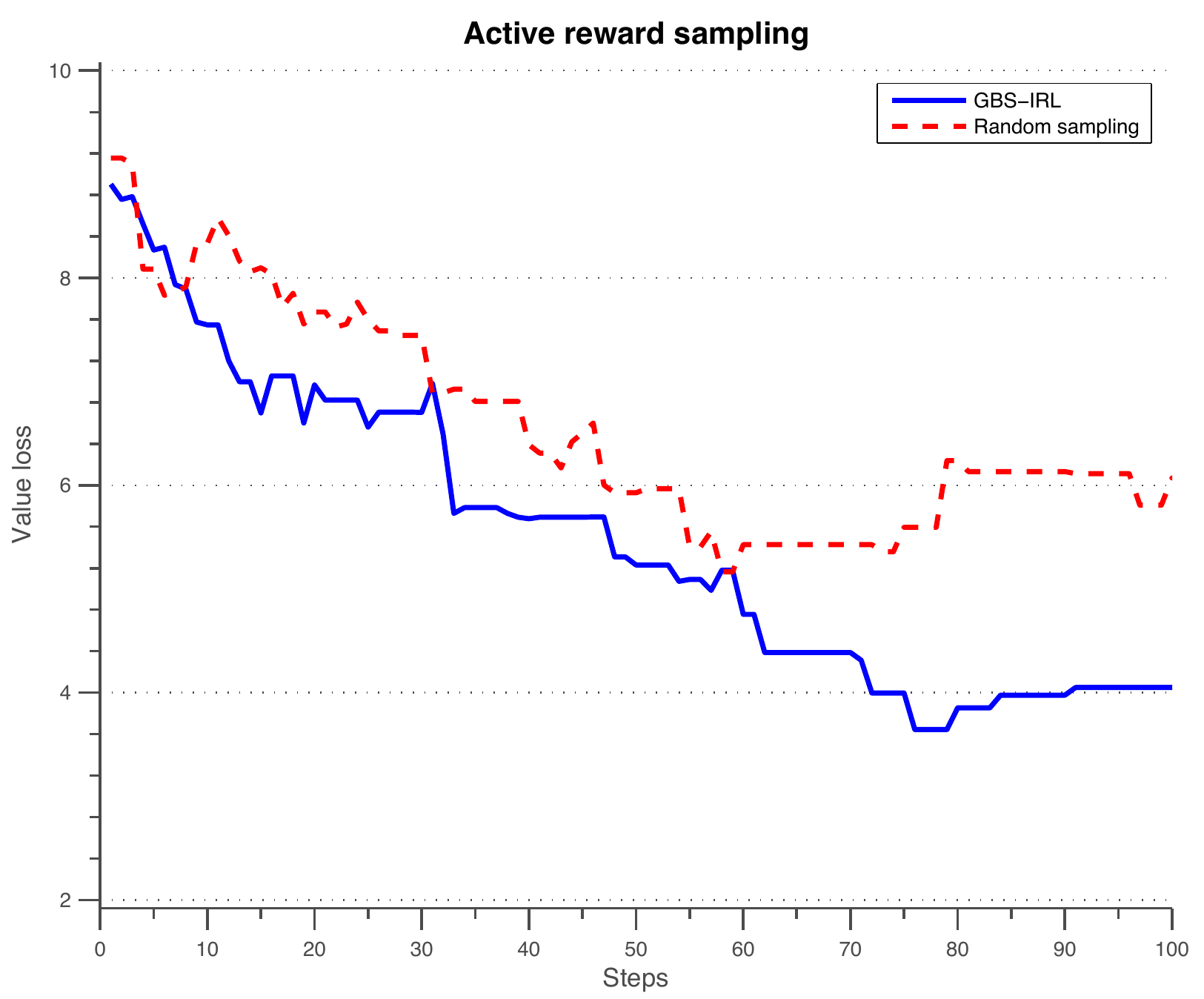}
  }\hfill
  \subfigure[]{\label{Fig:Reward-dense}
    \includegraphics[width=0.45\columnwidth]{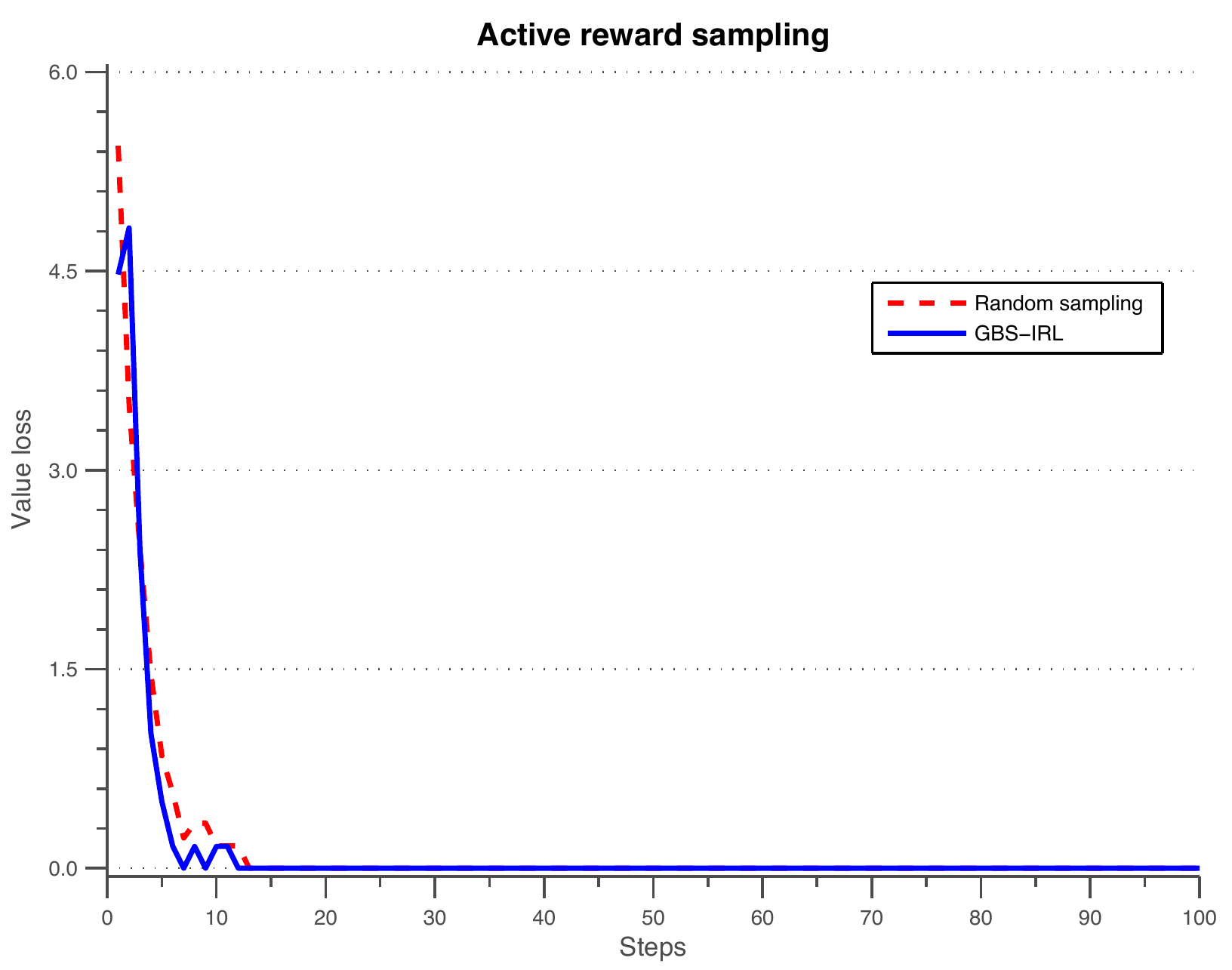}
  }
  \caption{Active IRL using reward feedback: sparse vs dense rewards.}
  \label{Fig:Rewards}
\end{figure}

In a second experiment, we analyze the performance of our active learning method when querying only reward information in the same grid-world environment. In particular, we analyze the dependence of the performance on the sparsity of the reward function, testing GBS-IRL in two distinct conditions. The first condition, depicted in Fig.~\ref{Fig:Reward-sparse}, corresponds to a reward function $r^*$ that is sparse, \ie such that $r^*(x)=0$ for all states $x$ except the goal states, where $r^*(x_{\rm goal})=1$. 

As discussed in Section~\ref{Subsec:MultiAction}, sparsity of rewards greatly impacts the learning performance of our Bayesian IRL approach. This phenomenon, however, is not exclusive to the active learning approach---in fact, as seen from Fig.~\ref{Fig:Reward-sparse}, random sampling also exhibits a poor performance. It is still possible, nonetheless, to detect some advantage in using an active sampling approach. 

In contrast, it is possible to design very informative rewards, by resorting to a technique proposed in the reinforcement learning literature under the designation of \emph{reward shaping} \citep{ng99icml}. By considering a shaped version of that same reward, we obtain the learning performance depicted in Fig.~\ref{Fig:Reward-dense}. Note how, in the latter case, convergence is extremely fast even in the presence of random sampling. 

We conclude by noting that, in the case of reward information, our setting is essentially equivalent to a standard reinforcement learning setting, for which efficient exploration techniques have been proposed and may provide fruitful avenues for future research.


\section{Discussion}
\label{Sec:Conclusions}


In this paper we introduce GBS-IRL, a novel active IRL algorithm that allows an agent to learn a task from a demonstration by an ``expert''. Using a generalization of binary search, our algorithm greedily queries the expert for demonstrations in highly informative states. As seen in Section~\ref{Subsec:Related}, and following the designation of \cite{dasgupta11tcs}, GBS-IRL is an \emph{aggressive} active learning algorithm. Additionally, given our consideration of noisy samples, GBS-IRL is naturally designed to consider \emph{non-separable data}. As pointed out by \cite{dasgupta11tcs}, few aggressive active learning algorithms exist with provable complexity bounds for the non-separable case. GBS-IRL comes with such guarantees, summarized in Corollary~\ref{Cor:SampleComplexity}: under suitable conditions and for any given $\delta>0$, $\PP{\vec{h}_t\neq\vec{h}^*}>1-\delta$, as long as
\begin{displaymath}
t\geq\frac{1}{\lambda}\log\frac{\abs{\H}}{\delta},
\end{displaymath}
where $\lambda$ is a constant that does not depend on the dimension of the hypothesis space but only on the sample noise. 

Additionally, as briefly remarked in Section~\ref{Subsec:SingleAction}, it is possible to use an adaptive sub-modularity argument to establish the near-optimality of GBS-IRL. In fact, given the target hypothesis, $\vec{h}^*$, consider the objective function
\begin{displaymath}
f(\F_t)=\PP{h_t\neq h^*\mid\F_t}=1-p_t(h^*).
\end{displaymath}
From Theorem~\ref{Theo:Consistency} and its proof, it can be shown that $f$ is \emph{strongly adaptive monotone} and \emph{adaptive sub modular} and use results of \cite{golovin11jair} to provide a similar bound on sample complexity of GBS-IRL. To our knowledge, GBS-IRL is the first active IRL algorithm with provable sample complexity bounds. Additionally, as discussed in Section~\ref{Subsec:SingleAction}, our reduction of IRL to a standard (multi-class) classification problem implies that Algorithm~\ref{Alg:ActIRL} is not specialized in any particular way to IRL problems. In particular, our results are generally applicable in any multi-class classification problems verifying the corresponding assumptions.

Finally, our main contributions are focused in the simplest form of interaction, when the demonstration consist of examples of the right action to take in different situations. However, we also discuss how other forms of expert feedback (beyond policy information) may be integrated in a seamless manner in our GBS-IRL framework. In particular, we discussed how to combine both policy and reward information in our learning algorithm. Our approach thus provides an interesting bridge between reinforcement learning (or learning by trial and error) and imitation learning (or learning from demonstration). In particular, it brings to the forefront existing results on efficient exploration in reinforcement learning \citep{jaksch10jmlr}. 

Additionally, the general Bayesian IRL framework used in this paper is also amenable to the integration of additional information sources. For example, the human agent may provide trajectory information, or indicate states that are frequently visited when following the optimal path. From the MDP parameters it is generally possible to associate a likelihood with such feedback, which can in turn be integrated in the Bayesian task estimation setting. However, extending the active learning approach to such sources of information is less straightforward and is left as an important avenue for future research.


\appendix

\section{Proofs}%
\label{App:Proofs}

In this appendix we collect the proofs of all statements throughout the paper.


\subsection{Proof of Lemma~\ref{Lemma:Incoherence}}%
\label{Proof:Incoherence}

The method of proof is related to that of \cite{nowak11tit}. We want to show that either
\begin{itemize}
\item $W(p,\x_i)\leq c^*$ for some $\X_i\in\Xi$ or, alternatively, 
\item There are two $k$-neighbor sets $\X_i,\X_j\in\Xi$ such that $W(p,\x_i)>c^*$ and $W(p,\x_j)>c^*$, while $A^*(p,\x_i)\neq A^*(p,\x_j)$. 
\end{itemize}
We have that, for any $a\in\A$,
\begin{displaymath}
\sum_{\vec{h}\in\H}p(\vec{h})\sum_{i=1}^Nh(\x_i,a)\mu(\X_i)\leq\sum_{\vec{h}\in\H}p(\vec{h})c^*=c^*
\end{displaymath}
The above expression can be written equivalently as
\begin{equation}\label{Eq:Incoherence}
\EE[\mu]{\sum_{\vec{h}\in\H}p(\vec{h})h(\x_i,a)}\leq c^*.
\end{equation}
Suppose that there is no $x\in\X$ such that $W(p,x)\leq c^*$. In other words, suppose that, for every $x\in\X$, $W(p,x)>c^*$. Then, for \eqref{Eq:Incoherence} to hold, there must be $\X_i,\X_j\in\Xi$ and $a\in\A$ such that 
\begin{align*}
  \sum_{\vec{h}\in\H}p(\vec{h})h(\x_i,a)&>c^*\\
  \sum_{\vec{h}\in\H}p(\vec{h})h(\x_j,a)&<-c^*.
\end{align*}
Since $(\X,\H)$ is $k$-neighborly by assumption, there is a sequence $\set{\X_{k_1},\ldots,\X_{k_\ell}}$ such that $\X_{k_1}=\X_i$, $\X_{k_\ell}=\X_j$, and  every two sets $\X_{k_m},\X_{k_{m+1}}$ are $k$-neighborly. Additionally, at some point in this sequence, the signal of $\sum_{\vec{h}\in\H}p(\vec{h})h(\x_i,a)$ must change. This implies that there are two $k$-neighboring sets $\X_{k_i}$ and $\X_{k_j}$ such that
\begin{align*}
  \sum_{\vec{h}\in\H}p(\vec{h})h(\x_{k_i},a)&>c^* &
  \sum_{\vec{h}\in\H}p(\vec{h})h(\x_{k_j},a)&<-c^*,
\end{align*}
which implies that
\begin{displaymath}
  A^*(p_t,\x_{k_i})\neq A^*(p_t,\x_{k_j}),
\end{displaymath}
and the proof is complete.


\subsection{Proof of Theorem~\ref{Theo:Consistency}}%
\label{Proof:Consistency}

Let $C_t$ denote the amount of probability mass placed on incorrect hypothesis by $p_t$, i.e.,
\begin{displaymath}
C_t=\frac{1-p_t(\vec{h}^*)}{p_t(\vec{h}^*)}.
\end{displaymath}
The proof of Theorem~\ref{Theo:Consistency} relies on the following fundamental lemma, whose proof can be found in Appendix~\ref{Proof:Supermartingale}.

\begin{lemma}\label{Lemma:Supermartingale}
Under the conditions of Theorem~\ref{Theo:Consistency}, the process $\set{C_t,t=1,\ldots}$ is a non-negative supermartingale with respect to the filtration $\set{\F_t,t=1,\ldots}$. In other words,
\begin{displaymath}
\EE{C_{t+1}\mid\F_t}\leq C_t,
\end{displaymath}
for all $t\geq0$.
\end{lemma}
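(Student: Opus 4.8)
The plan is to show that $\EE{C_{t+1}\mid\F_t}\leq C_t$ by analysing the effect of a single Bayesian update \eqref{Eq:Bayes} on the ratio $C_t=(1-p_t(\vec{h}^*))/p_t(\vec{h}^*)$. First I would observe that the update \eqref{Eq:Bayes} multiplies each $p_t(\vec{h})$ by a likelihood factor $\hgamma(x_{t+1})^{(1+h_{t+1})/2}\hbeta(x_{t+1})^{(1-h_{t+1})/2}$ depending on whether the observed action $a_{t+1}$ is greedy for $\vec{h}$ at $x_{t+1}$, followed by renormalisation. Writing the query state as $x=x_{t+1}$ and conditioning on $\F_t$ (so that $x$ is $\F_t$-measurable), the randomness in the conditional expectation comes from the noisy response $a_{t+1}$, whose law is governed by the \emph{true} noise rates $\gamma^*(x),\beta^*(x)$ of the target hypothesis $\vec{h}^*$. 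The key reduction is that, since normalisation cancels in the ratio, $C_{t+1}$ equals $C_t$ multiplied by a factor determined entirely by how the likelihood treats $\vec{h}^*$ versus the aggregate of the incorrect hypotheses.

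Concretely, I would split $C_{t+1}$ as $\sum_{\vec{h}\neq\vec{h}^*}p_{t+1}(\vec{h})/p_{t+1}(\vec{h}^*)$ and note that in each ratio the normalisation constant $Z$ cancels, leaving $p_t(\vec{h})/p_t(\vec{h}^*)$ scaled by the ratio of likelihood factors for $\vec{h}$ and $\vec{h}^*$. The event $\{a_{t+1}\in\A_{\vec{h}}(x)\}$ (i.e.\ $h_{t+1}=+1$) has probability $\gamma^*(x)$ when $a_{t+1}$ happens to be greedy for the true hypothesis and $\beta^*(x)$ otherwise. Taking the expectation over $a_{t+1}$ and regrouping, I expect the conditional expectation $\EE{C_{t+1}\mid\F_t}$ to take the form $C_t$ times a weighted combination, where the weights are the true probabilities of observing a greedy versus a non-greedy action. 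The crucial algebraic step is to bound the per-hypothesis multiplicative factor using the step conditions \eqref{Eq:Steps}, namely $\hbeta(x)\geq\beta^*(x)$ and $\hgamma(x)\leq\gamma^*(x)$, which guarantee that the expected factor does not exceed $1$.

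The heart of the argument is therefore an inequality of the shape
\begin{displaymath}
\gamma^*(x)\frac{\hgamma(x)}{\hgamma(x)}+\beta^*(x)\frac{\hbeta(x)}{\hbeta(x)}\cdots\leq 1,
\end{displaymath}
more precisely that for each incorrect $\vec{h}$ the expectation over the response of its likelihood factor relative to that of $\vec{h}^*$ is at most one. I would verify this by writing out the two cases for $\vec{h}$ at $x$ (whether $a_{t+1}$ is greedy for $\vec{h}$ or not) against the two cases for $\vec{h}^*$, using that the response distribution is dictated by $\vec{h}^*$'s true rates, and then applying \eqref{Eq:Steps} together with the normalisation identity $1-\hgamma(x)=(\abs{\A}-1)\hbeta(x)$. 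The supermartingale property then follows by summing the per-hypothesis bounds, since each contributes a term $\leq p_t(\vec{h})/p_t(\vec{h}^*)$ in expectation.

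The main obstacle I anticipate is the bookkeeping across the multi-class structure: unlike the binary case of \cite{nowak11tit}, an incorrect hypothesis $\vec{h}$ may agree or disagree with $\vec{h}^*$ about which action is greedy at $x$, and the observed action $a_{t+1}$ may independently be greedy for neither, one, or both. Handling these cases uniformly is precisely where the excerpt warns that \emph{stronger} assumptions on $\hbeta(x)$ are needed, so I would be careful to track exactly which case forces the worst bound and confirm that \eqref{Eq:Steps} suffices to control it. Non-negativity of $C_t$ is immediate since $0\leq p_t(\vec{h}^*)\leq 1$, so establishing this expected-contraction inequality is the only real work.
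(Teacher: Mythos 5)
Your proposal is correct and takes essentially the same route as the paper: the paper's proof introduces $\eta_{t+1}$, the update's normalization constant divided by the likelihood factor of $\vec{h}^*$, writes $C_{t+1}/C_t=(\eta_{t+1}-p_t(\vec{h}^*))/(1-p_t(\vec{h}^*))$, and proves $\EE{\eta_{t+1}\mid\F_t,x_{t+1}}\leq1$ --- which is exactly the $p_t$-weighted aggregate (grouped by predicted action) of the per-hypothesis expected likelihood ratios that you propose to bound by $1$. In both arguments the computation reduces, using Assumption~\ref{Ass:OneAction} together with the identities $1-\hgamma(x)=(\abs{\A}-1)\hbeta(x)$ and $(\abs{\A}-1)\beta^*(x)+\gamma^*(x)=1$, to the same key inequality $\gamma^*(x)\frac{\hgamma(x)-\hbeta(x)}{\hgamma(x)}+\beta^*(x)\frac{\hbeta(x)-\hgamma(x)}{\hbeta(x)}\geq0$, which \eqref{Eq:Steps} guarantees.
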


The proof now replicates the steps in the proof of Theorem~3 of \cite{nowak11tit}. In order to keep the paper as self-contained as possible, we repeat those steps here. We have that
\begin{displaymath}
\mathbb{P}\big[\hat{\vec{h}}_t\neq\vec{h}^*\big]\leq\PP{p_t(\vec{h}^*)<1/2}=\PP{C_t>1}\leq\EE{C_t},
\end{displaymath}
where the last inequality follows from the Markov inequality. Explicit computations yield
\begin{align*}
\mathbb{P}\big[\hat{\vec{h}}_t\neq\vec{h}^*\big]
  &\leq\EE{C_t}\\
  &=\EE{\frac{C_t}{C_{t-1}}C_{t-1}}\\
  &=\EE{\EE{\frac{C_t}{C_{t-1}}C_{t-1}\mid\F_{t-1}}}\\
  &=\EE{\EE{\frac{C_t}{C_{t-1}}\mid\F_{t-1}}C_{t-1}}\\
  &\leq\max_{\F_{t-1}}\EE{\frac{C_t}{C_{t-1}}\mid\F_{t-1}}\EE{C_{t-1}}.
\end{align*}
Finally, expanding the recursion, 
\begin{equation}\label{Eq:Bound}
\mathbb{P}\big[\hat{\vec{h}}_t\neq\vec{h}^*\big]
  \leq C_0\left(\max_{\tau=1,\ldots,t-1}\EE{\frac{C_\tau}{C_{\tau-1}}\mid\F_{\tau-1}}\right)^t.
\end{equation}
Since, from Lemma~\ref{Lemma:Supermartingale}, $\EE{\frac{C_t}{C_{t-1}}\mid\F_{t-1}}<1$ for all $t$, the conclusion follows.


\subsection{Proof of Lemma~\ref{Lemma:Supermartingale}}%
\label{Proof:Supermartingale}

The structure of the proof is similar to that of the proof of Lemma 2 of \cite{nowak11tit}. We start by explicitly writing the expression for the Bayesian update in \eqref{Eq:Bayes}. For all $a\in\A$, let
\begin{eqnarray}\label{Eq:deltaA}
\lefteqn{\delta(a)\triangleq\PP[p_t]{h(x_{t+1},a)=1}=}&\\\nonumber
&=\frac{1}{2}\left(1+\sum_{\vec{h}\in\H}p_t(\vec{h})h(x_{t+1},a)\right),
\end{eqnarray}
and we abusively write $\delta_{t+1}$ to denote $\delta(a_{t+1})$. The quantity $\delta(a)$ corresponds to the fraction of probability mass concentrated on hypotheses prescribing action $a$ as optimal in state $x_{t+1}$. The normalizing factor in the update \eqref{Eq:Bayes} is given by
\begin{align*}
\lefteqn{\sum_{\vec{h}\in\H}p_t(\vec{h})\hgamma(x_{t+1})^{(1+h_{t+1})/2}\hbeta(x_{t+1})^{(1-h_{t+1})/2}}\\
  &\qquad=\sum_{\vec{h}:h_{t+1}=1}p_t(\vec{h})\hgamma(x_{t+1})+\sum_{\vec{h}:h_{t+1}=-1}p_t(\vec{h})\hbeta(x_{t+1})\\
  &\qquad=\delta_{t+1}\hgamma(x_{t+1})+(1-\delta_{t+1})\hbeta(x_{t+1}).
\end{align*}
We can now write the Bayesian update of $p_t(\vec{h})$ as
\begin{equation}\label{Eq:Bayes-2}
p_{t+1}(\vec{h})=p_t(\vec{h})\frac{\hgamma(x_{t+1})^{(1+h_{t+1})/2}\hbeta(x_{t+1})^{(1-h_{t+1})/2}}{\delta_{t+1}\hgamma(x_{t+1})+(1-\delta_{t+1})\hbeta(x_{t+1})}.
\end{equation}
Let
\begin{equation}\label{Eq:Eta}
\eta(a)=\frac{\delta(a)\hgamma(x_{t+1})+(1-\delta(a))\hbeta(x_{t+1})}{\hgamma(x_{t+1})^{(1+h^*(x_{t+1},a))/2}\hbeta(x_{t+1})^{(1-h^*(x_{t+1},a))/2}},
\end{equation}
where, as with $\delta$, we abusively write $\eta_{t+1}$ to denote $\eta(a_{t+1})$. Then, for $\vec{h}^*$, we can now write the update~\eqref{Eq:Bayes-2} simply as $p_{t+1}(\vec{h}^*)=p_t(\vec{h}^*)/\eta_{t+1}$, and 
\begin{displaymath}
\frac{C_{t+1}}{C_t}
  =\frac{(1-p_t(\vec{h}^*)/\eta_{t+1})\eta_{t+1}}{1-p_t(\vec{h}^*)}
  =\frac{\eta_{t+1}-p_t(\vec{h}^*)}{1-p_t(\vec{h}^*)}
\end{displaymath}
The conclusion of the Lemma holds as long as $\EE{\eta_{t+1}\mid\F_t}\leq1$. Conditioning the expectation $\EE{\eta_{t+1}\mid\F_t}$ on $x_{t+1}$, we have that
\begin{align*}
\lefteqn{\EE{\eta_{t+1}\mid\F_t,x_{t+1}}}\\
&\qquad=\sum_{a\in\A}\eta(a)\PP{a_{t+1}=a\mid\F_t,x_{t+1}}\\
&\qquad=\sum_{a\in\A}\eta(a)\gamma^*(x_{t+1})^{(1+h^*(x_{t+1},a))/2}\beta^*(x_{t+1})^{(1-h^*(x_{t+1},a))/2}.
\end{align*}
Let $\ah$ denote the action in $\A$ such that $h^*(x_{t+1},\ah)=1$. This leads to
\begin{equation}\label{Eq:Bracket}
\EE{\eta_{t+1}\mid\F_t,x_{t+1}}=\eta(\ah)\gamma^*(x_{t+1})+\sum_{a\neq\ah}\eta(a)\beta^*(x_{t+1}).
\end{equation}
For simplicity of notation, we temporarily drop the explicit dependence of $\beta^*$, $\hbeta$, $\gamma^*$ and $\hgamma$ on $x_{t+1}$. Explicit computations now yield
\begin{align*}
\lefteqn{\eta(\ah)\gamma^*(x_{t+1})+\sum_{a\neq\ah}\eta(a)\beta^*(x_{t+1})}\\
  &\qquad=[\delta(\ah)\hgamma+(1-\delta(\ah))\hbeta]\frac{\gamma^*}{\hgamma}+\\
  &+\qquad\sum_{a\neq\ah}[\delta(a)\hgamma+(1-\delta(a))\hbeta]\frac{\beta^*}{\hbeta}\\
  &\qquad=\delta(\ah)\gamma^*+(1-\delta(\ah))\frac{\hbeta\gamma^*}{\hgamma}+\\
  &+\qquad\sum_{a\neq\ah}\left[\delta(a)\frac{\hgamma\beta^*}{\hbeta}+(1-\delta(a))\beta^*\right].
\end{align*}
Since $\sum_{a\neq\ah}\delta(a)=1-\delta(\ah)$,
\begin{align*}
\lefteqn{\eta(\ah)\gamma^*(x_{t+1})+\sum_{a\neq\ah}\eta(a)\beta^*(x_{t+1})}\\
  &\qquad=(1-\delta(\ah))\left[\frac{\hbeta\gamma^*}{\hgamma}+\frac{\hgamma\beta^*}{\hbeta}\right]+\\
  &\qquad+\delta(\ah)\gamma^*+\sum_{a\neq\ah}(1-\delta(a))\beta^*\\
  &\qquad=(1-\delta(\ah))\left[\frac{\hbeta\gamma^*}{\hgamma}+\frac{\hgamma\beta^*}{\hbeta}\right]+\\
  &\qquad+\delta(\ah)\gamma^*+(\abs{\A}-1)\beta^*-(1-\delta(\ah))\beta^*\\
  &\qquad=(1-\delta(\ah))\left[\frac{\hbeta\gamma^*}{\hgamma}+\frac{\hgamma\beta^*}{\hbeta}\right]+\\
  &\qquad+\delta(\ah)(\gamma^*+\beta^*)+1-\gamma^*-\beta^*,
\end{align*}
where we have used the fact that $(\abs{A}-1)\beta^*+\gamma^*=1$. Finally, we have
\begin{align*}
\lefteqn{\eta(\ah)\gamma^*(x_{t+1})+\sum_{a\neq\ah}\eta(a)\beta^*(x_{t+1})}\\
  &=(1-\delta(\ah))\left[\frac{\hbeta\gamma^*}{\hgamma}+\frac{\hgamma\beta^*}{\hbeta}-\gamma^*-\beta^*\right]+1\\
  &=1-(1-\delta(\ah))\left[\gamma^*+\beta^*-\frac{\hbeta\gamma^*}{\hgamma}-\frac{\hgamma\beta^*}{\hbeta}\right]\\
  &=1-(1-\delta(\ah))\left[\gamma^*\frac{\hgamma-\hbeta}{\hgamma}+\beta^*\frac{\hbeta-\hgamma}{\hbeta}\right].
\end{align*}
Letting $\rho=1-(\abs{\A}-1)\alpha$, we have that $\EE{\eta_{t+1}\mid\F_t,x_{t+1}}<1$ as long as 
\begin{displaymath}
\gamma^*(x)\frac{\hgamma(x)-\hbeta(x)}{\hgamma(x)}+\beta^*(x)\frac{\hbeta(x)-\hgamma(x)}{\hbeta(x)}>0
\end{displaymath}
for all $x\in\X$. Since, for all $x\in\X$, $\beta^*(x)<\alpha$ and $\gamma^*(x)>\rho$, we have
\begin{align*}
\lefteqn{\gamma^*(x)\frac{\hgamma(x)-\hbeta(x)}{\hgamma(x)}+\beta^*(x)\frac{\hbeta(x)-\hgamma(x)}{\hbeta(x)}=}\\
    &=(\gamma^*(x)-\beta^*(x))\left[\frac{\gamma^*(x)}{\hgamma(x)}-\frac{\beta^*(x)}{\hbeta(x)}\right]\\
    &>(\rho-\alpha)\left[\frac{\rho}{\hgamma(x)}-\frac{\alpha}{\hbeta(x)}\right]\geq0.
\end{align*}
where the inequality is strict if $\hbeta(x)>\alpha$ for all $x\in\X$.


\subsection{Proof of Theorem~\ref{Theo:Convergence}}%
\label{Proof:Convergence}

To prove Theorem~\ref{Theo:Convergence}, we depart from \eqref{Eq:Bound}:
\begin{displaymath}
\mathbb{P}\big[\hat{\vec{h}}_t\neq\vec{h}^*\big]
  \leq C_0\left(\max_{\tau=1,\ldots,t-1}\EE{\frac{C_\tau}{C_{\tau-1}}\mid\F_{\tau-1}}\right)^t.
\end{displaymath}
Letting
\begin{displaymath}
\lambda_t=\max_{\tau=1,\ldots,t-1}\EE{\frac{C_\tau}{C_{\tau-1}}\mid\F_{\tau-1}},
\end{displaymath}
the desired result can be obtained by bounding the sequence $\set{\lambda_t,t=0,\ldots}$ by some value $\lambda<1$. To show that such $\lambda$ exists, we consider separately the two possible queries in Algorithm~\ref{Alg:ActIRL}. 

Let then $c_t=\min_{i=1,\ldots,N}W(p_t,\x_i)$, and suppose that there are no 1-neighbor sets $\X_i$ and $\X_j$ such that     
\begin{align}\label{Eq:NeighborSets}
  W(p_t,\x_i)&>c_t,&
  W(p_t,\x_j)&>c_t,\\
  A^*(p_t,\x_i)&\neq A^*(p_t,\x_j).
\end{align}
Then, from Algorithm~\ref{Alg:ActIRL}, the queried state $x_{t+1}$ will be such that 
\begin{displaymath}
x_{t+1}\in\argmin_iW(p_t,\x_i).
\end{displaymath}
Since, from the definition of $c^*$, $c_t<c^*$, it follows that $\delta(a)\leq\frac{1+c^*}{2}$ for all $a\in\A$, where $\delta(a)$ is defined in \eqref{Eq:deltaA}. Then, from the proof of Lemma~\ref{Lemma:Supermartingale}, 
\begin{align*}
\EE{\eta_{t+1}\mid\F_t,x_{t+1}}
  &\leq1-\eps(1-\delta(\ah))\\
  &\leq1-\eps\frac{1-c^*}{2},
\end{align*}
where $\ah$ denotes the action in $\A$ such that $h^*(x,\ah)=1$. 

Consider now the case where there are $1$-neighboring sets $\X_i$ and $\X_j$ such that \eqref{Eq:NeighborSets} holds. In this case, according to Algorithm~\ref{Alg:ActIRL}, $x_{t+1}$ is selected randomly as either $[x]_i$ or $[x]_j$ with probability $1/2$. Moreover, since $\X_i$ and $\X_j$ are 1-neighbors, there is a single hypothesis, say $\vec{h}_0$, that prescribes different optimal actions in $\X_i$ and $\X_j$. Let $\ah_i$ denote the optimal action at $\x_i$, and $\ah_j$ the optimal action at $\x_j$, as prescribed by $\vec{h}^*$. Three situations are possible:
\begin{enumerate}[\bf {Situation~}1.]
\item\label{Enum:Case1} $A^*(p_t,\x_i)\neq\ah_i$ and $A^*(p_t,\x_j)=\ah_j$, or $A^*(p_t,\x_i)=\ah_i$ and $A^*(p_t,\x_j)\neq\ah_j$.
\item\label{Enum:Case2} $A^*(p_t,\x_i)\neq\ah_i$ and $A^*(p_t,\x_j)\neq\ah_j$;
\item\label{Enum:Case3} $A^*(p_t,\x_i)=\ah_i$ and $A^*(p_t,\x_j)=\ah_j$;
\end{enumerate}

We consider Situation~\ref{Enum:Case1} first. From the proof of Lemma~\ref{Lemma:Supermartingale},
\begin{align*}
\lefteqn{\EE{\eta_{t+1}\mid\F_t,x_{t+1}\in\set{\x_i,\x_j}}}\\
  &\qquad\leq1-\frac{\eps}{2}\left[1-\frac{1}{2}\sum_{\vec{h}\in\H}p_t(\vec{h})\big(h(\x_i,\ah_i)+h(\x_j,\ah_j)\big)\right],
\end{align*}
where we explicitly replaced the definition of $\delta(a)$. If $A^*(p_t,\x_i)=\ah_i$ and $A^*(p_t,\x_j)\neq\ah_j$ (the alternative is treated similarly), we have that
\begin{align*}
\sum_{\vec{h}\in\H}p_t(\vec{h})h(\x_i,\ah_i)&\leq1 && \text{and} & \sum_{\vec{h}\in\H}p_t(\vec{h})h(\x_i,\ah_i)&\leq0,
\end{align*}
yielding
\begin{displaymath}
\EE{\eta_{t+1}\mid\F_t,x_{t+1}\in\set{\x_i,\x_j}}\leq1-\frac{\eps}{4}.
\end{displaymath}

Considering Situation~\ref{Enum:Case2}, we again have 
\begin{align*}
\lefteqn{\EE{\eta_{t+1}\mid\F_t,x_{t+1}\in\set{\x_i,\x_j}}}\\
  &\qquad\leq1-\frac{\eps}{2}\left[1-\frac{1}{2}\sum_{\vec{h}\in\H}p_t(\vec{h})\big(h(\x_i,\ah_i)+h(\x_j,\ah_j)\big)\right],
\end{align*}
where, now,
\begin{align*}
\sum_{\vec{h}\in\H}p_t(\vec{h})h(\x_i,\ah_i)&\leq0 && \text{and} & \sum_{\vec{h}\in\H}p_t(\vec{h})h(\x_i,\ah_i)&\leq0.
\end{align*}
This immediately implies
\begin{displaymath}
\EE{\eta_{t+1}\mid\F_t,x_{t+1}\in\set{\x_i,\x_j}}\leq1-\frac{\eps}{2}.
\end{displaymath}

Finally, concerning Situation~\ref{Enum:Case3}, $\vec{h}_0=\vec{h}^*$. Since $\X_i$ and $\X_j$ are 1-neighbors, $h(\x_i,\ah_i)=h(\x_j,\ah_i)$ for all hypothesis other than $\vec{h}^*$. Equivalently, $h(\x_i,\ah_i)=-h(\x_j,\ah_j)$ for all hypothesis other than $\vec{h}^*$. This implies that
\begin{displaymath}
\EE{\eta_{t+1}\mid\F_t,x_{t+1}\in\set{\x_i,\x_j}}\leq1-\frac{\eps}{2}(1-p_t(\vec{h}^*)).
\end{displaymath}

Putting everything together,
\begin{align*}
\lefteqn{\EE{\eta_{t+1}\mid\F_t}\leq}&\\
&\max\set{1-\frac{\eps}{4},1-\frac{\eps}{2}(1-p_t(\vec{h}^*)),1-\frac{\eps}{2}(1-c^*)}
\end{align*}
and
\begin{align*}
\EE{\frac{C_\tau}{C_{\tau-1}}\mid\F_{\tau-1}}
  &\leq\frac{\EE{\eta_{t+1}\mid\F_t}-p_t(\vec{h}^*)}{1-p_t(\vec{h}^*)}\\
  &\leq1-\min\set{\frac{\eps}{4},\frac{\eps}{2}(1-c^*)}.
\end{align*}
The proof is complete. \qed


\subsection{Proof of Theorem~\ref{Theo:Consistency-general}}%
\label{Proof:Consistency-general}

Let $\eps=1-\hat{c}$ and 
\begin{displaymath}
C_t=\frac{\eps-p_t(\vec{h}^*)}{p_t(\vec{h}^*)}.
\end{displaymath}
Let $a$ denote an arbitrary action in $\A_{\hat{c}}(p_t, \x_i)$, for some $\x_i,i=1,\ldots,N$. Then
\begin{align*}
\lefteqn{\PP{h^*(\x_i,a)=-1}}&\\
  &=\PP{\sum_{\vec{h}\neq\vec{h}^*}p_t(\vec{h})h(\x_i,a)>\hat{c}+p_t(\vec{h})}\\
  &\leq\PP{\sum_{\vec{h}\neq\vec{h}^*}p_t(\vec{h})>\hat{c}+p_t(\vec{h})}\\
  &=\PP{1-p_t(\vec{h}^*)>\hat{c}+p_t(\vec{h})}\\
  &=\PP{C_t>1}\\
  &\leq\EE{C_t},
\end{align*}
where, again, the last inequality follows from the Markov inequality. We can now replicate the steps in the proof of Theorem~\ref{Theo:Consistency} in Appendix~\ref{Proof:Consistency} to establish the desired result, for which we need only to prove that
\begin{displaymath}
\EE{C_{t+1}\mid\F_t}\leq C_t.
\end{displaymath}
From Lemma~\ref{Lemma:Supermartingale}, the result follows.\qed


\section*{Acknowledgements}

This work was partially supported by the Portuguese Fundação para a Ciência e a Tecnologia (INESC-ID multiannual funding) under project PEst-OE/EEI/LA0021/2011. Manuel Lopes is with the Flowers Team, a joint INRIA ENSTA-Paristech lab.


\begin{thebibliography}{42}
\providecommand{\natexlab}[1]{#1}
\providecommand{\url}[1]{{#1}}
\providecommand{\urlprefix}{URL }
\expandafter\ifx\csname urlstyle\endcsname\relax
  \providecommand{\doi}[1]{DOI~\discretionary{}{}{}#1}\else
  \providecommand{\doi}{DOI~\discretionary{}{}{}\begingroup
  \urlstyle{rm}\Url}\fi
\providecommand{\eprint}[2][]{\url{#2}}

\bibitem[{Abbeel and Ng(2004)}]{abbeel04icml}
Abbeel P, Ng A (2004) {Apprenticeship learning via inverse reinforcement
  learning}. In: Proc.\ 21st Int.\ Conf.\ Machine Learning, pp 1--8

\bibitem[{Argall et~al(2009)Argall, Chernova, and Veloso}]{argall09ras}
Argall B, Chernova S, Veloso M (2009) {A survey of robot learning from
  demonstration}. Robotics and Autonomous Systems 57(5):469--483

\bibitem[{Babes et~al(2011)Babes, Marivate, Littman, and
  Subramanian}]{babes11icml}
Babes M, Marivate V, Littman M, Subramanian K (2011) {Apprenticeship learning
  about multiple intentions}. In: Proc.\ 28th Int.\ Conf.\ Machine Learning, pp
  897--904

\bibitem[{Barto and Rosenstein(2004)}]{barto04adp}
Barto A, Rosenstein M (2004) {Supervised actor-critic reinforcement learning}.
  In: Si J, Barto A, Powell W, Wunsch D (eds) Handbook of Learning and
  Approximate Dynamic Programming, Wiley-IEEE Press, chap~14, pp 359--380

\bibitem[{Boyan and Moore(1995)}]{boyan95nips}
Boyan J, Moore A (1995) {Generalization in reinforcement learning: Safely
  approximating the value function}. In: Adv.\ Neural Information Proc.\
  Systems, vol~7, pp 369--376

\bibitem[{Breazeal et~al(2004)Breazeal, Brooks, Gray, Hoffman, Lieberman, Lee,
  Thomaz, and Mulanda}]{breazeal04ijhr}
Breazeal C, Brooks A, Gray J, Hoffman G, Lieberman J, Lee H, Thomaz A, Mulanda
  D (2004) {Tutelage and collaboration for humanoid robots}. Int\ J\ Humanoid
  Robotics 1(2)

\bibitem[{Cakmak and Thomaz(2010)}]{cakmak10icdl}
Cakmak M, Thomaz A (2010) {Optimality of human teachers for robot learners}.
  In: Proc.\ 2010 IEEE Int.\ Conf.\ Development and Learning, pp 64--69

\bibitem[{Chernova and Veloso(2009)}]{chernova09jair}
Chernova S, Veloso M (2009) {Interactive policy learning through
  confidence-based autonomy}. J\ Artificial Intelligence Res 34:1--25

\bibitem[{Cohn et~al(2011)Cohn, Durfee, and Singh}]{cohn11aaai}
Cohn R, Durfee E, Singh S (2011) {Comparing action-query strategies in
  semi-autonomous agents}. In: Proc.\ 25th AAAI Conf.\ Artificial Intelligence,
  pp 1102--1107

\bibitem[{Dasgupta(2005)}]{dasgupta04nips}
Dasgupta S (2005) {Analysis of a greedy active learning strategy}. In: Adv.\
  Neural Information Proc.\ Systems, vol~17, pp 337--344

\bibitem[{Dasgupta(2011)}]{dasgupta11tcs}
Dasgupta S (2011) {Two faces of active learning}. J\ Theoretical Computer
  Science 412(19):1767--1781

\bibitem[{Golovin and Krause(2011)}]{golovin11jair}
Golovin D, Krause A (2011) {Adaptive submodularity: Theory and applications in
  active learning and stochastic optimization}. J\ Artificial Intelligence Res
  42:427--486

\bibitem[{Grollman and Jenkins(2007)}]{grollman07icra}
Grollman D, Jenkins O (2007) {Dogged learning for robots}. In: Proc.\ 2007 IEEE
  Int.\ Conf.\ Robotics and Automation, pp 2483--2488

\bibitem[{Jaksch et~al(2010)Jaksch, Ortner, and Auer}]{jaksch10jmlr}
Jaksch T, Ortner R, Auer P (2010) {Near-optimal regret bounds for reinforcement
  learning}. J\ Machine Learning Res 11:1563--1600

\bibitem[{Judah et~al(2011)Judah, Fern, and Dietterich}]{judah11icml}
Judah K, Fern A, Dietterich T (2011) {Active imitation learning via state
  queries}. In: Proc.\ ICML Workshop on Combining Learning Strategies to Reduce
  Label Cost

\bibitem[{Judah et~al(2012)Judah, Fern, and Dietterich}]{judah12uai}
Judah K, Fern A, Dietterich T (2012) {Active imitation learning via reduction
  to I.I.D. active learning}. In: Proc.\ 28th Conf.\ Uncertainty in Artificial
  Intelligence, pp 428--437

\bibitem[{Knox and Stone(2009)}]{knox09kcap}
Knox W, Stone P (2009) {Interactively shaping agents via human reinforcement}.
  In: Proc.\ 5th Int.\ Conf.\ Knowledge Capture, pp 9--16

\bibitem[{Knox and Stone(2010)}]{knox10aamas}
Knox W, Stone P (2010) {Combining manual feedback with subsequent MDP reward
  signals for reinforcement learning}. In: Proc.\ 9th Int.\ Conf.\ Autonomous
  Agents and Multiagent Systems, pp 5--12

\bibitem[{Knox and Stone(2011)}]{knox11ijcai}
Knox W, Stone P (2011) {Augmenting reinforcement learning with human feedback}.
  In: IJCAI Workshop on Agents Learning Interactively from Human Teachers

\bibitem[{Lopes et~al(2009{\natexlab{a}})Lopes, Melo, Kenward, and
  Santos-Victor}]{lopes09ab}
Lopes M, Melo F, Kenward B, Santos-Victor J (2009{\natexlab{a}}) {A
  computational model of social-learning mechanisms}. Adaptive Behavior 467(17)

\bibitem[{Lopes et~al(2009{\natexlab{b}})Lopes, Melo, and
  Montesano}]{lopes09ecml}
Lopes M, Melo F, Montesano L (2009{\natexlab{b}}) {Active learning for reward
  estimation in inverse reinforcement learning}. In: Proc.\ Eur.\ Conf.\
  Machine Learning and Princ.\ Practice of Knowledge Disc.\ Databases, pp
  31--46

\bibitem[{Lopes et~al(2010)Lopes, Melo, Montesano, and
  Santos-Victor}]{lopes10chap}
Lopes M, Melo F, Montesano L, Santos-Victor J (2010) {Abstraction levels for
  robotic imitation: Overview and computational approaches}. In: Sigaud O,
  Peters J (eds) From Motor to Interaction Learning in Robots, Springer, pp
  313--355

\bibitem[{Melo and Lopes(2010)}]{melo10ecml}
Melo F, Lopes M (2010) {Learning from demonstration using MDP induced metrics}.
  In: Proc.\ European Conf.\ Machine Learning and Practice of Knowledge
  Discovery in Databases, pp 385--401

\bibitem[{Melo et~al(2007)Melo, Lopes, Santos-Victor, and Ribeiro}]{melo07aisb}
Melo F, Lopes M, Santos-Victor J, Ribeiro M (2007) {A unified framework for
  imitation-like behaviours}. In: Proc.\ 4th Int.\ Symp.\ Imitation in Animals
  and Artifacts, pp 241--250

\bibitem[{Melo et~al(2010)Melo, Lopes, and Ferreira}]{melo10ecai}
Melo F, Lopes M, Ferreira R (2010) {Analysis of inverse reinforcement learning
  with perturbed demonstrations}. In: Proc.\ 19th European Conf.\ Artificial
  Intelligence, pp 349--354

\bibitem[{Neu and Szepesvari(2007)}]{neu07uai}
Neu G, Szepesvari C (2007) {Apprenticeship learning using inverse reinforcement
  learning and gradient methods}. In: Proc.\ 23rd Conf.\ Uncertainty in
  Artificial Intelligence, pp 295--302

\bibitem[{Ng and Russel(2000)}]{ng00icml}
Ng A, Russel S (2000) {Algorithms for inverse reinforcement learning}. In:
  Proc.\ 17th Int.\ Conf.\ Machine Learning, pp 663--670

\bibitem[{Ng et~al(1999)Ng, Harada, and Russell}]{ng99icml}
Ng A, Harada D, Russell S (1999) {Policy invariance under reward
  transformations: Theory and application to reward shaping}. In: Proc.\ 16th
  Int.\ Conf.\ Machine Learning, pp 278--294

\bibitem[{Nowak(2011)}]{nowak11tit}
Nowak R (2011) {The geometry of generalized binary search}. IEEE Trans\
  Information Theory 57(12):7893--7906

\bibitem[{Price and Boutilier(1999)}]{price99icml}
Price B, Boutilier C (1999) {Implicit imitation in multiagent reinforcement
  learning}. In: Proc.\ 16th Int.\ Conf.\ Machine Learning, pp 325--334

\bibitem[{Price and Boutilier(2003)}]{price03jair}
Price B, Boutilier C (2003) {Accelerating reinforcement learning through
  implicit imitation}. J\ Artificial Intelligence Res 19:569--629

\bibitem[{Ramachandran and Amir(2007)}]{ramachandran07ijcai}
Ramachandran D, Amir E (2007) {Bayesian inverse reinforcement learning}. In:
  Proc.\ 20th Int.\ Joint Conf.\ Artificial Intelligence, pp 2586--2591

\bibitem[{Regan and Boutilier(2011)}]{regan11ijcai}
Regan K, Boutilier C (2011) {Robust online optimization of reward-uncertain
  MDPs}. In: Proc.\ 22nd Int.\ Joint Conf.\ Artificial Intelligence, pp
  2165--2171

\bibitem[{Ross and Bagnell(2010)}]{ross10aistats}
Ross S, Bagnell J (2010) {Efficient reductions for imitation learning}. In:
  661-668 (ed) Proc.\ 13th Int.\ Conf.\ Artificial Intelligence and Statistics

\bibitem[{Ross et~al(2011)Ross, Gordon, and Bagnell}]{ross11aistats}
Ross S, Gordon G, Bagnell J (2011) {Reduction of imitation learning and
  structured prediction to no-regret online learning}. In: Proc.\ 14th Int.\
  Conf.\ Artificial Intelligence and Statistics, pp 627--635

\bibitem[{Schaal(1999)}]{schaal99tcs}
Schaal S (1999) {Is imitation learning the route to humanoid robots?} Trends in
  Cognitive Sciences 3(6):233--242

\bibitem[{Settles(2009)}]{settles09tr}
Settles B (2009) {Active learning literature survey}. Comp.\ Sciences Techn.\
  Rep. 1648, Univ.\ Wisconsin-Madison

\bibitem[{Sutton and Barto(1998)}]{sutton98}
Sutton R, Barto A (1998) Reinforcement Learning: An Introduction. MIT Press

\bibitem[{Syed and Schapire(2008)}]{syed08nips}
Syed U, Schapire R (2008) {A game-theoretic approach to apprenticeship
  learning}. In: Adv.\ Neural Information Proc.\ Systems, vol~20, pp 1449--1456

\bibitem[{Syed et~al(2008)Syed, Schapire, and Bowling}]{syed08icml}
Syed U, Schapire R, Bowling M (2008) {Apprenticeship learning using linear
  programming}. In: Proc.\ 25th Int.\ Conf.\ Machine Learning, pp 1032--1039

\bibitem[{Thomaz and Breazeal(2008)}]{thomaz08aij}
Thomaz A, Breazeal C (2008) {Teachable robots: Understanding human teaching
  behavior to build more effective robot learners}. Artificial Intelligence
  172:716--737

\bibitem[{Ziebart et~al(2008)Ziebart, Maas, Bagnell, and Dey}]{ziebart08aaai}
Ziebart B, Maas A, Bagnell J, Dey A (2008) {Maximum entropy inverse
  reinforcement learning}. In: Proc.\ 23rd AAAI Conf.\ Artificial
  Intelligence., pp 1433--1438

\end{thebibliography}

\end{document}